\documentclass[11pt]{article}
\usepackage[margin=1in]{geometry}

\usepackage[utf8]{inputenc}
\usepackage{amsmath, amsfonts, amssymb, amsthm, dsfont}
\usepackage{babel}
\usepackage{parskip}
\usepackage{enumerate}
\usepackage[normalem]{ulem} 
\usepackage{graphicx}
\usepackage{mathtools}
\usepackage{xspace}
\usepackage{comment}
\usepackage{bm}
\usepackage{soul}
\usepackage{booktabs}

\usepackage[usenames,dvipsnames]{xcolor}
\usepackage[thmtools-compat]{keytheorems}
\usepackage[backref=page,linktocpage=true,breaklinks,colorlinks,citecolor=PineGreen,linkcolor=Violet]{hyperref}

\usepackage{algorithm}
\usepackage[noend]{algpseudocode}
\usepackage{cleveref}

\DeclarePairedDelimiter\floor{\lfloor}{\rfloor}

\newcommand{\cF}{\mathcal{F}}
\newcommand{\cG}{\mathcal{G}}
\newcommand{\cM}{\mathcal{M}}
\newcommand{\R}{\mathbb{R}}
\newcommand{\Rp}{\mathbb{R}_+}
\newcommand{\N}{\mathbb{N}}
\newcommand{\E}{\mathbb{E}}

\newcommand{\ind}{\mathds{1}}

\newcommand{\Opt}{\textnormal{OPT}}

\newcommand{\Alg}{\textnormal{Alg}}

\newcommand{\ones}{\bm{1}}

\renewcommand{\d}{\mathrm{d}}

\newcommand{\tsum}{{\textstyle \sum}}
\newcommand{\cD}{\mathcal{D}}
\newcommand{\up}[1]{^{(#1)}}

\newtheorem{theorem}{Theorem}[section]
\newtheorem{lemma}[theorem]{Lemma}

\newtheorem{corollary}[theorem]{Corollary}

\newtheorem{proposition}[theorem]{Proposition}
\newtheorem{claim}[theorem]{Claim}
\newtheorem{cor}[theorem]{Corollary}
\newtheorem{definition}[theorem]{Definition}

\newtheorem{fact}[theorem]{Fact}
\newtheorem{remark}[theorem]{Remark}

\newcommand{\vertm}[1]{{\left\vert\kern-0.25ex\left\vert\kern-0.25ex\left\vert #1     \right\vert\kern-0.25ex\right\vert\kern-0.25ex\right\vert}}

\newcommand{\topk}{\ensuremath{\textrm{Top-k}}\xspace}

\newcommand{\ip}[2]{\langle #1, #2 \rangle}

\newcounter{note}[section]

\newcommand{\OPT}{\textup{OPT}}
\newcommand{\probmatch}{{posterior-matching}}
\newcommand{\Probmatch}{{Posterior-matching}}
\newcommand{\ProbMatch}{{Posterior-Matching}}

\newcommand{\cost}{\mathrm{cost}}

\newcommand{\cX}{\mathcal{X}}
\newcommand{\cY}{\mathcal{Y}}

\newcommand{\e}{\varepsilon}

\begin{document} 

\title{Online Algorithms via Minimax and Posterior Matching}

    	\author{Thomas Kesselheim\thanks{
	(thomas.kesselheim@uni-bonn.de)
 Institute of Computer Science,	    University of Bonn. 
    }
	\and Marco Molinaro\thanks{
         (mmolinaro@microsoft.com)
         Microsoft Research and PUC-Rio.   
Supported in part by the Coordenação de Aperfeiçoamento de Pessoal de Nível Superior - Brasil (CAPES) - Finance Code 001, and by Bolsa de Produtividade em Pesquisa $\#3$02121/2025-0 from CNPq.
    }
    \and Kalen Patton\thanks{
        (kpatton33@gatech.edu)
        School of Mathematics,
        Georgia Tech.
        Supported in part by NSF awards CCF-2327010 and CCF-2440113.
        }
	\and Sahil Singla\thanks{
        (ssingla@gatech.edu)
        School of Computer Science,
        Georgia Tech.
        Supported in part by NSF awards CCF-2327010 and CCF-2440113.
        }
}

\maketitle

\begin{abstract}

\noindent Competitive analysis is central to the study of online algorithms, but upper bounds are often highly problem-specific. We develop a more unifying methodology via the minimax viewpoint. Guided by Yao's principle, we reduce worst-case competitive analysis to Bayesian online design under an arbitrary correlated prior over arrival sequences. For such a prior, let $X^*$ be the hindsight-optimal fractional solution for the realized instance, and let $X^{(t)}=\mathbb E[X^*\mid \mathcal F_t]$ be its posterior process. Our guiding rule is \emph{posterior matching}: at each time $t$, choose the feasible online action that tracks the current posterior $X^{(t)}$ as closely as the online constraints permit.

\medskip

\noindent We show that this single principle yields optimal or near-optimal guarantees for several classical online fractional problems, including set cover, load balancing, matching and more general resource-allocation problems, recovering or improving state-of-the-art bounds in these settings with norm/concave objectives. Via known rounding reductions, it also yields randomized integral guarantees for weighted paging, MTS on star metrics, and ski-rental. At a technical level, our analysis reduces competitive guarantees to key probabilistic inequalities for the vector martingales generated by the posterior of the offline optimum. The resulting framework gives a reusable route from Bayesian online design under arbitrary correlated priors to information-theoretic worst-case competitive guarantees.

\end{abstract}

\bigskip

\vspace{-0.8cm}
\setcounter{tocdepth}{1} 
{\footnotesize   \tableofcontents}

\newpage

\newcommand{\Pp}{\mathbb{P}}
\newcommand{\one}{\mathbf{1}}

\section{Introduction}
    Online algorithms study sequential decision-making without knowledge of future inputs. Their standard benchmark is \emph{competitive analysis}, which compares the performance of an online algorithm to the offline optimum that sees the entire input sequence in hindsight. Competitive analysis for online algorithms was formalized by Sleator and Tarjan \cite{SleatorTarjan-85}, but was implicitly studied even earlier, for instance in Graham’s work on greedy scheduling \cite{Graham-66}. Over the past four decades, competitive analysis has become a central paradigm and has inspired a rich toolkit, including potential-function and amortized arguments, as well as online primal-dual methods.

Despite these powerful techniques, online algorithms are still often highly tailored: one typically designs a problem-specific algorithm together with a matching potential function, primal-dual pair, or other specialized analytic argument. A broader goal is to identify \emph{unifying} principles for online algorithm design. In this work, we make progress toward this goal for online \emph{fractional} problems through the minimax viewpoint.


Yao's minimax principle~\cite{Yao-FOCS77} is one of the most powerful tools in the theory of online algorithms. 
The observation is that the worst-case analysis of algorithms can be viewed as a two-player zero-sum game: the algorithm first chooses a randomized strategy, and then the adversary, after seeing that strategy, chooses an input sequence. But the classical von Neumann minimax theorem states that (for finite simultaneous games) the order of the players may be swapped. Inspired by this, Borodin, Linial, and Saks \cite{BLS-JACM92} gave a powerful lower-bound method: to rule out an $\alpha$-competitive randomized algorithm, it suffices to consider an adversary that first chooses a distribution over input sequences (possibly correlated across time) for which then every deterministic online algorithm, after having knowledge of this distribution, performs worse than $\alpha$. This is now the standard tool for proving impossibility results in online algorithms.

But minimax is an equality, not just a one-sided lower-bound principle. At the existential level, it suggests a different route to upper bounds: to prove the existence of an $\alpha$-competitive randomized algorithm against oblivious adversaries, it is enough to show that for every distribution $\cD$ over input sequences there exists an online algorithm $\Alg^{\cD}$ that knows this input distribution and whose expected performance under $\cD$ is within an $\alpha$-factor of the offline optimum.
In other words, this reduces worst-case competitive analysis to average-case (Bayesian) online algorithm design under an arbitrary but known correlated prior.


The difficulty, of course, is that these priors are completely general. They may encode arbitrary correlations across the entire arrival sequence, with no product structure or independence to exploit. 
This helps explain why minimax has had enormous impact as a hardness tool, but comparatively limited\footnote{See \Cref{sec:related} for notable examples where minimax has helped with some individual problems.} impact as a general recipe for online algorithm design.

\begin{quote}
    \emph{Can we find generic techniques to design and analyze  online algorithms for an \emph{arbitrary prior} over arrival sequences, and thereby obtain worst-case guarantees via minimax?}
\end{quote}

Our answer is affirmative in the \emph{fractional} setting for several canonical online problems using essentially the same natural high-level principle, which we denote by \emph{posterior matching}:

\begin{quote}
\noindent\textbf{Posterior matching principle.}
For a prior $\cD$, let $X^*$ denote the hindsight-optimal solution in the relevant fractional relaxation. At time $t$, after observing the history $\mathcal F_t$, choose the feasible online action that matches the posterior $\mathbb E[X^* \mid \mathcal F_t]$ as closely as the online constraints permit.
\end{quote}

 Applications of this single principle yield optimal or  near-optimal guarantees for online fractional set cover, online fractional load balancing, online fractional matching and more general resource-allocation problems. Moreover, for problems that admit known online rounding reductions, such as weighted paging and MTS on star metrics, our fractional guarantees immediately translate into randomized competitive algorithms in the integral model as well. At a technical level, our analyses reduce competitive guarantees to maximal inequalities for vector martingales generated by the posterior of the offline optimum. Across problems, the algorithmic rule changes very little; what varies is the feasibility correction and the martingale inequality. This brings us closer to a more unified understanding of competitive analysis for online algorithms.

To make this viewpoint concrete, we begin with a simple warm-up for unweighted set cover under the promise $\mathrm{OPT}=1$. This example already captures the two recurring ingredients in the paper: (i)~posterior matching determining the algorithm, and (ii)~a martingale maximal inequality controlling its cost.

\subsection{An Illustrative Example: Unweighted Set Cover} \label{sec:introExample}

In online fractional unweighted set cover \cite{AlonAA-STOC03}, there are $m$ sets $S_1,\ldots,S_m$, and elements arrive online. When element $e_t$ arrives, the algorithm observes which sets $S_i$ contain it. Using the available information, at every time $t$ the algorithm needs to maintain a vector $x^{(t)} \in [0,1]^m$ (selecting the amount of each set) that is non-decreasing over time and fractionally covers the arriving element, namely 
$\sum_{i : e_t \in S_i} x_i^{(t)} \ge 1$ for every $t$.
The objective is to minimize $\sum_{i=1}^m x_i^{(T)}$, i.e., the total amount of sets selected in the end.

By the minimax discussion above, it suffices to fix an arbitrary prior over arrival sequences and design an algorithm against this prior. To keep the example especially transparent, assume that the offline optimum has value $1$ on every arrival sequence. Then the optimal fractional solution $X^*$ for the realized sequence lies in the unit simplex, and hence the posterior process
$X^{(t)} := \mathbb E[X^* \mid \mathcal F_t]$
is a simplex-valued martingale.

Since we are not allowed to decrease $x^{(t)}$, the posterior matching rule now suggests the following algorithm: at time $t$, maintain the coordinatewise running maximum of the posterior,
\[
x_i^{(t)} := \max_{0\le s\le t} X_i^{(s)}
\qquad \forall i\in[m].
\]
This is feasible because for the element arriving at time $t$,
\[
\sum_{i : e_t \in S_i} x_i^{(t)}
\;\ge\;
\sum_{i : e_t \in S_i} X_i^{(t)}
\;=\;
\mathbb E\!\Big[\sum_{i : e_t \in S_i} X_i^\star \, \mid\, \mathcal F_t\Big]
\;\ge\; 1,
\]
since $X^*$ covers $e_t$ in every scenario.
Thus the total cost $\mathrm{ALG}$ of the algorithm
is $\sum_{i=1}^m \max_{0\le t\le T} X_i^{(t)}$. We will show that $\E[\mathrm{ALG}] \le O(\log m)$, which matches the bound of \cite{AlonAA-STOC03}.

Let $\mu_i := \mathbb E[X_i^\star] = X_i^{(0)}$. For each fixed $i$, the scalar process $(X_i^{(t)})_{t=0}^T$ is a $[0,1]$-valued martingale with starting value $\mu_i$. By Doob's maximal inequality (a maximal analog of Markov's inequality), 
\[
\Pr\Big[\max_{0\le t\le T} X_i^{(t)} \ge \lambda\Big]
~\le~
\frac{\mu_i}{\lambda}
\qquad \forall \lambda \in [\mu_i,1].
\]
Integrating this tail bound gives Doob's $L\log L$ martingale inequality, bounding the maximum by the initial value $\mu_i$:
\[
\mathbb E\Big[\max_{0\le t\le T} X_i^{(t)}\Big]
\,=\,
\mu_i + \int_{\mu_i}^1 \Pr\Big[\max_{0\le t\le T} X_i^{(t)} \ge \lambda\Big]\,d\lambda
~\le~
\mu_i + \int_{\mu_i}^1 \frac{\mu_i}{\lambda}\,d\lambda
~=~
\mu_i + \mu_i \log \frac{1}{\mu_i}.
\]
Therefore,
\[
\mathbb E[\mathrm{ALG}]
\,=\,
\sum_{i=1}^m \E\Big[\max_{0\le t\le T} X_i^{(t)}\Big]
\,\le~
\sum_{i=1}^m \mu_i
+
\sum_{i=1}^m \mu_i \log \frac{1}{\mu_i}
~\le~ 1+\log m,
\]
where we used $\sum_i \mu_i = \mathbb E[\sum_i X_i^\star]= \E[\OPT] = 1$ and the entropy bound
$\sum_i \mu_i \log(1/\mu_i)\le \log m$.

\medskip
\Cref{sec:setCover} removes the simplifying promise $\mathrm{OPT}=1$ and proves the full  set cover guarantee, even with  norm objectives.

\subsection{Our Contributions and Technical Overview}

Our main conceptual contribution is to use the minimax viewpoint as a route toward a more unified understanding of online algorithm design. The standard minimax reduction lets us focus on arbitrary correlated priors over arrival sequences; our contribution is to show that, in this Bayesian model, essentially the same posterior matching principle yields optimal or near-optimal guarantees across several canonical online fractional problems.

At a technical level, the main contribution is a collection of vector-valued martingale inequalities under general norms that are crucial for analyzing the posterior matching process. This leads, for example, to the notion of $C$-good norms, which can be viewed as a vector-valued analogue of the scalar Doob inequalities used in Section~1.1.

This overview has three parts. We first describe the posterior matching framework under arbitrary correlated priors and record the standard minimax reduction that transfers such guarantees to worst-case randomized guarantees. We then summarize the resulting applications. Finally, we explain the two recurring martingale patterns---coordinatewise maxima and diagonal sums---that serve as the common analytic engine.

\subsubsection{Framework: Correlated Priors and Posterior Matching}

We work in the correlated stochastic online model, where an input instance $I$ is drawn from a known finitely supported distribution $\mathcal{D}$ over arrival sequences and then revealed online to the algorithm. For a minimization problem, an algorithm $\Alg_{\cD}$ is $(\alpha,\beta)$-competitive in expectation with respect to $\mathcal{D}$ if 
\[
\E[\Alg_{\cD}(I)] \le \alpha\,\E[\OPT(I)] + \beta,
\]
with expectation over the random instance $I \sim \mathcal{D}$ and the internal randomness of the algorithm. For maximization problems, the inequality is reversed in the usual way.

Given such a prior $\mathcal{D}$, let $X^*$ denote the hindsight-optimal fractional solution for the realized instance, and let
\[
X^{(t)} := \E[X^* \mid \mathcal F_t]
\]
be its posterior process. This vector-valued martingale is the central state variable in our framework. The online algorithm is then obtained by following the current posterior, after applying only the mildest correction required to ensure feasibility. This is exactly the posterior matching principle introduced above.

This Bayesian viewpoint is justified by the following standard reduction, which lets us pass from guarantees under arbitrary correlated priors to worst-case randomized guarantees.

\begin{proposition}[Standard minimax reduction, informal]
If every finitely supported correlated prior over arrival sequences admits an $(\alpha,\beta)$-competitive online algorithm in expectation, then  there exists a randomized $(\alpha,\beta)$-competitive online algorithm against oblivious adversaries.
\end{proposition}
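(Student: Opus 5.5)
The plan is to cast the problem as a finite two-player zero-sum game and apply von Neumann's minimax theorem, or a convex--concave minimax theorem (Sion) if the strategy space is infinite.

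\textbf{Finiteness.} Fix the horizon $T$ and a finite universe of possible arrival sequences $\mathcal I$. This is the setting in which the oblivious-adversary guarantee is stated; for infinite input families we would take the union over finite sub-families at the end. Since fractional online algorithms make continuous choices, the set of deterministic online algorithms is not finite. We handle this by discretizing actions to a fine grid, losing an additive $\varepsilon$. Alternatively, we keep a compact convex set of behavioral strategies: an online algorithm is a map from each history prefix to an action in a compact convex feasible set.

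\textbf{Payoff.} For a randomized algorithm $A$, which is a distribution over deterministic ones, and an input $I\in\mathcal I$, we define the payoff
\[ f(A,I) := \E[\Alg_A(I)] - \alpha\,\OPT(I). \]
This is bilinear in the mixed strategies of the two players. The hypothesis says: for every distribution $\cD$ on $\mathcal I$, there is an algorithm $\Alg_{\cD}$ with $\E_{I\sim\cD} f(\Alg_{\cD},I)\le \beta$. Hence
\[ \max_{\cD}\min_A \E_{\cD} f \le \beta. \]

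\textbf{Swapping the order.} The minimax theorem gives
\[ \min_A\max_{I} f(A,I)=\max_{\cD}\min_A \E_{\cD} f\le\beta, \]
where the outer minimum over $A$ is attained by compactness. The minimizing $A$ is a randomized online algorithm with $\E[\Alg(I)]\le\alpha\OPT(I)+\beta$ for every fixed $I$, which is exactly competitiveness against an oblivious adversary. For fractional problems we may even derandomize, since averaging the fractional solutions of the support of $A$ preserves feasibility by convexity and does not increase cost when the cost is convex.

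\textbf{Main obstacle.} The delicate step is justifying the minimax theorem given the infinite, continuous strategy space of the algorithm player. I would first use the finite adversary side: the adversary has finitely many pure strategies, so the game is effectively $\min$ over a compact convex set versus $\max$ over a simplex. In that setting Sion's theorem applies once we check two things. First, the online strategy set, as behavioral maps from the finitely many prefixes to compact action sets, is compact and convex. Second, the cost is continuous in the actions, for instance lower semicontinuous and convex for norm objectives. If continuity fails, the fallback is the $\varepsilon$-grid discretization, which yields $(\alpha,\beta+\varepsilon)$ for all $\varepsilon>0$. For unbounded horizons, we would pass to a limit over finite adversary families using a compactness argument.
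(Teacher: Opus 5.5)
\textbf{Gap: extending from a finite input universe to all inputs.} Your argument is sound for a \emph{finite} universe of inputs, provided the algorithm player's strategies are read as mixtures over deterministic online algorithms. For finite $\mathcal I$ and fixed $T$, the deterministic algorithms form a compact subset of a finite product of copies of $\cY_{\mathrm{step}}$. Their mixtures form a compact convex set on which each payoff is affine and continuous. Swapping min and max is then essentially the von Neumann step the paper performs on the set $V_{\cX'}$ of excess-cost vectors.

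The gap is the passage to the full input set. The worst-case guarantee is not stated over a finite universe. Even at a fixed horizon, inputs are typically infinite (real processing vectors in load balancing, real service costs in MTS), and \Cref{thm:onlineMinimaxFull} allows arbitrary $\cX$. ``Taking the union over finite sub-families'' yields nothing: each finite family $\cX'$ gives its own algorithm, and you need a single algorithm that is good for every input at once. Moreover, your strategy space is built from the prefixes of one fixed finite family, so there is no common space in which to take a limit.

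The missing idea is the second half of the paper's proof:
\begin{enumerate}
\item Put all randomized online algorithms on all of $\cX$ into one space $\mathfrak A$: families $(\nu_x)_{x\in\cX}$ with $\nu_x\in\mathcal P(\cY(x))$ satisfying the causality condition \eqref{eq:causality}.
\item Prove $\mathfrak A$ is compact: Prokhorov in each factor, Tychonoff for the product, and each causality constraint is closed because $\nu\mapsto(\pi_t)_\#\nu_x$ is continuous and the weak topology is Hausdorff.
\item Run the finite minimax inside $\mathfrak A$, so the algorithm obtained for $\cX'$ already lives in the global space.
\item Conclude via the finite intersection property of the closed sets $K_x=\{\nu\in\mathfrak A: g_x(\nu)\le 0\}$.
\end{enumerate}
Your closing remark about a compactness argument targets unbounded horizons, which is not the issue (the paper fixes $T$ by padding). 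It also does not identify the compact space or why the constraints are closed.

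\textbf{Smaller points.} Your ``main obstacle'' paragraph switches from mixtures to per-prefix maps, and this is a different argument with different requirements.
\begin{itemize}
\item If the maps are deterministic, the strategy set and payoff are convex only when the feasibility correspondences have convex graphs and the cost is convex in the actions. Both hold in the paper's applications, and then your derandomization remark is valid. Neither is assumed in the general theorem.
\item If ``behavioral'' means per-prefix distributions over actions, the expected cost is multilinear rather than convex in them, so Sion's theorem does not apply.
\end{itemize}
The paper's causal-family representation is what makes the randomized strategy set convex with an affine payoff, with no convexity assumed of the problem. Finally, the $\varepsilon$-grid fallback is not a safe substitute. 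Rounding fractional actions to a grid can break constraints such as covering, monotonicity, or the cache bound. In any case it only yields $(\alpha,\beta+\varepsilon)$ rather than $(\alpha,\beta)$.
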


While several minimax results of this flavor are known, we did not find a complete proof specialized to online algorithms in exactly this form, so Appendix~\ref{sec:minimax} records a formal statement and proof using standard ingredients: 
finite-dimensional minimax together with a compactness argument. Thus, the substantive part of the paper lies in proving that posterior matching yields  competitive guarantees in the correlated stochastic online model, which we discuss in the subsequent part.

\subsubsection{Applications}

Table~\ref{tab:intro-applications} summarizes the main applications of our framework. In each case, we first prove the guarantee against an arbitrary correlated prior and then invoke the minimax reduction. While the posterior process
$X^{(t)} := \E[X^* \mid \mathcal F_t]$
is common throughout, the analyses fall into two recurring martingale inequalities, which we discuss in \Cref{sec:introMartingale}. 

\begin{table*}[t]
\centering
\small
\renewcommand{\arraystretch}{1.08}
\begin{tabular}{@{}p{0.26\linewidth}p{0.70\linewidth}@{}}
\toprule
\textbf{Problem} & \textbf{Informal guarantee} \\
\midrule
Online fractional set cover
& $O(C\log f)$-competitive for every $C$-good monotone norm. In particular: $O(\log f)$ for the standard weighted objective, for Orlicz norms, and for all Lov\'asz extension norms; and $O(\log m\cdot \log f)$ for general symmetric norms. \\[2mm]

Online fractional matching and resource allocation
& $(1-1/e)$-competitive for bipartite matching, and more generally, Devanur--Jain-type concave returns and concave DR-submodular objectives. \\[2mm]

Online fractional load balancing
& $O(C\log m)$-competitive whenever the dual norm is $C$-good. In particular: single logarithm guarantees for makespan, Top-$k$, and Orlicz norms; and a log-square guarantee for general symmetric norms. \\[2mm]

Weighted paging, MTS on star metrics, and ski-rental
& 
Logarithmic guarantees for weighted paging and star-MTS, and the optimal ski-rental ratio, via posterior matching and known rounding reductions where applicable.\\
\bottomrule
\end{tabular}
\caption{Summary of our main applications. All guarantees are first proved against arbitrary correlated priors; worst-case randomized guarantees then follow from the minimax reduction.}
\label{tab:intro-applications}
\end{table*}

\medskip\textbf{Online fractional set cover.}
In online fractional set cover, elements arrive one by one and the algorithm must maintain a nondecreasing fractional choice of sets that covers every revealed element. The classical logarithmic competitive bounds for the standard weighted objective go back to \cite{AlonAA-STOC03}. More recent work has studied richer objectives of the form $\|x^{(T)}\|$, where one measures a monotone norm of the final set-selection vector \cite{AzarBCCCG0KNNP16,NS-ICALP17,KMS-STOC24}.

In \Cref{sec:setCover}, we show that capped posterior matching is $O(\log f)$-competitive for the standard weighted objective, even under arbitrary correlations, where $f$ is the maximum number of sets containing any arriving element. More generally, \Cref{thm:c-good-set-cover} shows that the same algorithm is $O(C\log f)$-competitive for every ``$C$-good'' monotone norm. This recovers the state of the art for symmetric and Orlicz norms from \cite{KMS-STOC24}. It also sharpens the guarantee for ordered norms: although ordered norms are symmetric and were therefore already covered in \cite{KMS-STOC24}, that route gave a log-square bound, whereas our analysis yields a single-logarithmic guarantee. More broadly, our approach goes beyond prior symmetric-norm frameworks by giving the first guarantees for general non-symmetric Lov\'asz extension norms, a class that also includes weighted matroid rank norms \cite{PRS-APPROX23}. Analytically, the proof is driven by a max-martingale inequality for the coordinatewise running maximum of the posterior process; see \Cref{sec:introMartingale}.

\medskip\textbf{Online fractional matching and resource allocation.}
In online fractional matching and resource allocation, items arrive over time and must be assigned irrevocably, possibly fractionally, to offline agents. This includes classical online bipartite matching \cite{KVV-STOC90}, Adwords and generalized matching \cite{MSVV-Journal07}, concave-return model of Devanur and Jain \cite{DJ-STOC12}, and recent DR-submodular framework of \cite{Patton-SODA26}. It also captures important special cases such as free-disposal ad allocation, whole-page optimization, and online submodular assignment \cite{FKMMP-WINE09,DHKMY-TEAC16,HJPSZ-FOCS24}.

In \Cref{sec:recAlloc}, we show that posterior matching is \((1-1/e)\)-competitive for online fractional bipartite matching in an equivalent capped-value formulation. We then extend the same viewpoint to Devanur--Jain-type settings with concave returns (\Cref{thm:concaveMatching}) and to the more general concave DR-submodular setting (\Cref{thm:ocdra}), again obtaining the optimal $(1-1/e)$ factor. The key object here is not a running maximum but a diagonal martingale process, and the proof reduces to a martingale inequality that lower-bounds the corresponding diagonal sum; see \Cref{sec:introMartingale}.

\medskip\textbf{Online fractional load balancing.}
In online fractional load balancing, jobs arrive with machine-dependent processing vectors and the algorithm must fractionally assign each job across $m$ machines so as to minimize a norm of the final load vector. This generalizes classical makespan minimization \cite{AzarNR95,AspnesAFPW-JACM97} and connects to the recent line of norm-sensitive online algorithms developed in \cite{KMPS-FOCS25,KMS-STOC24,KMS-SODA23}. In \Cref{sec:loadBalancing}, \probmatch{} takes an especially clean form: the posterior assignment of the current job is already feasible, so no correction is needed.

We show that this gives a single-logarithmic competitive ratio for makespan, and more generally an $O(C\log m)$ guarantee whenever the \emph{dual norm} is $C$-good (\Cref{thm:good-dual-online}). In particular, we obtain single-logarithmic bounds for Top-$k$ and Orlicz norms, and a log-square guarantee for general symmetric norms, matching the best-known bounds from \cite{KMS-STOC24} (for the harder integral problem) via a very different route. Here the proof again goes through a max-martingale inequality, but only after a duality reduction: one chooses a dual witness for the final load vector, tracks its posterior martingale, and controls its coordinatewise maximum; see \Cref{sec:introMartingale}.

\medskip\textbf{Weighted paging, MTS on star metrics, and ski-rental.}
In \Cref{sec:pagingMTS}, we turn to state-based online problems with natural fractional formulations, including weighted paging, metrical task systems, and ski-rental. These are among the most classical benchmarks in competitive analysis; see for example \cite{SleatorTarjan-85,BLS-JACM92,KMMO-SODA90,BBN-JACM12,BCLLM-STOC18,BGMN-SODA19}. Our contribution here is not an explicit algorithmic improvement over the best specialized algorithms, but rather to show that the same posterior matching principle also recovers the classical guarantees in these settings.

More precisely, we show optimal or near-optimal competitive ratios in the fractional model, which is known to immediately imply results for the integral setting via simple reductions. For weighted paging and MTS on star metrics, known (near-)lossless rounding reductions then transfer these guarantees to randomized integral algorithms with the same asymptotic guarantees. For ski-rental, the posterior matching view already gives a direct randomized algorithm. Hence, our results in this section demonstrate some applications for which \probmatch{} techniques imply results for integral online algorithms, rather than just fractional. Our proofs here are closest in spirit to our set-cover analysis: one tracks posterior processes for pages or states and then applies max-martingale bounds.


\subsubsection{Martingale inequalities as the analytic engine} \label{sec:introMartingale}
The common analytic theme in our results is that posterior matching reduces competitive analysis to controlling one of two martingale functionals: a coordinatewise running maximum or a diagonal process. For set cover, load balancing, and the state-based problems of \Cref{sec:pagingMTS}, the key object is the \emph{coordinatewise running maximum} of a posterior martingale. For matching and resource allocation, the key object is instead its \emph{diagonal}, revealed one time step at a time.

\paragraph{Bounding the Expected Running Maximum.} In the warm-up \Cref{sec:introExample}, we used Doob's $L \log L$ inequality: for a $[0,1]$-valued martingale with mean $\mu$, the expected running maximum is at most $\mu + \mu \log(1/\mu)$. Our results for online set cover, online load balancing, and the state-based problems require vector analogues of exactly this phenomenon.
The relevant object is the coordinatewise maximum $M$ of the posterior matching vector martingale, $X^{(t)} = \E[X^* \mid \cF_t]$, whose coordinates are
\[
M_i := \max_{0\le t\le T} X_i^{(t)}.
\]
We say that a norm $\|\cdot\|$ is \emph{$C$-good} (\Cref{def:c-good}) if for any martingale sequence \((X\up{t})_{t\in[T]}\) over $[0,1]^m$  with $\|X\up{T}\| \leq B$ and any 
$\varepsilon \in (0,1]$, it satisfies the following max-martingale inequality:
\[
\mathbb E\big[\|(M-\varepsilon \mathbf 1)_+\|\big]
    \;\le\;
C \log(1/\varepsilon)\cdot  B.
\]
This is the vector analogue of the scalar inequality from \Cref{sec:introExample}. This bound almost immediately implies the set-cover guarantee. In \Cref{sec:setCover}, we show that many natural norm classes satisfy this property, thereby turning one martingale inequality into a collection of competitive guarantees.

\begin{theorem}[Informal \Cref{lemma:Cgood}]
All Lov\'asz extension norms are $1$-good, all Orlicz norms are $2$-good, and all monotone symmetric norms are $O(\log m)$-good.
\end{theorem}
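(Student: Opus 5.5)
We prove the three claims separately; each starts from the scalar Doob bound. For a $[0,1]$-valued martingale with running maximum $M_i$ and mean $\mu_i$, Doob gives $\Pr[M_i\ge \lambda]\le \mu_i/\lambda$. Integrating over $\lambda\in[\e,1]$ yields
\[
\E[(M_i-\e)_+] \le \int_\e^1 \frac{\mu_i}{\lambda}\,d\lambda = \mu_i\log(1/\e).
\]
More generally, for any event-valued test we use the stopping-time form $\Pr[\max_t Y^{(t)}\ge\lambda]\le \E[Y^{(T)}]/\lambda$ for nonnegative submartingales $Y$.

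\textbf{Lov\'asz extension norms.} Write $\|x\| = \int_0^\infty f(\{i: x_i\ge \lambda\})\,d\lambda$, where $f$ is a monotone submodular set function with $f(\emptyset)=0$. Then
\[
\|(M-\e\ones)_+\| = \int_\e^1 f(S_\lambda)\,d\lambda, \qquad S_\lambda=\{i:M_i\ge\lambda\}.
\]
So it suffices to show $\E[f(S_\lambda)]\le \E[\|X^{(T)}\|]/\lambda$ for each $\lambda$; integrating then gives $\log(1/\e)\,B$, i.e.\ $C=1$.

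To get this, let $\tau_i$ be the first time $X_i^{(t)}\ge\lambda$. Order the coordinates by hitting time. Submodularity gives $f(S_\lambda)\le \sum_{i\in S_\lambda} (\text{marginal of } i \text{ given earlier-hit coordinates})$. At time $\tau_i$, the marginal weight $w_i$ is $\cF_{\tau_i}$-measurable, so
\[
\lambda w_i \le w_i X_i^{(\tau_i)} = \E[w_i X_i^{(T)} \mid \cF_{\tau_i}].
\]
Summing, $\lambda\,\E f(S_\lambda)\le \E[\sum_i w_i X^{(T)}_i]$. The weights $w$ form a greedy/base vector of $f$, so $\sum_i w_i x_i \le \hat f(x)=\|x\|$ by the Lov\'asz/Edmonds greedy characterization: every vector in the base polytope lower-bounds the extension. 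This adaptive greedy-weight argument is the main obstacle. Care is needed that the weights assigned after $\tau_i$ only involve the set hit so far, and that coordinates never hit get weight $0$ while staying within the polymatroid.

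\textbf{Orlicz norms.} Here $\|x\|_G\le 1$ iff $\sum_i G(x_i)\le 1$, with $G$ convex and $G(0)=0$. Normalize $B=1$. Apply the scalar argument to the processes $G(X_i^{(t)})$, which are submartingales by Jensen. This gives
\[
\Pr[M_i\ge\lambda]\le \frac{\E G(X_i^{(T)})}{G(\lambda)}.
\]
Aim to show $\E\sum_i G\big((M_i-\e)_+/(2\log(1/\e))\big)\le 1$ or similar. Use convexity to write $G(y/L)\le G(y)/L$ for $L\ge1$, and bound $\E\,G((M_i-\e)_+)$ by integrating the tail of $M_i$ against $dG$. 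The factor $2$ should come from a layer-cake computation $\int_\e^1 G'(\lambda)\,\E G(X_i^{(T)})/G(\lambda)\,d\lambda$. Since $G'/G\le$ only after rescaling, we would instead compare against the level $\lambda/2$ and use $G(\lambda)\ge \lambda G'(\lambda/2)/2$. We then pass from a modular bound to the norm using $\E\|Z\|_G \le \max\{1,\E\sum_i G(Z_i)\}$, and this accounts for the additional constant factors.

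\textbf{Symmetric norms.} Use the known fact that any monotone symmetric norm is within $O(\log m)$ of a maximum of Top-$k$ norms, or more precisely is $\max_{w\in W}\langle w,x^{\downarrow}\rangle$ over ordered norms. Then apply the dyadic approximation $\|x\|\approx\max_k$ over $O(\log m)$ levels. Ordered (and Top-$k$) norms are Lov\'asz extensions with concave cardinality $f$, so they are $1$-good by the first part. Replacing the maximum over the $O(\log m)$ dyadic Top-$k$ norms by their sum loses $O(\log m)$. Each is then $1$-good relative to its own value, which is at most $O(1)\cdot\|X^{(T)}\|$, giving $O(\log m)$-good.
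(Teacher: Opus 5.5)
The gap is in the Orlicz part; the other two parts are essentially the paper's argument.

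\textbf{Lov\'asz extension norms.} Your argument is correct and matches the paper's. You order $S_\lambda$ by hitting times, use optional stopping at $\tau_i$, and bound $\langle w,X^{(T)}\rangle\le\|X^{(T)}\|$ because the greedy vector $w$ lies in the polymatroid of $f$. This is the same inequality the paper proves directly as $\|x_{V\cup\{i\}}\|-\|x_V\|\ge x_i f(i\mid V)$. A small correction: $f(S_\lambda)$ equals the sum of greedy marginals by telescoping, so that step is an identity. Submodularity is needed only for $w\in P(f)$.

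\textbf{Symmetric norms.} This is also the paper's route, but the accounting needs to be stated precisely. Set $h_k:=\frac{\|\ones_{[k]}\|}{k}\mathrm{Top}_k$ for dyadic $k$. What you need is the sandwich $\max_k h_k(x)\le\|x\|\le\sum_k h_k(x)\le(\log m+1)\max_k h_k(x)$. The middle sum is a single ordered (hence Lov\'asz, hence $1$-good) norm, which is exactly the PRS fact the paper uses. If you first lose $O(\log m)$ in ``$\|x\|\approx\max_k h_k$'' and then lose it again when replacing the max by the sum, you only get $O(\log^2 m)$.

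\textbf{Orlicz norms: the gap.} First, the conversion $\E\|Z\|_G\le\max\{1,\E\sum_i G(Z_i)\}$ is false. Only the pointwise bound $\|z\|_G\le\max\{1,\sum_i G(z_i)\}$ holds. Taking expectations gives $\E\max\{1,\sum_iG(Z_i)\}$, and Jensen bounds this from \emph{below} by $\max\{1,\E\sum_iG(Z_i)\}$, not from above. A concrete counterexample: take $G(x)=(x-1)^+$ on $\R_+^m$ with $m\ge 2$, and let $Z$ be $\ones$ or $3e_1$ with probability $1/2$ each. Then $\|\ones\|_G=\frac{m}{m+1}$ and $\|3e_1\|_G=\frac32$, while $\sum_iG(\ones_i)=0$ and $\sum_i G((3e_1)_i)=2$. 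So $\E\|Z\|_G>1=\max\{1,\E\sum_iG(Z_i)\}$.

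Second, the layer-cake you actually wrote gives $\E G(X_i^{(T)})\log(G(1)/G(\e))$. This is not $O(\log(1/\e))$ in general, and it is infinite when $G$ vanishes near $0$.

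Both problems can be repaired. Use $\|z\|_G\le 1+\sum_iG(z_i)$ for the conversion. For the layer-cake, rescale by $s=2\log(1/\e)$. Since $\log(1/\e)\ge 1-\e$, you have $u/s\le(u+\e)/2$ for $u\in[0,1-\e]$, hence $G(u+\e)\ge\frac{u+\e}{2}G'(u/s)$. Combined with Doob this gives your target $\E\sum_i G\big((M_i-\e)^+/s\big)\le 1$. However, the repaired chain only yields $\E\|(M-\e\ones)^+\|\le 4\log(1/\e)\,B$. That is $4$-goodness, not the claimed $2$-goodness.

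\textbf{The missing idea for the constant $2$.} The paper uses a level-set argument instead of a modular bound:
\begin{itemize}
\item Write $(M-\e\ones)^+=\int_\e^1\ones_{S_\delta}\,\d\delta$ and use the closed form $\|\ones_U\|=1/\Psi^{-1}(1/w(U))$.
\item Let $\mu=\E\, w(S_\delta)$ and split on the events $w(S_\delta)\le\mu$ and $w(S_\delta)>\mu$. Since $\Psi^{-1}(y)/y$ is nonincreasing, this gives $\E\|\ones_{S_\delta}\|\le 2/\Psi^{-1}(1/\mu)$.
\item Doob applied to the submartingale $\Psi(X_i^{(t)}/B)$ gives $\mu\le 1/\Psi(\delta/B)$, hence $\E\|\ones_{S_\delta}\|\le 2B/\delta$.
\item Integrating over $\delta\in[\e,1]$ gives $2$-goodness.
\end{itemize}
The factor $2$ comes from this split, not from a layer-cake over $G'$ followed by a modular-to-norm conversion.
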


Perhaps more surprisingly, the same max-martingale inequality reappears in load balancing after a duality reduction. There we choose a dual witness $V$ for the final load vector $L$, write $\|L\|=\langle L,V\rangle$, track the posterior martingale $V^{(t)}=\E[V\mid \cF_t]$, and bound the algorithm by the coordinatewise maximum of $V^{(t)}$. Thus $C$-goodness  controls both covering and load-balancing objectives.

\paragraph{Bounding the Diagonal Process.} For matching and resource allocation, the correct object is different.  Given a vector martingale $X^{(1)},\ldots,X^{(T)}\in \R_+^T$, instead of the coordinate-wise maximum, the key object is its \emph{diagonal}:
\[
D_t := X_t^{(t)} \qquad (t\in[T]).
\]
We seek lower bounds on functions of $(D_1,\ldots,D_T)$ in terms of the terminal vector $X^{(T)}$.

To understand the idea, consider the posterior matching principle for online fractional bipartite matching and fix one offline vertex. Let $X^*_t$ denote the fractional amount by which the $t$-th online vertex in the sequence gets matched to this fixed offline vertex in the hindsight optimum. Posterior matching algorithm will match $D_t = \E[X^*_t \mid \cF_t]$ of the $t$-th online vertex instead. 
Our key diagonal martingale lemma (\Cref{lem:diag-martingale-matching}) states that
    \[
    \E\bigg[\min\bigg\{1,~\sum_{t \in [T]}D_t\bigg\}\bigg] ~\geq~ (1 - 1/e) \cdot \E\bigg[\min\bigg\{1,~\sum_{t \in [T]} X\up{T}_t\bigg\}\bigg],
    \]
which implies that the algorithm is $1 - 1/e$-competitive, as the left-hand side is precisely the expected contribution of this fixed offline vertex to the online solution, while the right-hand side is its contribution to the offline optimum. On both sides, the contribution is capped at $1$ to reflect the matching constraint.
In \Cref{sec:recAlloc}, we extend this diagonal viewpoint to concave returns and more general DR-submodular objectives by controlling more general functions of the diagonal process.

Additionally, this diagonal vector appears in our load balancing analysis as well, where we seek to \textit{upper-bound} (rather than lower-bound) the expectation of norms taken over this diagonal process. As mentioned previously, we reveal a close connection between such bounds and $C$-goodness via duality.
Together, coordinatewise maxima and diagonals form our common analytic engine.


\subsection{Beyond Fractional Relaxations and Comparison to Online Primal-Dual}

The main contribution of this paper is a unifying, information-theoretic framework for \emph{fractional} online problems. This emphasis on fractional relaxations is essential: posterior matching naturally proposes the average action $\E[X^* \mid {\mathcal F_t}]$, which need not correspond to a valid integral decision. For some problems, however, known lossless or near-lossless rounding reductions bridge this gap. In particular, for weighted paging and MTS on star metrics, our fractional guarantees immediately imply randomized integral algorithms with the same competitive ratio.

Extending posterior matching more broadly beyond fractional relaxations remains an important direction. We discuss one such extension in  \Cref{sec:integLoadBalancing} for integral load balancing: instead of playing the posterior average, the algorithm samples an integral action from the posterior of the offline optimum. In general this sampling view can be much weaker than averaging, but for load balancing it still recovers the guarantees of \cite{KMS-STOC24} for $p$-supermodular norms. 
This presents some evidence that posterior matching can continue to be useful beyond fractional solutions.

A second limitation is computational: the minimax-plus-posterior framework is information-theoretic and only implies the \emph{existence} of a desired online algorithm, but does not by itself yield implementable algorithms.
It is instructive to compare our viewpoint with the \emph{online primal-dual} method~\cite{BuchbinderNaor-Book09}. This is arguably the most general method currently available for designing (fractional) online algorithms that has been successfully applied to obtain the best-known competitive ratios for a great variety of problems. While online primal-dual presents a general methodology, the exact algorithm and analysis have to be tailored to the specific problem at hand, often involving some tailored (although by this point familiar) update of primal and dual variables. In contrast, our results indicate that in the information-theoretic sense, there may be a single natural principle -- the posterior matching principle -- behind all different applications. Thus, our viewpoint trades off concrete algorithms for unification. 

Extracting concrete algorithms from the posterior matching principle is an exciting direction. We remark that there is precedent for this: in online learning, the work of~\cite{rakhlin2012relax,foster2015adaptive,pmlr-v65-foster17a,foster2018online} does exactly that. Roughly speaking, they use a minimax theorem to reduce the problem to martingale inequality, extract a key function behind proving this inequality (the Burkholder function), and obtain an explicit algorithm that uses this function to guide its decisions.


\subsection{Further Related Work} \label{sec:related}

A minimax viewpoint has been very fruitful in online learning. Starting with the work of \cite{AABR-COLT09}, leading to much follow-up work, e.g.,~\cite{rakhlin2012relax,RV-OR18,BDKP-COLT15,foster2015adaptive,RV-JMLR16,pmlr-v65-foster17a,foster2018online}, posterior-based reasoning and information-theoretic analyses have led to optimal or near-optimal regret guarantees in a variety of online-learning settings; see, e.g., \cite[Chapter 36]{LS-Book20}. Our setting is different in two important respects: the benchmark in competitive analysis is a full offline solution or trajectory rather than a single fixed action, and the guarantee is multiplicative rather than additive. One conceptual contribution of this paper is to show that, despite these differences, posterior matching together with the right martingale inequality can still yield a reusable design principle.

Within competitive analysis, Yao's principle has classically been used as a lower-bound tool. There are important special cases in which minimax ideas have also informed algorithm design. For example, \cite{ChenKLW-STOC99} formulate finite online planning problems as zero-sum games and use minimax to characterize optimal randomized online algorithms. More recently, \cite{BGSZ-ITCS20} use minimax to prove the existence of a good randomized algorithm for the robust/Byzantine secretary problem. Our goal here is different: rather than exploiting minimax in one particular problem, we seek a general design methodology that can be applied across online problems.

For the concrete problems we study, our results recover and unify several previously known guarantees. For online set cover, the classical logarithmic competitive bounds go back to \cite{AlonAA-STOC03}. For online matching and resource allocation, the classical line of work includes the $(1-1/e)$-competitive algorithm of Karp, Vazirani, and Vazirani \cite{KVV-STOC90}, the Adwords/generalized matching framework of Mehta, Saberi, Vazirani, and Vazirani \cite{MSVV-Journal07}, the concave-return model of Devanur and Jain \cite{DJ-STOC12}, and the recent DR-submodular generalization of \cite{Patton-SODA26}. For online optimization beyond $\ell_p$ norms, \cite{KMS-SODA23,KMS-STOC24} developed norm-sensitive online algorithms and approximation frameworks; in the fractional setting, our results recover the guarantees of \cite{KMS-STOC24} for symmetric and Orlicz norms, improve ordered norms from a log-square to a
single-log, and give the first guarantees for
general non-symmetric Lov\'asz extension norms. 
 For weighted paging, MTS on star metrics, and related state-based online problems, see \cite{BLS-JACM92,BBN-JACM12,BCLLM-STOC18,BGMN-SODA19}. Our contribution for these problems is not an explicit algorithmic improvement, but rather to show that the  posterior matching viewpoint also recovers the optimal competitive ratios once combined with the known fractional-to-integral reductions.

\section{Online Set Cover} \label{sec:setCover}

We begin by formally defining the Online Fractional Set Cover problem with monotone norm objective (recall that a norm is monotone if for every pair of vectors $0\le u\le v$ coordinatewise we have $\|u\|\le \|v\|$). We note that the standard online fractional set cover is obtained by using the weighted $\ell_1$ norm $\|x\| = \sum_i c_i x_i$.  

\begin{definition}[Online Fractional Set Cover with norms]
Fix $m\in\mathbb{N}$ sets $S_1,\dots,S_m$.
Elements arrive online in rounds $t=1,\dots,T$.
When element $e_t$ arrives, the algorithm observes its membership in the $m$ sets.
The algorithm must maintain a nondecreasing vector $x^{(t)}\in[0,1]^m$
(i.e., $x^{(t)}\ge x^{(t-1)}$ coordinatewise) that satisfies the covering constraints
\[
\sum_{i : e_t \in S_i} x^{(t)}_i \;\ge\; 1 \qquad\text{for all } t=1,\dots,T.
\]
Given a norm $\|\cdot\|$, the algorithm seeks to minimize the final cost $\|x\up T\|$.
\end{definition}

Using the minimax approach, we focus on the correlated stochastic online model, where an instance is sampled from a known distribution $\cD$, with correlated arrivals, and presented online to the algorithm. 

\medskip\textbf{Capped \ProbMatch{}.} Let $X^* \in [0,1]^m$ be the (random) offline fractional optimal solution of an instance distributed according to  $\cD$. Following the standard \probmatch{} paradigm, we want our algorithm to have the set selection $x\up t$ at time $t$ match the expected hindsight optimum given our current knowledge, i.e. $\E[X^* \mid \cF_t]$. However, since we can only increase the vector $x$, we will instead maintain $x$ to be the maximum of this value over the history up to time $t$, i.e. $x\up{t}_i \leftarrow  \max_{s \leq t} \E[X^*_i \mid \cF_s]$.

Finally, there is one more adjustment we must make to this ``standard'' \probmatch{} algorithm, which is to ensure that the selections we make are not too small. This ensures that the martingale inequalities we employ give meaningful bounds. We achieve this by ignoring any set selection $x\up{t}_i$ smaller than $\frac{1}{2f}$, where $f  \leq m$ is the maximum number of sets to which an element belongs; to ensure feasibility, we double all remaining selections. Formally, our final algorithm is given in \Cref{alg:capped-pm-set-cover}.

Before analyzing its cost, we quickly check that \Cref{alg:capped-pm-set-cover} indeed produces a feasible solution: for every element $e_t$, the  feasibility of $X^*$ guarantees the coverage $\sum_{i : e_t \in S_i} X^*_i \ge 1$ in every scenario, thus our algorithm's selection satisfies
\begin{align} \label{eq:setCoverFeasible}
\sum_{i : e_t \in S_i} x\up{t}_i ~\geq~ 2\sum_{i : e_t \in S_i}\left(\E\left[X^*_i \mid \cF_t \right] - \frac{1}{2f}\right) ~\geq~ 2\E\left[\tsum_{i : e_t \in S_i} X^*_i \mid \cF_t \right] - 1~\geq~ 2-1 ~=~ 1,
\end{align}
and so we also cover this element. 

\begin{algorithm}
\caption{Capped \ProbMatch{} for Set Cover}\label{alg:capped-pm-set-cover}
Initialize $x\up{0}_i = 0$ for all $i$.

Upon each arrival $t$, for each set $i$ do the following:
\begin{enumerate}
    \item Compute $X\up{t}_i := \E[X^*_i \mid \cF_t]$.
    \item Set $x\up{t}_i = \max\left\{x\up{t-1}_i,~ 2(X\up{t}_i - \frac{1}{2f})^+\right\}$.
\end{enumerate}
\end{algorithm}


\subsection{Warm-up: Weighted Set Cover}\label{sec:set-cover-warmup}

As a warm-up for the general norm guarantees, we start with weighted Set Cover, namely set $i$ has a non-negative cost $c_i$ and the total cost of the algorithm is given by $\sum_i c_i x\up{T}_i$ (which is a weighted $\ell_1$ norm).

\begin{theorem}\label{thm:set-cover-warmup}
    The capped \probmatch~algorithm (\Cref{alg:capped-pm-set-cover}) is $O(\log f)$-competitive for Online Fractional Weighted Set Cover in the correlated stochastic online model.
\end{theorem}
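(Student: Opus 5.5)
Feasibility of the capped posterior-matching solution was already verified in \eqref{eq:setCoverFeasible}, so only the expected cost needs bounding. Write $X\up{t}_i = \E[X^*_i\mid\cF_t]$. Since $x\up{t}_i$ is a running maximum, the final solution is
\[
x\up{T}_i = 2\big(M_i - \tfrac{1}{2f}\big)^+, \qquad M_i := \max_{0\le t\le T} X\up{t}_i .
\]
We may define $X\up{0}_i=\E[X^*_i]$; including $t=0$ only enlarges the maximum, so this is harmless. Hence
\[
\E[\Alg] = 2\sum_i c_i\, \E\big[(M_i - \varepsilon)^+\big], \qquad \varepsilon := \tfrac{1}{2f}.
\]
By linearity, it therefore suffices to prove, for each fixed $i$, the scalar bound
\[
\E\big[(M_i-\varepsilon)^+\big] \le O(\log(1/\varepsilon))\cdot \mu_i, \qquad \mu_i:=\E[X^*_i].
\]
Summing with weights $c_i$ then gives $\E[\Alg]\le O(\log f)\sum_i c_i\mu_i = O(\log f)\,\E[\OPT]$, using that $\OPT=\sum_i c_iX^*_i$ for the optimal fractional solution.

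For the scalar bound, the process $(X\up{t}_i)_t$ is a $[0,1]$-valued martingale with mean $\mu_i$, since $X^*_i\in[0,1]$. I will use the layer-cake formula together with Doob's maximal inequality $\Pr[M_i\ge\lambda]\le \mu_i/\lambda$:
\[
\E\big[(M_i-\varepsilon)^+\big] = \int_{\varepsilon}^{1}\Pr[M_i\ge \lambda]\,\d\lambda \le \int_\varepsilon^1 \frac{\mu_i}{\lambda}\,\d\lambda = \mu_i\log(1/\varepsilon) = \mu_i \log(2f).
\]
This bound holds with no case split on whether $\mu_i\ge\varepsilon$, because Doob's inequality is valid for every $\lambda>0$. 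This is the point of the capping: the warm-up in \Cref{sec:introExample} produced the entropy term $\mu_i\log(1/\mu_i)$, which can be large for tiny $\mu_i$. Starting the integral at $\varepsilon$ instead of at $\mu_i$ replaces $\log(1/\mu_i)$ with $\log(1/\varepsilon)=\log(2f)$ uniformly in $i$.

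Combining the two steps gives
\[
\E[\Alg]\le 2\log(2f)\sum_i c_i\mu_i = O(\log f)\,\E[\OPT],
\]
so the capped posterior-matching algorithm is $O(\log f)$-competitive, with no additive term.

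The one delicate step is making sure the benchmark matches the analysis: $X^*$ must be chosen as a measurable selection of an optimal fractional solution, with values in $[0,1]^m$, on each realized instance. This is automatic for a finitely supported $\cD$ by fixing a deterministic tie-breaking rule. Beyond that, the argument is just Doob's inequality applied coordinatewise.
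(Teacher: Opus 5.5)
Your proposal is correct and follows essentially the same route as the paper: the paper packages your layer-cake-plus-Doob computation as \Cref{lem:max-martingale} and applies it coordinatewise to get $\E[\Alg] \le 2\log(2f)\,\E[\Opt]$. Your additional remarks, such as enlarging the maximum to include $t=0$ and using $\E[X\up{T}_i]=\E[X^*_i]$, are harmless refinements of the same argument.
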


Given the feasibility of \Cref{alg:capped-pm-set-cover} from \eqref{eq:setCoverFeasible}, it suffices to bound its cost, which we do using the following consequence of Doob's martingale inequality.

\begin{lemma}\label{lem:max-martingale}
    Let $X_1, \dots, X_T \in [0,1]$ be a non-negative  martingale. Then for any $\e \in (0,1]$,
    \[
    \E\left(\max_{t \in [T]} X_t - \e\right)^+ \leq~ \E[X_T] \cdot \log(1/\e).
    \]
\end{lemma}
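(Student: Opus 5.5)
We plan to use the layer-cake formula together with Doob's maximal inequality, exactly as in the warm-up of \Cref{sec:introExample}, but integrating only over levels above $\e$. Write $M := \max_{t\in[T]} X_t \in [0,1]$ and $\mu := \E[X_T] = \E[X_1]$, the latter by the martingale property. Since $(M-\e)^+$ takes values in $[0,1-\e]$, we have
\[
\E\left(M-\e\right)^+ ~=~ \int_0^{1-\e} \Pr[M - \e \ge s]\,\d s ~=~ \int_{\e}^{1} \Pr[M \ge \lambda]\,\d\lambda .
\]

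Next we invoke Doob's maximal inequality for the nonnegative martingale $(X_t)$, which gives $\Pr[M \ge \lambda] \le \E[X_T]/\lambda = \mu/\lambda$ for every $\lambda>0$. Two points are worth noting. First, nonnegativity is what lets us use the form $\Pr[\max_t X_t \ge \lambda]\le \E[X_T^+]/\lambda = \E[X_T]/\lambda$. Second, in contrast with the warm-up, we do not need the bound $\Pr[\cdot]\le 1$ for small $\lambda$, because the integration already starts at $\e$.

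Substituting the Doob bound,
\[
\E\left(M-\e\right)^+ ~\le~ \int_{\e}^{1} \frac{\mu}{\lambda}\,\d\lambda ~=~ \mu\log(1/\e) ~=~ \E[X_T]\log(1/\e),
\]
which is the claim. The values lying in $[0,1]$ are used only to truncate the integral at $1$, since $\Pr[M\ge\lambda]=0$ for $\lambda>1$.

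There is no real obstacle here. The only step requiring care is making sure the correct version of Doob's inequality is applied, namely the one with $\E[X_T]$ on the right-hand side rather than $\E[X_1]$; these coincide by the martingale property. It is also worth checking that the shift by $\e$ removes the additive $\mu$ term that appeared in the $L\log L$ bound of \Cref{sec:introExample}.
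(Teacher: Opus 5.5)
Your proposal is correct and is essentially the paper's proof: write $\E(\max_t X_t-\e)^+ = \int_\e^1 \Pr[\max_t X_t \ge u]\,\d u$, then bound the integrand by Doob's maximal inequality $\E[X_T]/u$ and integrate to get $\E[X_T]\log(1/\e)$.
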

\begin{proof}
    By Doob's martingale inequality~\cite[Theorem 5.4.2]{durrett2010probability}, we have $\Pr\Big[\max_{t \in [T]} X_t \geq u\Big] \leq \frac{\E[X_T]}{u}$ for each $u > 0$. Thus,
    \[
    \E\left(\max_{t \in [T]} X_t - \e\right)^+ =~~ \int_\e^1 \Pr\bigg[\max_{t \in [T]} X_t \geq u\bigg] \text{d}u ~\leq~ \int_\e^1 \frac{\E[X_T]}{u} \,\text{d}u ~=~ \E[X_T] \cdot \log (1/\e).   \qedhere
    \]
\end{proof}

Then using the algorithm's set selection rule $x\up{T}_i = 2\,\big(\max_{t \in [T]} X\up{t}_i - \frac{1}{2f}\big)^+$, the previous lemma, and the observation that the final posterior-matching vector $X\up{T} = \E[X^* \mid \cF_T]$ (i.e., all information revealed) equals the optimum $X^*$ itself, we can upper bound the total cost of the algorithm as 
\[
\E[\Alg] = \sum_{i\in[m]}c_i\, \E[x_i\up{T}] ~\leq~ 2\sum_{i \in [m]}c_i\,\E[X\up{T}_i]\cdot \log(2f) = 2\sum_{i \in [m]}c_i\,\E[X^*_i]\cdot \log(2f) = 2\log(2f)  \E[\Opt],
\]
where the inequality uses \Cref{lem:max-martingale}.
This proves the guarantee of \Cref{thm:set-cover-warmup}.


\subsection{Extending to General Norms via $C$-Goodness}

Inspecting the previous proof we see that the only step that used the structure of the weighted-cost objective was around the application of the maximal inequality of \Cref{lem:max-martingale}; all the other steps only used the definition of the algorithm and $X\up{T} = X^*$. Thus, to generalize the previous results to other norm objectives, we then consider those that satisfy precisely that maximal inequality. This motivates the following definition.

\begin{definition}[\(C\)-good norms]\label{def:c-good}
A monotone norm \(\|\cdot\|\) on \(\mathbb{R}_+^m\) is \emph{\(C\)-good}  if the following holds:  for every \([0,1]^m\)-valued martingale  $(X^{(t)})_{t=0}^T$  where the final vector is bounded $\|X\up{T}\| \leq B$ almost surely for some deterministic $B\geq 0$, we have
\[
\mathbb{E}\|(M - \e\mathbf{1})^+\|
\;\le\;
C
\log(1/\e) \cdot B, \qquad \forall \e \in (0,1],
\]
where $M := \big(\max_{0 \le t \le T} X\up{t}_i\big)_{i \in [m]}$.
\end{definition}

\paragraph{Bounded-$\Opt$ Assumption.} To get a competitive bound using \Cref{def:c-good}, we must also assume that we are given a deterministic bound $\widehat{\Opt}$ such that $\Opt$ is always bounded in the range $\Opt \in [\frac{1}{2} \cdot \widehat{\OPT},~\widehat{\Opt}]$. We call this the \emph{bounded-$\Opt$ assumption.} In the standard adversarial model of online set cover, this assumption can be made without loss of generality (paying a constant in the competitive ratio) using the usual ``guess-and-double'' trick: Start with a guess $\widehat{\Opt}$, and if it is incorrect, double the guess and restart the algorithm.

By applying the minimax principle to the adversarial model \textit{after} reducing the problem using the guess-and-double trick, it suffices to examine the correlated stochastic model under the bounded-$\Opt$ assumption. By minimax, competitive bounds for this setting imply the existence of equally competitive algorithms for the adversarial model with the bounded-$\Opt$ assumption. In turn, this yields competitive bounds for the full adversarial setting (without bounded-$\Opt$ assumption) by the guess-and-double trick.

Under this assumption, we see that when the objective function for online set cover is a $C$-good norm, we can bound the competitive ratio of \Cref{alg:capped-pm-set-cover}. 

\begin{theorem}\label{thm:c-good-set-cover}
    Consider the Online Fractional Set Cover with a norm objective $\|\cdot\|$ that is $C$-good.  Then, \Cref{alg:capped-pm-set-cover} is  an $O(C\log f)$-competitive algorithm for this problem in the correlated stochastic online model with the bounded-$\Opt$ assumption.
\end{theorem}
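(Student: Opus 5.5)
Feasibility of \Cref{alg:capped-pm-set-cover} is already established in \eqref{eq:setCoverFeasible}, so only the expected cost $\E\|x\up T\|$ needs to be bounded. The plan is to write the final selection explicitly, pull out the factor of $2$ by homogeneity of the norm, and then apply \Cref{def:c-good} with $B=\widehat{\Opt}$ and $\e=\frac{1}{2f}$. The bounded-$\Opt$ assumption then converts $\widehat{\Opt}$ into $2\,\E[\Opt]$.

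The first step is to unroll the update rule. Since $y\mapsto 2(y-\frac{1}{2f})^+$ is nondecreasing, the running maximum commutes with it:
\[
x\up T_i = \max_{t\le T} 2\Big(X\up t_i-\tfrac{1}{2f}\Big)^+ = 2\Big(M_i-\tfrac{1}{2f}\Big)^+,\qquad M_i:=\max_{0\le t\le T}X\up t_i .
\]
Here $X\up 0=\E[X^*]$ can be included in the maximum harmlessly. Including it can only increase $M$, and monotonicity of the norm handles any such discrepancy. Hence $\|x\up T\| = 2\,\|(M-\frac{1}{2f}\mathbf 1)^+\|$.

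Next we check the hypotheses of \Cref{def:c-good}. The process $X\up t=\E[X^*\mid\cF_t]$ is a martingale with values in $[0,1]^m$, because $X^*\in[0,1]^m$. Its terminal value is $X\up T=X^*$, since all information has been revealed by time $T$. Therefore $\|X\up T\|=\Opt\le\widehat{\Opt}$ almost surely, and $\widehat{\Opt}$ is a deterministic bound. One subtlety is that $X^*$ must be chosen as a fixed (measurable) optimal solution for each realized instance. This is fine because $\cD$ is finitely supported.

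Finally we apply $C$-goodness with $\e=\frac1{2f}\in(0,1]$:
\[
\E\|x\up T\| \le 2C\log(2f)\,\widehat{\Opt}\le 4C\log(2f)\,\E[\Opt],
\]
where the last inequality uses $\Opt\ge\frac12\widehat{\Opt}$ pointwise. This gives $O(C\log f)$.

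The main obstacle is bookkeeping around the definition rather than any genuine analytic difficulty. One point is ensuring the martingale is indexed from $t=0$ as the definition requires, which is handled by the monotonicity remark above. The other is that $B$ must be deterministic, which is precisely why the bounded-$\Opt$ assumption is needed: the posterior norm cannot be controlled by a random $\Opt$ directly.
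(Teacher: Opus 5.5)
Your proposal is correct and matches the paper's proof: write the cost as $2\|(M-\tfrac{1}{2f}\mathbf 1)^+\|$, apply $C$-goodness with $B=\widehat{\Opt}$ and $\e=\tfrac{1}{2f}$, and then use $\widehat{\Opt}\le 2\Opt$. One small wording point: once $M$ includes the $t=0$ term, your displayed equality for $x\up T_i$ becomes an inequality $\le$, but you already noted that monotonicity of the norm covers this.
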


\begin{proof}

Let $M := (\max_{t\in[T]}X\up{t}_i)_{i \in m}$, so the cost of the algorithm is given by $\Alg = 2\big\|\big(M - \frac{1}{2f} \cdot \mathbf{1}\big)^+\big\|.$ Recall $X\up{T} = X^*$ and by assumption $\|X\up{T}\| \le \widehat{\OPT}$. 
Thus, applying the $C$-goodness inequality from \Cref{def:c-good} with $\varepsilon=1/(2f)$ gives 
\[
\E[\Alg] ~\leq~ 2C\log(2f) \cdot \widehat{\OPT} ~\leq~ 4C\log(2f) \cdot \Opt,
\]
thus proving the theorem. 
\end{proof}

\paragraph{Examples of $C$-good norms.} Crucially, we show that broad classes of norms are indeed $C$-good. We first recall the definition of these norms, and then state the goodness result for all of them. We start with (weighted) Orlicz norms.

\begin{definition}[Weighted Orlicz norm]
    Given a convex, lower semicontinuous function $\Psi : \Rp \to \Rp$ satisfying $\Psi(0) = 0$ and weights $w_1,\ldots,w_m > 0$, the \emph{weighted Orlicz norm} $\|\cdot\|_{\Psi,w}$ on $\Rp^m$ is defined as
    \[
    \|x\|_{\Psi,w} := \inf\bigg\{\lambda \geq 0 : \sum_{i \in [m]} w_i \cdot\Psi(x_i / \lambda) \leq 1\bigg\}.
    \]
\end{definition}

These norms are fundamental objects in functional analysis (e.g., see book \cite{harjulehto2019generalized}) and have also found use in statistics and computer science; see for example~\cite{orliczRegression1,orliczRegression2} for their application in regression. In particular for $p \ge 1$ and $c \in \Rp^m$, a weighted  $\ell_p$ norm $\|x\|_{p,c} := (\sum_i c_i x_i^p)^{1/p}$ is the weighted Orlicz norm given by the function $\Psi(z) = z^p$ and weights $w_i = c_i$. 

The next class is that of Lov\'asz extension norms. Recall that for a monotone submodular function $f : 2^{[m]} \to \Rp$, the \emph{Lov\'asz extension} $f^L : [0,1]^m \to \Rp$ of $f$ for $x\in [0,1]^m$ is defined as $f^L(x) := \int_0^1 f\big(\{i \in [m] : x_i \geq \lambda\}\big)\d \lambda$. This definition can be naturally extended to domain $\Rp^m$, where it induces a norm which we call the \emph{Lov\'asz extension norm} $L_f(x)$.

\begin{definition}[Lov\'asz extension norm]\label{def:lovasz-extension}
    Given a monotone submodular function $f : 2^{[m]} \to \Rp$ satisfying $f(\emptyset)=0$ and $f(S)>0$ for every nonempty $S$,
    the \emph{Lov\'asz extension norm}  is 
    \[
    L_f(x) := \int_0^\infty f\big(\{i \in [m] : x_i \geq \lambda\}\big)\d \lambda.
    \]
\end{definition}

It is easy to check that $L_f$ is a monotone norm (for instance, see \cite{PRS-APPROX23}).  These capture, for example,  Top-$k$ norms (when $f(S) =  \min\{|S|,k\}$), ordered norms (when $f(S) = \sum_{i=1}^{|S|} w_i$ for non-increasing weights $w_1 \ge w_2 \ge \ldots \ge w_m$), and weighted matroid rank norms (when $f$ is a weighted matroid rank function).

We now establish the $C$-goodness of these norms plus that of general monotone symmetric norms (i.e., monotone norms that are invariant to the permutation of the coordinates).

\begin{lemma} \label{lemma:Cgood}
    \setlength{\parskip}{4pt}
    We have the following $C$-goodness properties:
    \begin{itemize}
        \setlength{\itemsep}{0pt}
        \setlength{\parskip}{2pt}
        \setlength{\parsep}{0pt}
        \item Weighted Orlicz norms are 2-good.
        \item Lov\'asz extension norms are 1-good
        \item Monotone symmetric norms are $2(\log m + 1)$-good.
    \end{itemize}
\end{lemma}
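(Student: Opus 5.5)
The three bullets use different arguments, all built on the scalar argument of \Cref{lem:max-martingale}. The common first step is the layer-cake identity $(M-\e\ones)^+=\int_\e^1 \ones_{S_\lambda}\,\d\lambda$, where $S_\lambda:=\{i : M_i\ge\lambda\}$. For Lov\'asz norms this is an equality of norms; for general monotone norms the triangle inequality gives $\|(M-\e\ones)^+\|\le\int_\e^1\|\ones_{S_\lambda}\|\,\d\lambda$. It therefore suffices to show $\E\|\ones_{S_\lambda}\|\le C'B/\lambda$ for each $\lambda\in(0,1]$, since integrating then gives $C'\log(1/\e)B$.

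\textbf{Lov\'asz extension norms (1-good).} Here $L_f(\ones_{S_\lambda})=f(S_\lambda)$, and I will prove $\E f(S_\lambda)\le B/\lambda$ by an adaptive greedy base vector.
\begin{itemize}
\item Let $\tau_i$ be the first time $X_i\up t\ge\lambda$, set to $\infty$ if this never happens. Order coordinates by $\tau_i$, ties broken by index, and append the never-hitting coordinates in a fixed order.
\item Define $w_i$ as the marginal value $f(\{j\text{ before } i\}\cup\{i\})-f(\{j\text{ before } i\})$. By Edmonds' greedy characterization, $w$ lies in the base polytope of $f$, so $\langle w,x\rangle\le L_f(x)$ for all $x\ge0$. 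Moreover $f(S_\lambda)=\sum_{i\in S_\lambda}w_i$, because $S_\lambda$ is a prefix of the order.
\item For $i\in S_\lambda$ we have $w_i$ measurable with respect to $\cF_{\tau_i}$ and $X_i\up{\tau_i}\ge\lambda$. Hence $f(S_\lambda)\le\frac1\lambda\sum_i w_iX_i\up{\tau_i}\ind[\tau_i<\infty]$.
\item Optional stopping gives $\E[w_iX_i\up{\tau_i}\ind[\tau_i<\infty]]=\E[w_iX_i\up T\ind[\tau_i<\infty]]\le \E[w_iX_i\up T]$.
\item Summing, $\E f(S_\lambda)\le \frac1\lambda\E\langle w,X\up T\rangle\le\frac1\lambda\E L_f(X\up T)\le B/\lambda$.
\end{itemize}

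\textbf{Weighted Orlicz norms (2-good).} By homogeneity, normalize so that $B=1$. Then $\sum_i w_i\Psi(X_i\up T)\le1$ almost surely. Also $\|\ones_S\|_{\Psi,w}=\varphi(w(S))$ with $\varphi(a):=1/\Psi^{-1}(1/a)$.
\begin{itemize}
\item Since $\Psi(X_i\up t)$ is a nonnegative submartingale, Doob gives $\Pr[M_i\ge\lambda]\le\E\Psi(X_i\up T)/\Psi(\lambda)$. Hence $\E w(S_\lambda)\le1/\Psi(\lambda)$.
\item $\varphi$ is nondecreasing, and $\varphi(a)/a$ is nonincreasing because $\Psi^{-1}$ is concave. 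A function with these two properties is within a factor $2$ of its least concave majorant $\hat\varphi$.
\item By Jensen and monotonicity of $\hat\varphi$, $\E\varphi(w(S_\lambda))\le\hat\varphi(\E w(S_\lambda))\le 2\varphi(1/\Psi(\lambda))=2/\lambda$.
\end{itemize}
The main check in this part is the factor-$2$ concave-envelope claim, together with the degenerate cases where $\Psi$ vanishes on an interval or jumps to $\infty$. I expect both to be routine.

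\textbf{Monotone symmetric norms.} I will use the known structural fact that a monotone symmetric norm is, up to a factor $2$, the maximum over $j=0,\dots,\lfloor\log m\rfloor$ of $\frac{\|\ones_{[2^j]}\|}{2^j}\cdot\mathrm{Top}_{2^j}(x)$. Replacing the maximum by a sum loses a factor $\log m+1$.
\begin{itemize}
\item Each term is a scaled Top-$k$ norm, hence a Lov\'asz extension norm, hence $1$-good by the first bullet.
\item For each term, the terminal bound is $\frac{\|\ones_{[2^j]}\|}{2^j}\mathrm{Top}_{2^j}(X\up T)\le\|X\up T\|\le B$.
\item Summing over the $\log m+1$ values of $j$ and accounting for the factor $2$ gives $2(\log m+1)$-goodness.
\end{itemize}
The main obstacle is stating the structural approximation with exactly the constant $2$. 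I would derive it by comparing $x$ against dyadic truncations of its sorted version, using only symmetry, monotonicity and the triangle inequality.
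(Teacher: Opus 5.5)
Your Lov\'asz and Orlicz arguments are correct and follow the paper's proofs.

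\textbf{Lov\'asz norms.} The paper also orders $S_\lambda$ by first hitting times and conditions on $\cF$ at the hitting time. Your greedy base vector $w$ with $\langle w,x\rangle\le L_f(x)$ repackages the paper's marginal inequality $\|x_{V\cup\{i\}}\|-\|x_V\|\ge x_i\, f(i\mid V)$. Indexing by coordinate rather than by position makes the measurability cleaner.

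\textbf{Orlicz norms.} The paper uses the same formula $\|\ones_U\|=1/\Psi^{-1}(1/w(U))$. It uses the same factor-$2$ near-concavity, coming from $\Psi^{-1}(y)/y$ being nonincreasing, and the same Doob bound on $\Psi(X_i^{(t)}/B)$. Two small fixes:
\begin{itemize}
\item You should write $\varphi(1/\Psi(\lambda))\le 1/\lambda$ rather than $=$, since the generalized inverse only gives $\Psi^{-1}(\Psi(\lambda))\ge\lambda$.
\item The concave-majorant detour is unnecessary. The single affine majorant $a\mapsto\varphi(b)+a\,\varphi(b)/b$ with $b=1/\Psi(\lambda)\ge\E\, w(S_\lambda)$ already gives $\E\,\varphi(w(S_\lambda))\le 2\varphi(b)$.
\end{itemize}

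\textbf{Symmetric norms: the structural fact is false.} A monotone symmetric norm is not within a factor $2$, or any constant, of $\max_j \frac{\|\ones_{[2^j]}\|}{2^j}\mathrm{Top}_{2^j}(x)$. Take $\ell_2$ and $x_i=i^{-1/2}$. Each term equals $\mathrm{Top}_k(x)/\sqrt{k}<2$, while $\|x\|_2=(\sum_{i\le m}1/i)^{1/2}\to\infty$. So the step you flagged as the main obstacle cannot be carried out.

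What the dyadic argument actually proves is a sum bound. The block of sorted coordinates with ranks in $(2^{j-1},2^j]$ has norm at most $x^\downarrow_{2^{j-1}}\|\ones_{[2^{j-1}]}\|\le \frac{\|\ones_{[2^{j-1}]}\|}{2^{j-1}}\mathrm{Top}_{2^{j-1}}(x)$. Doing the same for rank $1$ and applying the triangle inequality gives
\[
\|x\| \;\le\; 2\sum_{j=0}^{\lfloor \log m\rfloor} \frac{\|\ones_{[2^j]}\|}{2^j}\,\mathrm{Top}_{2^j}(x).
\]
This is all your chain actually uses. Combine it with the correct termwise bound $\frac{\|\ones_{[k]}\|}{k}\mathrm{Top}_k(x)\le\|x\|$, which gives each term terminal bound $B$. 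Then $1$-goodness of each scaled Top-$k$ norm yields $2(\lfloor\log m\rfloor+1)$-goodness.

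Once corrected, this is exactly the paper's route. The right-hand side is the ordered norm $\|\cdot\|'$ with $\|x\|\le\|x\|'\le 2(\log m+1)\|x\|$ from the cited approximation fact. The paper uses that fact as a black box; you would be proving it directly.
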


We defer the proof of this lemma to \Cref{sec:proofCGood}. In general, the idea for proving $C$-goodness of each norm will be to show the following extension of Doob's inequality: If $S_\delta := \{i \in [m] : M_i \geq \delta\}$ is the set of coordinates for which $M_i$ exceeds some threshold $\delta \geq 0$, then we seek to show
\[
\E\|\mathbf{1}_{S_\delta}\| \leq C\cdot\frac{B}{\delta}.
\]
From here, we can use the triangle inequality to get $\E\|(M - \e \mathbf{1})^+\| \leq \int_{\e}^1 \E\|\mathbf{1}_{S_\delta}\| d\delta \leq CB \int_{\e}^1 \frac{1}{\delta}d\delta = C\log(1/\e)B$.

Together with Theorem~\ref{thm:c-good-set-cover}, these imply the corresponding competitive ratio for Online Fractional Set Cover under these norms. 

\begin{corollary}\label{cor:Cgood}
     Consider the Online Fractional Set Cover with a norm objective $\|\cdot\|$ under the correlated stochastic online model. If the norm is either weighted Orlicz or Lov\'asz extension, then there is an $O(\log f)$-competitive algorithm. If  the norm is monotone symmetric, then there is an $O(\log f \log m)$-competitive algorithm. 
\end{corollary}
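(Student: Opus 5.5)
This corollary follows by directly combining \Cref{lemma:Cgood} with \Cref{thm:c-good-set-cover}. The only care needed is the bounded-$\Opt$ assumption, which the correlated-stochastic statement here leaves implicit.

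\textbf{Case analysis by norm class.} I would split according to the three classes listed in \Cref{lemma:Cgood}.
\begin{itemize}
\item A weighted Orlicz norm is monotone and $2$-good. Plugging $C=2$ into \Cref{thm:c-good-set-cover} shows that \Cref{alg:capped-pm-set-cover} is $O(2\log f)=O(\log f)$-competitive. Concretely, the proof of that theorem gives $\E[\Alg]\le 4C\log(2f)\,\Opt = 8\log(2f)\,\Opt$.
\item A Lov\'asz extension norm $L_f$ is monotone (as recalled before \Cref{def:lovasz-extension}) and $1$-good. The same argument with $C=1$ gives the bound $4\log(2f)\,\Opt$.
\item A monotone symmetric norm is $2(\log m+1)$-good. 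This gives $\E[\Alg]\le 8(\log m+1)\log(2f)\,\Opt = O(\log f\log m)\,\Opt$.
\end{itemize}
Note that $\log(2f)=O(\log f)$ once $f\ge 2$. The degenerate case $f=1$ is trivial: every arriving element lies in a single set, which must be fully bought in both $\Alg$ and $\Opt$, so the algorithm is $O(1)$-competitive there.

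\textbf{Removing the bounded-$\Opt$ assumption.} \Cref{thm:c-good-set-cover} is stated under the bounded-$\Opt$ assumption, so I would remove it as in the paragraph preceding that theorem. Apply the guess-and-double reduction in the adversarial model. Then invoke minimax on the reduced problem, whose correlated priors satisfy $\Opt\in[\widehat{\Opt}/2,\widehat{\Opt}]$. Each phase of guess-and-double costs at most a constant factor times its guess, and the guesses form a geometric series. The constant-factor loss is therefore absorbed into the $O(\cdot)$.

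If one insists on a purely stochastic statement without the assumption, a direct route is the following. Condition on the dyadic scale $2^k$ of $\Opt$, or run the analysis with $B$ replaced by a per-scale bound, and sum the resulting geometric series. I regard this as the only step needing any care, and the paper's stated convention lets us rely on the guess-and-double reduction instead.
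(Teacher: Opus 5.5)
Your proposal is correct and is exactly the paper's argument: the corollary follows by plugging $C=2$, $C=1$ and $C=2(\log m+1)$ from \Cref{lemma:Cgood} into \Cref{thm:c-good-set-cover}, with the bounded-$\Opt$ assumption handled, as in the paper, by guess-and-double before invoking minimax. Your two asides are unnecessary, and the second would not go through as sketched: for $f=1$ the theorem already gives the constant bound $4C\log 2$, while conditioning on the ($\cF_T$-measurable) scale of $\Opt$ destroys the martingale property, and splitting $X^*$ by scale loses the factor $\Pr[\text{scale}]$ because $C$-goodness is stated with a deterministic $B$, so rely on the guess-and-double route as you do.
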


\section{Online Matching and Resource Allocation} \label{sec:recAlloc}

In this section, we study online fractional matching and resource allocation settings, in which $T$ items arrive online and must be fractionally assigned among $m$ offline agents. In such problems, we seek to maximize some monotone objective of the allocations, e.g., the size of the fractional matching or the sum of valuations of the agents. We will study the natural algorithm suggested by the posterior matching principle, which we analyze using a diagonal martingale process.

To introduce our methods, we will start with a warm-up in the setting of Online Fractional Bipartite Matching (\Cref{sec:matching-warm-up}). Next, we extend these ideas to the more general setting of Devanur and Jain \cite{DJ-STOC12}, which captures Matching with Concave Returns (\Cref{sec:devanur-jain-matching}). Finally, we will give our most general results in the setting of \cite{Patton-SODA26}, which captures arbitrary concave, DR-submodular objectives (\Cref{sec:DRsubmod}). In each setting, we will show how the \probmatch{} technique can be used to obtain the optimal $(1-\frac{1}{e})$-competitive ratio in the correlated stochastic model.


\subsection{Warm-up: Online Matching}\label{sec:matching-warm-up}

We start with the simplest case, namely that of Online Fractional Bipartite Matching. 

\begin{definition}
In Online Fractional Bipartite Matching, we have a bipartite graph $G = ([m], [T], E)$ initially hidden to the algorithm. At each time $t \in [T]$, the neighborhood $N(t)$ of $t$ is revealed to the algorithm, and the algorithm must set values $x_{it} \in [0,1]$ for each $i \in N(t)$ such that $t$ is fractionally matched at most once, namely $\sum_{i \in N(t)} x_{it} \le 1$. The goal of the algorithm is to maximize the size of the matching, namely  $\sum_i \min\{1, \sum_{t : i \in N(t)} x_{it}\}$, i.e., each $i$ collects at most value 1 from the nodes matched to it. 
\end{definition}

Note that this definition differs slightly from the traditional formulation of the problem since we have no offline constraints (i.e. constraints of the form $\sum_{t \in N(i)} x_{it} \leq 1$), but instead we simply cap the contribution from each offline vertex at $1$. This is mostly a notational convenience, as it will allow us to use the same problem framework later in more general settings.

\paragraph{\Probmatch{}.} As always, we focus on the correlated stochastic online model, where the graph $G$ is drawn from a known distribution $\cD$ and the neighborhood $N(t)$ of node $t$ is revealed at time $t$. We can now naturally define the \probmatch{} algorithm: let $X^* \in [0,1]^{mT}$ be the optimal offline allocation vector for an instance drawn from $\cD$; at time $t$, the algorithm then sets the fractional allocation $x_{it} = X\up{t}_{it} := \E[X^*_{it} \mid \cF_t]$ of item $t$ for each of the agents $i$.

We have the following result.

\begin{theorem} \label{thm:matching}
    The \probmatch{} algorithm is $(1 - \frac{1}{e})$-competitive for Online Fractional Bipartite Matching under the correlated stochastic online model.
\end{theorem}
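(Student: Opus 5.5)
The plan is to reduce the statement to a per-agent inequality and then prove that inequality by a potential/martingale argument. There are three steps.

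\textbf{Step 1: feasibility and per-agent decomposition.} First, the posterior-matching allocation is feasible. For every $t$ we have $\sum_{i\in N(t)} x_{it}=\E[\sum_{i\in N(t)}X^*_{it}\mid\cF_t]\le 1$, since $X^*$ is feasible in every scenario and $N(t)$ is $\cF_t$-measurable. Also $x_{it}=0$ whenever $i\notin N(t)$. Second, I may assume without loss of generality that $X^*$ never over-allocates an agent, i.e.\ $W_i:=\sum_t X^*_{it}\le 1$ surely, since any excess can be removed without changing $\Opt$. Then
\[
\E[\Opt]=\sum_i\E[W_i],\qquad \E[\Alg]=\sum_i\E\Big[\min\Big\{1,\sum_t D^i_t\Big\}\Big],
\]
where $D^i_t:=\E[X^*_{it}\mid\cF_t]$ is the diagonal of the vector martingale $X^{(s)}=\E[X^*_{i\cdot}\mid\cF_s]\in[0,1]^T$. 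So it suffices to prove, for a fixed agent $i$ (dropping the index $i$), the diagonal martingale inequality (\Cref{lem:diag-martingale-matching} as previewed):
\[
\E\Big[\min\Big\{1,\sum_t D_t\Big\}\Big]~\ge~(1-\tfrac1e)\,\E[W].
\]

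\textbf{Step 2: the per-step potential bound.} Let $S_t:=\sum_{s\le t}D_s$, which is $\cF_t$-measurable, and replace $\min\{1,S_T\}$ by the smaller quantity $1-e^{-S_T}$. The increment telescopes as
\[
(1-e^{-S_t})-(1-e^{-S_{t-1}})=e^{-S_{t-1}}(1-e^{-D_t})\ge e^{-S_{t-1}}D_t\,c(D_t),
\]
where $c(D_t)\ge 1-1/e$ because $D_t\le 1$. Since $S_{t-1}$ is $\cF_{t-1}\subseteq\cF_t$-measurable, the tower property gives $\E[e^{-S_{t-1}}D_t]=\E[e^{-S_{t-1}}X^*_t]$. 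This turns the algorithm's gain at step $t$ into the optimum's mass $X^*_t$, discounted by $e^{-S_{t-1}}$.

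\textbf{Step 3: removing the discount (expected main obstacle).} The discount $e^{-S_{t-1}}$ must be removed, so that we compare with $\E[W]$ rather than with $\E[\sum_t e^{-S_{t-1}}X^*_t]$. My plan is a primal-dual style potential
\[
\Phi_t=(1-e^{-S_t})+g(S_t)\cdot R_t,\qquad R_t:=\E\Big[\sum_{s>t}X^*_s\,\Big|\,\cF_t\Big],
\]
where $R_t$ is the posterior remaining optimum mass and $g$ is a decreasing function. The aim is to choose $g$ so that $\E[\Phi_t-\Phi_{t-1}]\ge 0$ and $\Phi_0\ge(1-1/e)\E[W]$, while $\Phi_T=1-e^{-S_T}$ because $R_T=0$.

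The first thing I would try is $g(s)=(1-1/e)\,e^{-s}$, capped appropriately when $s\ge1$. Its increment splits into three parts:
\begin{itemize}
\item the gain term from Step 2;
\item the term $g(S_{t-1})(R_t-R_{t-1}+D_t)$, which is a martingale-difference term in expectation since $R_{t-1}=\E[D_t+R_t\mid\cF_{t-1}]$;
\item the loss $(g(S_t)-g(S_{t-1}))R_t$, which must be paid for by the gain using $R_t\le 1-S_t$-type bounds derived from $W\le1$.
\end{itemize}
The subtle point is that posteriors of past coordinates keep moving, so $R_t+S_t$ is not itself a martingale. If the exponential choice of $g$ fails to close, the fallback is a worst-case reduction: show, via convexity in the conditional laws, that the extremal instance is one where posteriors are revealed deterministically. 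That reduces the inequality to the classical water-filling calculation $\int_0^1 e^{-s}\,\d s\cdot e/(e-1)$, which gives exactly $1-1/e$.
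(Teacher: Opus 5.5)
Your Step 1 reduction is fine (truncating $X^*$ to $W\le 1$ only lowers the algorithm's allocations, so it is harmless by monotonicity of $\min\{1,\cdot\}$). The real problem starts in Step 2. Replacing $\min\{1,S_T\}$ by $1-e^{-S_T}$ makes the target inequality false. Since your potential ends at $\Phi_T=1-e^{-S_T}$, no choice of $g$ in Step 3 can close the argument.

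Take the upper-triangular instance. For a uniformly random permutation $\sigma$ of $[n]$, online vertex $t$ has $N(t)=\{\sigma(t),\dots,\sigma(n)\}$. The optimum is the unique perfect matching $t\mapsto\sigma(t)$, and given $\cF_t$ the vertex $\sigma(t)$ is uniform on $N(t)$. Fix an offline $j$ and let $k=\sigma^{-1}(j)$, which is uniform on $[n]$. Then $D_t=1/(n-t+1)$ for $t\le k$ and $D_t=0$ afterwards, so $S_T=H_n-H_{n-k}\le\ln\frac{n}{n-k}$. Hence $\E[e^{-S_T}]\ge\frac{n-1}{2n}$, and so $\E[1-e^{-S_T}]\le\frac{n+1}{2n}<(1-\frac1e)\E[W]$ for every $n\ge 4$, since $W=1$. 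As $n\to\infty$, $S_T$ tends to $\mathrm{Exp}(1)$. For that limit $\E\min\{1,S_T\}=1-\frac1e$, so the theorem is tight, but $\E[1-e^{-S_T}]=\frac12$.

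The same example rules out the ``$R_t\le 1-S_t$'' bounds you planned to use. At $t=n-1$ in the scenario $k=n$, the frozen diagonal sum is $S_t=H_n-1$, of order $\ln n$, while $R_t=\frac12>0$. What $W\le 1$ actually gives is only $R_t\le 1-\sum_{s\le t}X^{(t)}_s\le 1$. The example also shows your fallback points the wrong way. Deterministic revelation ($D_t=X^*_t$) is the best case, with ratio $1$. The extremal case is exactly this hazard-type revelation. Incidentally, $S_t+R_t$ \emph{is} a martingale, by your own identity $R_{t-1}=\E[D_t+R_t\mid\cF_{t-1}]$.

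The missing idea is to lose nothing on the linear part. Because $S_t+R_t$ is a martingale, $\E[S_T]=\E[W]$ exactly. So $\E\min\{1,S_T\}=\E[W]-\E(S_T-1)^+$, and it suffices to prove the overflow bound $\E(S_T-1)^+\le\E[W]/e$. The paper gets this from the supermartingale $N_t=(D_t+R_t)\,e^{\min\{S_{t-1},1\}-1}+(S_{t-1}-1)^+$, written in your notation:
\begin{itemize}
\item While $S_t\le 1$, the exponential weight absorbs $D_t$ via $(D_t+R_t)-D_t\le (D_t+R_t)e^{-D_t}$. This needs only $D_t+R_t\le 1$.
\item Once the cap is crossed, the weight freezes at $1$ and the linear term $(S_{t-1}-1)^+$ compensates exactly.
\end{itemize}
Thus $\E(S_T-1)^+=\E[N_{T+1}]\le\E[N_1]=\E[W]/e$. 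Your weight $g(s)=(1-\frac1e)e^{-s}$, paired with the strictly concave $1-e^{-s}$, keeps discounting after saturation. Mass pushed past $1$ is precisely what the example above exploits.
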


\paragraph{Diagonal martingale process.} To prove these competitive bounds for the \probmatch{} algorithm, the key challenge lies in analyzing the diagonal martingale process $X\up{1}_{i1}, X\up{2}_{i2}, \dots, X\up{T}_{iT}$ for each offline vertex $i$, as mentioned in \Cref{sec:introMartingale}. Observe that for any given offline vertex $i \in [n]$, the algorithm's collected (random) value equals $\min\{1, \sum_t X\up{t}_{it}\}$. On the other hand, the offline optimum collects $\min\{1, \sum_t X\up{T}_{it}\}$.
Ideally, we would like to lower-bound $\E\min\{1, \sum_t X\up{t}_{it}\}$   in terms of $\sum_t X^*_{it}= \E\min\{1, \sum_t X\up{T}_{it}\}$ (up to a $(1-1/e)$ factor). Although  $\E\sum_t X\up{t}_{it} = \E\sum_t X\up{T}_{it}$ by the martingale property,  it's not obvious how to lower bound the algorithm's expected value as the $\min$ function is concave. 
To get around this difficulty, we prove the following technical lemma.

\begin{lemma}\label{lem:diag-martingale-matching}
    Let $X\up{1},\ldots,X\up{T} \in \Rp^T$ be a vector martingale. Define the diagonal $D_t := X\up{t}_t$ for $t \in [T]$. 
    Then
    \begin{align}
    \E\bigg[\min\bigg\{1,~\sum_{t \in [T]} D_t\bigg\}\bigg] ~\geq~ (1 - 1/e) \cdot \E\bigg[\min\bigg\{1,~\sum_{t \in [T]} X\up{T}_t\bigg\}\bigg] . \label{eq:diagonal} 
    \end{align}
\end{lemma}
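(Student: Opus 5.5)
The plan is a pathwise primal--dual (potential) argument in the style of water-filling for fractional matching. The martingale structure enters in exactly one place: the conditional-expectation identity $\E[X\up{T}_t \mid \cF_t] = D_t$. Write $A_t := \sum_{s\le t} D_s$ for the algorithm's accumulated mass, with $A_T = \sum_t D_t$, and write $S := \sum_t X\up{T}_t$. Let $y := \min\{1, A_T\}$. The weight function will be
\[
\varphi(a) := \frac{e^{\min\{a,1\}}-1}{e-1}.
\]
It is nondecreasing, satisfies $\varphi(0)=0$ and $\varphi(a)=1$ for $a\ge 1$, and takes values in $[0,1]$.

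\textbf{Step 1 (upper bound on the offline side).} For any $\beta\in[0,1]$ and $S\ge 0$ we have $\min\{1,S\} \le \beta + (1-\beta)S$, since the right-hand side is at least $1$ and also at least $S$. Taking $\beta := \varphi(A_T)$ gives
\[
\min\{1,S\} \le \varphi(A_T) + \sum_t X\up{T}_t\,(1-\varphi(A_T)) \le \varphi(A_T) + \sum_t X\up{T}_t\,(1-\varphi(A_t)).
\]
The second inequality uses that $\varphi$ is monotone, $A_t\le A_T$, and $X\up{T}_t\ge 0$. The factor $1-\varphi(A_t)$ is $\cF_t$-measurable, and $\E[X\up{T}_t\mid\cF_t]=X\up{t}_t=D_t$ by the martingale property. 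Hence
\[
\E\min\{1,S\} \le \E\Big[\varphi(A_T) + \sum_t D_t\,(1-\varphi(A_t))\Big].
\]
This is the step I expect to be the real content: the dual must be charged at the \emph{time-$t$} level $A_t$ rather than the final level, so that conditioning replaces the unknown future $X\up{T}_t$ by the diagonal entry $D_t$.

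\textbf{Step 2 (pathwise inequality).} It now suffices to show, for every realization,
\[
\varphi(A_T) + \sum_t D_t(1-\varphi(A_t)) \le \frac{e}{e-1}\, y.
\]
Since $1-\varphi$ is nonincreasing and vanishes on $[1,\infty)$, each term satisfies $D_t(1-\varphi(A_t)) \le \int_{A_{t-1}}^{A_t}(1-\varphi(a))\,\d a$; this is a right-endpoint Riemann bound for a decreasing integrand. Summing over $t$ gives $\sum_t D_t(1-\varphi(A_t)) \le \int_0^{y}(1-\varphi(a))\,\d a$. A direct computation then yields
\[
\int_0^{y}(1-\varphi(a))\,\d a = y - \frac{e^y-1-y}{e-1},
\]
and adding $\varphi(A_T)=\frac{e^y-1}{e-1}$ gives exactly $y + \frac{y}{e-1} = \frac{e}{e-1}\,y$.

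\textbf{Step 3 (conclusion).} Combining Steps 1 and 2 gives $\E\min\{1,S\} \le \frac{e}{e-1}\,\E\min\{1,A_T\}$, which is \eqref{eq:diagonal}. Finiteness and integrability are not a concern, since all quantities are nonnegative and the underlying distribution is finitely supported. The only delicate points are the two monotonicity directions: $\varphi(A_T)\ge\varphi(A_t)$ in Step 1, and the right-endpoint bound in Step 2. Both go the right way because $\varphi$ is increasing.
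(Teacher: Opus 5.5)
Your proof is correct, and it takes a different route from the paper's proof of this lemma.

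\textbf{The paper's route.} It first truncates $X^{(T)}$ so that $\sum_t X^{(T)}_t\le 1$, replacing the martingale by $\E[\bar X^{(T)}\mid\cF_t]$. It then introduces the conditional suffix sums $R_t=\sum_{s\ge t}X^{(t)}_s$. A three-case analysis shows that $N_t=R_t\,e^{\min\{A_{t-1},1\}-1}+(A_{t-1}-1)^+$ is a supermartingale. This yields the overflow bound $\E(A_T-1)^+\le e^{-1}\E[R_1]$, and the claim follows from $\E[A_T]=\E[R_1]$.

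\textbf{Your route.} You instead do dual fitting. The pointwise bound $\min\{1,S\}\le\beta+(1-\beta)S$ handles the cap directly, so no truncation is needed. The martingale property is used exactly once, through the $\cF_t$-measurable weight $1-\varphi(A_t)$. Everything else is a deterministic Riemann-sum estimate.

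\textbf{Relation to Section~\ref{sec:DRsubmod}.} Your argument is precisely the specialization to $f=\min\{1,\cdot\}$ of the paper's own DR-submodular proof (Lemma~\ref{lem:diag-martingale-cdr} together with Fact~\ref{fact:key-cdr}). For this $f$, the auxiliary function in \eqref{eq:aux-fn-U} is $U(x)=\tilde U(\sum_t x_t)$ with $\tilde U'=1-\varphi$ and $\tilde U(y)=\int_0^y(1-\varphi(a))\,\d a$. The correspondence is then step by step:
\begin{itemize}
\item Your bound $\min\{1,S\}\le\varphi(A_T)+(1-\varphi(A_T))S$ is the Fenchel bound with price $\nabla U(D)$, since $\hat f(\alpha)=1-\alpha$ for $\alpha\in[0,1]$.
\item Replacing $1-\varphi(A_T)$ by $1-\varphi(A_t)$ is the step $\nabla U(D^{(\le t)})\ge\nabla U(D)$.
\item Your Riemann-sum estimate is $U(D)\ge\sum_t\langle\nabla U(D^{(\le t)}),D^{(t)}\rangle$.
\item Your identity $\varphi(y)+\int_0^y(1-\varphi)=\frac{e}{e-1}\,y$ is $(1-1/e)$-balancedness, holding with equality.
\end{itemize}

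\textbf{Trade-off.} Your route gives a shorter, case-free proof that unifies the warm-up with the general resource-allocation argument. The paper's route gives a self-contained supermartingale potential and the standalone overflow inequality, in keeping with its theme of martingale inequalities as the analytic engine.

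\textbf{One slip.} Your justification of $\min\{1,S\}\le\beta+(1-\beta)S$ is wrong as worded. The right-hand side need not be at least $1$ and at least $S$ simultaneously; take $\beta=0$, $S=1/2$. The correct reason is that it is a convex combination of $1$ and $S$, hence at least their minimum.
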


Before proving this lemma, we will see how to use it to complete the analysis of \probmatch{}.

\begin{proof}[Proof of~\Cref{thm:matching}]
    Recall that $(X^*_{it})_{i,t}$ is the optimal allocation vector, and \probmatch{} makes the allocation $(X\up{t}_{it})_i = (\E[X^*_{it} \mid \cF_t])_i$ at time $t$. Fix an offline vertex $i$. Applying \Cref{lem:diag-martingale-matching} to the vector martingale $(X\up{1}_{is})_{s \in [T]}, (X\up{2}_{is})_{s \in [T]}, \ldots, (X\up{T}_{is})_{s \in [T]}$, we get that the value the algorithm gets from this offline node is
    \begin{align*}
        \E\Big[\min\Big\{1,~\sum_{t \in [T]} X\up{t}_{it}\Big\}\Big] \,\geq\, 
        (1 - 1/e)\, \E\Big[\min\Big\{1,~\sum_{t \in [T]} X\up{T}_{it}\Big\}\Big].
    \end{align*}
    Adding this inequality over all offline nodes $i$ and observing that $X\up{T}_{it} = X^*_{it}$, we obtain $\E[\Alg] \ge (1-1/e) \cdot \E[\OPT]$. This gives the desired competitiveness of the \probmatch{} algorithm. 
\end{proof}

Now we prove the diagonal martingale inequality.

\begin{proof}[Proof of \Cref{lem:diag-martingale-matching}]
    A very useful simplifying step is to assume that  $\sum_t X\up{T}_t \leq 1$ always (i.e., the minimum in the right-hand side is equal to $\sum_t X\up{T}_t$). This is without loss of generality because otherwise we can define the truncated version $\bar{X}\up{T}$ of the vector $X\up{T}$ given by 
    \[
    \bar{X}\up{T}_t := \min\{ X\up{T}_t, 1 - \sum_{t' < t} \bar{X}\up{T}_{t'}\},
    \] and work with the truncated martingale $\bar{X}\up{t} := \E[\bar{X}\up{T} \mid \cF_t]$; this truncated martingale gives the same right-hand side as in \eqref{eq:diagonal} (indeed $\sum_t \bar{X}\up{T}_t \le 1$ always, and when $\sum_t X\up{T}_t \le 1$ we have $\sum_t\bar{X}\up{T}_t  = \sum_t X\up{T}_t$), and the truncated diagonal gives lower value in the left-hand side of \eqref{eq:diagonal} ($\bar{X}\up{t}_t = \E[\bar{X}\up{T}_t \mid \cF_t] \le \E[X\up{T}_t \mid \cF_t] = X\up{t}_t$).

    With this simplifying assumption, now define the prefix sum of the diagonal $A_t := D_1 + \ldots + D_t$ and the ``conditional expected suffix at time $t$'' $R_t := X\up{t}_{t} + \ldots + X\up{t}_T$ (i.e., where we use the conditional expectation $X_{t+i}\up{t} = \E[D_{t+i} \mid \cF_t]$ instead of $D_{t+i}$ itself). Using this language,
    and noticing $\E[R_1] = \E[\sum_t X\up{T}_t]$, by assumption $(1-1/e)\cdot \E[R_1]$ is the RHS of \eqref{eq:diagonal}. Thus, we seek to show that 
    \begin{align}
    \E[\min\{1,A_T\}] \stackrel{\text{want}}{\geq} (1-1/e)\cdot\E[R_1] \label{eq:matchingWant}
    \end{align}
    to prove our lemma. 

    For that, consider the potential $N_t$ for each time $t \in [T+1]$ defined as
    \[
    N_t := R_t \cdot e^{\min\{A_{t-1}, 1\}-1} + (A_{t-1} - 1)^+.
    \]
    The key claim is that this potential forms a supermartingale.
    
    \begin{claim} \label{claim:submartingale}
        $(N_t)_t$ is a supermartingale, namely $\E[N_{t+1} \mid \cF_t] \le N_t$ for all $t \in [T+1]$. In particular, $\E[N_{t+1}] \le \E[N_t]$ for all such $t$. 
    \end{claim}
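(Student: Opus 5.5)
The plan is to compute $\E[N_{t+1}\mid\cF_t]$ directly and compare it to $N_t$ pointwise. Everything in it is either $\cF_t$-measurable or a martingale increment. Throughout I use the simplifying assumption $\sum_s X\up{T}_s \le 1$ from the proof of \Cref{lem:diag-martingale-matching}. I use the convention $R_{T+1}:=0$, so that $N_{T+1} = (A_T-1)^+$. The second statement, $\E[N_{t+1}] \le \E[N_t]$, then follows by taking expectations.

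\textbf{Step 1: conditional expectations.} $A_t = A_{t-1}+D_t$ is $\cF_t$-measurable, since $D_t = X\up{t}_t$. By the martingale property, $\E[X\up{t+1}_s\mid\cF_t] = X\up{t}_s$ for every $s\ge t+1$. Hence
\[
\E[R_{t+1}\mid \cF_t] = \sum_{s\ge t+1} X\up{t}_s = R_t - D_t .
\]
This also holds for $t=T$, where both sides are $0$. Writing $a:=A_{t-1}$, $d:=D_t$, $r:=R_t$ and $g(y):=e^{\min\{y,1\}-1}$, I get
\[
\E[N_{t+1}\mid\cF_t] = (r-d)\,g(a+d) + (a+d-1)^+ ,
\qquad N_t = r\,g(a) + (a-1)^+ .
\]
So it suffices to prove this deterministic inequality for all $a\ge 0$ and $0\le d\le r\le 1$. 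These constraints hold for the following reasons:
\begin{itemize}
\item $d\le r$ because $R_t = D_t + \sum_{s>t} X\up{t}_s$ with nonnegative summands.
\item $r\le 1$ because $R_t \le \E[\sum_s X\up{T}_s\mid\cF_t] \le 1$ under the truncation assumption.
\end{itemize}

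\textbf{Step 2: the scalar inequality.} Define $\varphi(d) := (r-d)g(a+d) + (a+d-1)^+$ on $[0,r]$. We have $\varphi(0) = N_t$, so it is enough to show that $\varphi$ is nonincreasing; it is continuous and piecewise smooth.
\begin{itemize}
\item On the region $a+d<1$: $\varphi'(d) = -g(a+d) + (r-d)g(a+d) = g(a+d)(r-d-1) \le 0$, using $r\le 1$.
\item On the region $a+d>1$: $g\equiv 1$, so $\varphi'(d) = -1 + 1 = 0$.
\end{itemize}
The case $a\ge 1$ falls entirely in the second regime and gives equality. This yields $\E[N_{t+1}\mid\cF_t] \le N_t$.

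\textbf{Main obstacle.} The only delicate point is the sign of $\varphi'$ below the cap. It needs exactly $r-d\le 1$, which is where the reduction to $\sum_t X\up{T}_t\le 1$ is essential. One must also check that $R_t$ dominates $D_t$, so that $d$ ranges over $[0,r]$ and the factor $r-d$ stays nonnegative. The choice of exponent in $g$ is precisely what makes the two derivative terms combine to $g\cdot(r-d-1)$.
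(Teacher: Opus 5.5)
Your proposal is correct and matches the paper's proof: both use $\E[R_{t+1}\mid\cF_t]=R_t-D_t$ to reduce the claim to a deterministic inequality in $(A_{t-1},D_t,R_t)$, and both rely on $R_t\le 1$ from the truncation step. The only difference is in verifying that inequality: you show $d\mapsto (r-d)e^{\min\{a+d,1\}-1}+(a+d-1)^+$ is nonincreasing, whereas the paper splits into three cases by where $A_{t-1}$ and $A_t$ lie relative to $1$ and uses $r-x\le r(1-x)\le re^{-x}$.
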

    
    This claim indeed gives the desired inequality \eqref{eq:matchingWant}: by chaining its conclusion, we obtain that $\E[N_{T+1}] \le \E[N_1]$, which unraveling the notation means $\E (A_T - 1)^+ \le \E[R_1] \cdot e^{-1}$; further noticing $\E[A_T] = \E[\sum_t X\up{t}_t] = \E[\sum_t X\up{T}_t] = \E[R_1]$, a little algebra gives
    \[
    \E[\min\{1,A_T\}] \,=\, \E[ A_T] - \E(A_T - 1)^+ \,=\, \E[R_1] - \E(A_T - 1)^+ \geq\, \E[R_1] - \E[R_1] \cdot e^{-1} \,=\, (1-1/e)\cdot  \E[R_1],
    \]    
    thus giving \eqref{eq:matchingWant}. 
    
    For the remainder of the proof, we show~\Cref{claim:submartingale}. To do this, we will crucially use the fact that \(R_t = X\up{t}_t + \E[R_{t+1} \mid \cF_t]\). We consider three cases based on whether the prefix sum $A_{t-1}$ and/or $A_t$ exceeds 1.

    \paragraph{Case 1: $1 \leq A_{t-1} \leq A_t$.} In this case $N_t = R_t + A_{t-1} - 1$ and $N_{t+1} = R_{t+1} + A_t - 1$. and so
    \[
    N_t = R_t + A_{t-1} -1 = X\up{t}_t + \E[R_{t+1} \mid \cF_t] + A_{t-1} -1 = \E[R_{t+1} \mid \cF_t] + A_t - 1 = \E[N_{t+1} \mid \cF_t],
    \]
    and the supermartingale inequality is proved in this case. 
    
    \paragraph{Case 2: $A_{t-1} \leq A_t \leq 1$.} In this case $N_t = R_t \cdot e^{A_{t-1} - 1}$ and $N_{t+1} = R_{t+1} \cdot e^{A_t - 1}$. Now we use the fact that
    \[
    \E[R_{t+1} \mid \cF_t] ~=~ R_t - X\up{t}_t  ~\leq~ R_t \cdot (1 - X\up{t}_t) ~\leq~ R_t \cdot e^{-X\up{t}_t},
    \]
    where the first inequality follows from $R_t \leq \E[\sum_{s \in [T]}X\up{T}_s \mid \cF_t] \leq 1$ by our assumption. Therefore, we get
    \begin{align*}
        \E[N_{t+1} \mid \cF_t] = \E[R_{t+1} \mid \cF_t] \cdot e^{A_{t}-1} \le R_t \cdot e^{-X\up{t}_t} \cdot e^{A_{t}-1}  = R_t \cdot e^{A_{t-1} - 1} = N_t,
    \end{align*}
    and we are also done in this case.

    \paragraph{Case 3: $A_{t-1} \leq 1 \leq A_t$.} In this case $N_t = R_t \cdot e^{A_{t-1} - 1}$ and $N_{t+1} = R_{t+1} + A_t - 1$. Here, we use a combination of the above ideas. Let $a = 1 - A_{t-1}$ and $b = A_{t} - 1$, so $X\up{t}_t = a + b$. Then we have
    \[
    \E[R_{t+1} \mid \cF_t] ~=~ R_t - a - b ~\leq~ R_t \cdot (1 - a) - b ~\leq~ R_t \cdot e^{-a} -b.
    \]
    Therefore, we get
    \[
     \E[N_{t+1} \mid \cF_t] = \E[R_{t+1} \mid \cF_t] + A_t - 1 \le R_t \cdot e^{-a} - b + A_t - 1 = N_t.
    \]
    This proves the last case of ~\Cref{claim:submartingale}, which concludes the proof of~\Cref{lem:diag-martingale-matching}.
\end{proof}


\subsection{Extension to Concave Returns} \label{sec:devanur-jain-matching}
Next, we will show that ideas for the matching setting can be quickly extended to show a $(1 - \frac{1}{e})$-competitive ratio in the more general setting of online matching with concave returns, which captures the general bound\footnote{\cite{DJ-STOC12} actually proves something a little more precise for this setting: They characterize the best possible competitive ratio $\gamma \geq 1-1/e$ as it depends on the functions $V_i$. Here, we just focus on recovering the baseline competitive ratio $1 - 1/e$.} proven by Devanur and Jain \cite{DJ-STOC12}.

\begin{definition}\label{def:devanur-jain-setting}
    In \emph{Online Fractional Matching with Concave Returns}, we have a set $[m]$ of offline nodes, where each $i \in [m]$ has a monotone concave valuation function $V_i : \Rp \to \Rp$. Online nodes $t \in [T]$ arrive one at a time, and upon each arrival $t$, weights $b_{it} \in \Rp$   are revealed for each $i \in [m]$. The algorithm must immediately set values for the fractional allocation $x_{it} \in [0,1]$ of item $t$ to each agent $i$ subject to $\sum_{i \in [m]} x_{it} \leq 1$, and the goal of the algorithm is to maximize the total value obtained by the offline nodes:
    \[
    \sum_{i \in [m]} V_i\bigg(\sum_{t \in [T]} b_{it} x_{it}\bigg).
    \]
\end{definition}

Notice that if $V_i(y) = \min\{1,~y\}$ and each $b_{it} \in \{0,1\}$ for each $i$ and $t$, then we recover the Online Fractional Matching problem. Again we are interested in the correlated stochastic online model, where the functions $V_i$ and the weights $b_{it}$ are drawn from a known joint distribution $\cD$, and at time $t$ the weights $b_{it}$ of the online node $t$ and all  offline nodes   $i$ are revealed. The following is the extension of the previous result to this more general problem. 

\begin{theorem} \label{thm:concaveMatching}
    The \probmatch{} algorithm is $(1 - \frac{1}{e})$-competitive for Online Fractional Matching with Concave Returns under the correlated stochastic online model.
\end{theorem}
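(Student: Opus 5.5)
The plan is to reduce to a per-agent diagonal martingale inequality, as in the proof of \Cref{thm:matching}, and then to reduce general concave $V_i$ to the capped functions $\min\{\theta, y\}$.

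\textbf{Per-agent reduction.} Fix an offline node $i$. Set $Y^*_t := b_{it}X^*_{it}$ and $Y\up{s}_t := \E[Y^*_t \mid \cF_s]$. Since $b_{it}$ is revealed at time $t$, it is $\cF_t$-measurable, so $Y\up{t}_t = b_{it}X\up{t}_{it}$. This is exactly the contribution of item $t$ to agent $i$ under \probmatch{}, because the algorithm plays $x_{it}=X\up{t}_{it}$. Feasibility holds since $\sum_i X\up{t}_{it} = \E[\sum_i X^*_{it}\mid\cF_t]\le 1$. Thus $\Alg = \sum_i V_i(\sum_t D_t)$ with diagonal $D_t = Y\up{t}_t$, and $\Opt = \sum_i V_i(\sum_t Y\up{T}_t)$. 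Summing over $i$, it suffices to show
\[
\E\Big[V\Big(\sum_t D_t\Big)\Big] \ge (1-1/e)\,\E\Big[V\Big(\sum_t Y\up{T}_t\Big)\Big]
\]
for every nonnegative vector martingale and every monotone concave $V:\Rp\to\Rp$.

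There is one subtlety: $V_i$ is itself random under $\cD$. I would first assume $V_i$ is deterministic, or at least $\cF_1$-measurable, and then condition on $\cF_1$. The general case needs more care, which I discuss below.

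\textbf{Layer-cake decomposition.} I would first reduce to $V(0)=0$; a constant term only helps the algorithm. Any monotone concave $V$ with $V(0)=0$ is then a nonnegative mixture of capped linear functions:
\[
V(y) = c\,y + \int_0^\infty \min\{y,\theta\}\, d\nu(\theta),
\]
where $c=\lim_{y\to\infty} V'(y)\ge 0$ and $\nu = -dV'$ is a nonnegative measure. This is the standard integral representation obtained from the nonincreasing right derivative. The linear part is handled by the martingale property, since $\E\sum_t D_t = \E\sum_t Y\up{T}_t$. For each cap $\theta>0$, I would rescale: \Cref{lem:diag-martingale-matching} applied to the martingale $Y\up{s}/\theta$ gives
\[
\E\Big[\min\Big\{\theta,\sum_t D_t\Big\}\Big]\ge(1-1/e)\,\E\Big[\min\Big\{\theta,\sum_t Y\up{T}_t\Big\}\Big].
\]
Integrating over $\theta$ against $\nu$, using Tonelli since everything is nonnegative, gives the claim.

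\textbf{Main obstacle.} The main obstacle is the case where $V_i$ is random and only revealed over time. Then the integral representation has a random measure $\nu$, and the caps are no longer fixed in advance. I would try to handle this by proving a strengthened form of \Cref{lem:diag-martingale-matching} with a random $\cF_T$-measurable cap $\Theta$: replacing $\min\{1,\cdot\}$ by $\min\{\Theta,\cdot\}$. The proof would go through the same truncation trick, truncating $Y\up{T}$ so that its sum is at most $\Theta$, and then reusing the supermartingale potential of \Cref{claim:submartingale}.

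I am unsure whether the potential survives when $\Theta$ is not adapted. If it does not, I would fall back on the interpretation that the $V_i$ are known at the start, as in the Devanur--Jain setting. This model issue is the step I expect to require the most care; the rest is routine.
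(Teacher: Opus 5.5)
Your proposal is correct and is essentially the paper's argument: reduce to one agent via $Y^{(t)}_t=b_{it}X^{(t)}_{it}$, write $V$ as a mixture of hinges $\min\{\theta,y\}$, and apply \Cref{lem:diag-martingale-matching} to the rescaled martingale, with each $V_i$ treated as fixed (known at the start), which is exactly what the paper implicitly assumes; your explicit linear term $c\,y$ is slightly more careful than the paper's representation, which as written omits the slope at infinity. Falling back to $V_i$ known at the start is in fact necessary and not just convenient, because the random-cap strengthening is false: take $T=N+1$, let the cap $\Theta$ equal $N$ w.p.\ $1/N$ and $0$ otherwise, revealed only at time $T$, and let $X^{(T)}_t=\ind[t\le\Theta]$; then the diagonal sums to $1$, so $\E\min\{\Theta,\sum_t D_t\}=1/N$ while $\E\min\{\Theta,\sum_t X^{(T)}_t\}=\E\Theta=1$.
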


The proof of this result follows verbatim the proof of \Cref{thm:matching} once we have generalized the key diagonal martingale inequality~\Cref{lem:diag-martingale-matching} from the (concave) hinge function $y \mapsto \min\{1, y\}$ of the diagonal to an arbitrary monotone concave function. In the next lemma, we do precisely this.

\begin{lemma}\label{lem:diag-martingale-concave}
    Let $X\up{1},\ldots,X\up{T} \in \Rp^T$ be a vector martingale, and let $V : \Rp \to \Rp$ be a monotone concave function. Define the diagonal $D_t := X\up{t}_t$ for $t \in [T]$. Then
    \[
    \E\Big[V\Big(\sum_{t \in [T]} D_t\Big)\Big] \geq (1 - 1/e) \cdot \E\Big[V\Big(\sum_{t \in [T]} X\up{T}_t\Big)\Big] 
    \]
\end{lemma}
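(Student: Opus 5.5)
Any monotone concave $V:\Rp\to\Rp$ is a mixture of hinge functions plus a constant. This reduces the statement to \Cref{lem:diag-martingale-matching} applied with rescaled caps.

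\textbf{Step 1: hinge decomposition.} First assume $V$ is piecewise linear with finitely many breakpoints; a general $V$ follows by monotone approximation. Since $V$ is concave and nondecreasing, its right derivative $V'$ is nonnegative and nonincreasing. Hence for $y\ge 0$ we can write
\[
V(y) \;=\; V(0) + c\,y + \int_{(0,\infty)} \min\{y,\theta\}\,\d\nu(\theta),
\]
where $c=\lim_{\theta\to\infty}V'(\theta)\ge 0$ and $\nu$ is the nonnegative measure $-\d V'$. Since $\min\{y,\theta\}=\theta\min\{1,y/\theta\}$, it suffices to prove the inequality for three pieces. The constant piece $V(0)\ge (1-1/e)V(0)$ is trivial. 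The linear piece $y$ holds with equality, because $\E[\sum_t D_t]=\E[\sum_t X\up{T}_t]$ by the martingale property (the tower rule gives $\E[D_t]=\E[X\up{T}_t]$). The remaining pieces are $\min\{1,y/\theta\}$ for each $\theta>0$.

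\textbf{Step 2: hinge case.} For fixed $\theta>0$, the scaled process $X\up{t}/\theta$ is again a nonnegative vector martingale with diagonal $D_t/\theta$. \Cref{lem:diag-martingale-matching} then gives
\[
\E\min\{1,\tsum_t D_t/\theta\}\ \ge\ (1-1/e)\,\E\min\{1,\tsum_t X\up{T}_t/\theta\}.
\]
Multiply by $\theta$ and integrate against $\nu$. Nonnegativity of all terms justifies exchanging $\E$ with $\int\d\nu$ by Tonelli. Adding the constant and linear pieces yields the claim.

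\textbf{Step 3: approximation.} This is the main technical obstacle, though a mild one. If $V$ is not piecewise linear, or if $V'(0^+)=\infty$ (e.g.\ $\sqrt{y}$), take piecewise linear concave monotone $V_k\uparrow V$ pointwise, for instance the interpolants of $V$ on refining grids, adjusted at $0$. Both sides converge by monotone convergence. Alternatively, I would skip approximation entirely and apply the integral representation directly for general concave $V$: $\nu$ may have infinite mass near $0$, but $\theta\min\{1,y/\theta\}\le y$ keeps every integrand finite, and Tonelli still applies. If $\E[V(\sum_t X\up{T}_t)]=\infty$, the same monotone argument shows the left side is infinite as well, so the inequality holds trivially. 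A slight subtlety is continuity of $V$ at $0$ (concave functions can jump up at the boundary). Any jump only increases $V$ on $(0,\infty)$ relative to $V(0)$, so it can be absorbed as an extra term $\ind[y>0]$ weighted by the jump size. That term is the $\theta\to0$ limit of the hinges $\min\{1,y/\theta\}$, so the hinge inequality passes to it by monotone convergence.
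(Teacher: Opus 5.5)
Your proof is correct and follows the same route as the paper. You write $V$ as $V(0)$ plus a nonnegative mixture of hinges $\min\{\theta,y\}$, apply \Cref{lem:diag-martingale-matching} to the rescaled martingale $X\up{t}/\theta$, and integrate. Your version is also more careful than the paper's: the paper's mixing measure (CDF $V'(0)-V'(y)$) silently drops the linear term $c\,y$ with $c=\lim_{y\to\infty}V'(y)$. You handle that term correctly via $\E[\sum_t D_t]=\E[\sum_t X\up{T}_t]$, and you also treat the cases $V'(0^+)=\infty$ and a jump of $V$ at $0$.
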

\begin{proof}
    The crucial observation is that any monotone concave function like $V$ can be written as a positive combination of (concave) hinge functions: There is a measure $\mu$ over $\R_+$ such that $V(y) = V(0) + \int_0^\infty \min\{\tau, y\}\, \d \mu(\tau)$ for some weighting $\mu$ over $\Rp$ (in particular, $\mu$ is the distribution with CDF given by $F_\mu(y) = V'(0) - V'(y)$; this gives $\d \mu(\tau) = -V''(\tau)\d \tau$ if $V$ is twice differentiable). Thus, by linearity of expectation, it suffices to show the lemma holds when $V(y) = \min\{\tau, y\}$ for each $\tau > 0$. For such settings, we apply \Cref{lem:diag-martingale-matching} on the rescaled martingale $Y\up{t} := \frac{X\up{t}}{\tau}$ to get
    \begin{align*}
    \E\Big[\min\Big\{\tau,~\sum_{t \in [T]} X\up{t}_t\Big\}\Big] 
    =
    \tau \cdot \E\Big[\min\Big\{1,~\sum_{t \in [T]} Y\up{t}_t\Big\}\Big] &\stackrel{\text{Lem~\ref{lem:diag-martingale-matching}}}{\geq} 
    (1 - 1/e) \cdot \tau \cdot \E\Big[\min\Big\{1,~\sum_{t \in [T]} Y\up{T}_t\Big\}\Big]\\
    &~~~=
    (1 - 1/e) \cdot \E\Big[\min\Big\{\tau,~\sum_{\tau \in [T]} X\up{T}_t\Big\}\Big] \enspace .
    \end{align*}
    This concludes the proof of the lemma. 
\end{proof}


\subsection{Extension to DR-Submodular Functions} \label{sec:DRsubmod}

Finally, we will show that the \probmatch{} algorithm even obtains the optimal $(1-\frac{1}{e})$-competitive ratio for the general Online Fractional Resource Allocation problem where we only require that the objective is concave and DR-submodular. We begin by defining our general problem setting.
\begin{definition}\label{def:online-resource-allocation}
    In Online Fractional Resource Allocation, we have $[m]$ offline agents, and $[T]$ items that arrive online. When each item $t$ arrives, the algorithm must immediately set values $x_{it} \in [0,1]$ for each $i \in [m]$ such that $\sum_{i \in [m]} x_{it} \leq 1$. The goal of the algorithm is to maximize a function $f(x)$ of the whole allocation vector $x$, where $f : \Rp^{mT} \to \Rp$ is a monotone function with $f(\mathbf{0}) = 0$.  
    
    Moreover, the function $f$ is initially unknown to the algorithm, and at each time $t$, the algorithm is only revealed the function restricted to arrivals up to time $t$. Formally, at time $t$ the algorithm has access to $f_{[t]} : \Rp^{mt} \to \Rp$ given by $f_{[t]}((x_{is})_{i \in [m],s \in [t]}) := f((x_{is})_{i \in [m],s \in [t]}, \mathbf{0})$.
\end{definition}

For example, in online bipartite matching, the objective $f$ is $f(x) = \sum_{i} \min\{1, \sum_{t : i \in N(t)} x_{it}\}$, where $N(t)$ is the neighborhood of the online vertex $t \in [T]$ in the bipartite graph. In welfare maximization settings, we have $f(x) = \sum_{i} v_i\big((x_{it})_{t \in [T]}\big)$, where $v_i : \Rp^T \to \Rp$ is the valuation function of agent $i$ over the items. We show that (modulo some technical assumptions), as long as $f$ is concave and DR-submodular, the \probmatch{} algorithm is $(1-1/e)$-competitive.

\begin{theorem}\label{thm:ocdra}
    For the Online Fractional Resource Allocation problem  defined in \Cref{def:online-resource-allocation} under the correlated stochastic online model, if the objective function $f$ is upward-differentiable\footnote{We say $f : \Rp^N \to \Rp$ is \emph{upward-differentiable} if at each point $y \in \Rp^N$, there is an \emph{upward-gradient} $\nabla f(y)$ such that $\lim_{\epsilon \to 0^+} \frac{f(y + \epsilon z) - f(y)}{\epsilon} = \ip{\nabla f(y)}{z}$ for all $z \in \Rp$. The reader loses little by simply assuming that $f$ is differentiable; see \cite{Patton-SODA26} for a formalization of familiar concepts (e.g. the chain rule) to upward-differentiable functions.}, concave, and DR-submodular\footnote{We say an (upward)-differentiable function $f : \Rp^N \to \Rp$ is \emph{DR-submodular} if $\nabla f(y) \leq \nabla f(z)$ coordinate-wise whenever $y \geq z \geq 0$ coordinate-wise.}, then the \probmatch{} algorithm is $(1 - \frac{1}{e})$-competitive.
\end{theorem}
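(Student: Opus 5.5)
The plan is to reduce to a continuous-greedy-style potential argument along the diagonal process, mirroring the supermartingale proof of \Cref{lem:diag-martingale-matching} but with the scalar hinge replaced by $f$ itself. Let $X^*$ be the optimal allocation, $X\up{t}=\E[X^*\mid\cF_t]$, and let the algorithm's allocation be $y := (X\up{t}_{\cdot t})_{t}$, with prefix $y_{\le t}$ (columns $1,\dots,t$ taken from the diagonal, later columns zero). Feasibility is immediate, since $\sum_i X\up{t}_{it}=\E[\sum_i X^*_{it}\mid\cF_t]\le 1$, and $X\up{t}_{it}$ depends only on $\cF_t$. Note also that $f(y_{\le t})=f_{[t]}(\cdot)$ is computable at time $t$; this does not matter for the analysis, but it confirms $f(y_{\le t})$ is $\cF_t$-measurable, which we need.

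The key steps, in order, are as follows.

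First, I would establish the one-step inequality. By concavity of $t\mapsto f(y_{\le t-1}+s\,e_t\text{-column})$, we have $f(y_{\le t})-f(y_{\le t-1})\ge \ip{\nabla f(y_{\le t})}{X\up{t}_{\cdot t}}$ along column $t$. DR-submodularity and $y_{\le t}\le y$ then give the lower bound $\ip{\nabla f(y)}{\cdot}$. I would rather keep the intermediate point as a continuous parametrization: define $g(s)$ for $s\in[0,T]$ by filling columns progressively. Then $f(y)=\int_0^T \ip{\nabla f(y(s))}{\dot y(s)}\,ds$.

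Second, I would compare with OPT via concavity and monotonicity. For any point $z$ that is $\cF_t$-measurable, $f(X^*)\le f(X^*\vee z)\le f(z)+\ip{\nabla f(z)}{(X^*-z)^+}\le f(z)+\ip{\nabla f(z)}{X^*_{>t}}+\ip{\nabla f(z)}{X^*_{\le t}}$. The contribution from the future columns $X^*_{>t}$, taken in conditional expectation, equals $\ip{\nabla f(z)}{X\up{t}_{>t}}$ (gradients of $z$ are $\cF_t$-measurable). The past part should be absorbed by a multiplicative-weights/$e^{s}$ potential exactly as $R_t e^{A_{t-1}-1}$ in \Cref{claim:submartingale}. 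Concretely, I would guess a potential of the form $N_t := e^{t'}\,(\text{something})$. More cleanly, I would follow the standard continuous greedy: along the time-rescaled path, show $\frac{d}{ds}\E f(y(s))\ge \E f(X^*) - \E f(y(s))$-type differential inequality, which integrates to $1-1/e$.

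The main obstacle is exactly this step: the ``past columns'' term $\ip{\nabla f(z)}{X^*_{\le t}}$ cannot be bounded by what the algorithm already collected, since the algorithm placed $X\up{s}_{\cdot s}$, not $X^*_{\cdot s}$, in those columns. What I would try first is to avoid the differential route and instead reduce to \Cref{lem:diag-martingale-concave}. Write $f(x)=\int$ of marginal gains, and use a Patton-style decomposition of concave DR-submodular $f$ along the monotone path $s\mapsto x_{\le s}$, controlling each agent/coordinate block by a one-dimensional concave function of a linear statistic. Failing that, I would define $N_t:=e^{\phi_t}\,\E[f(\cdot)\mid\cF_t]$ generalizing $R_t$ as the conditional expected future marginal value $\ip{\nabla f(y_{\le t-1})}{X\up{t}_{\ge t}}$. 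I would then verify the supermartingale property case-free, using DR-submodularity to make $R_t$ decrease (the analogue of $R_t\le1$ in Case 2), and concavity in the role of the hinge.
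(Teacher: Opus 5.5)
Your proposal has a genuine gap: none of the routes you sketch actually produces the $(1-\frac1e)$ factor. Your first step, done in the right order, is essentially the paper's \Cref{lem:diag-martingale-cdr} with $U=f$. Because $\nabla_{\cdot t} f(y_{\le t})$ is $\cF_t$-measurable, $\E\ip{\nabla_{\cdot t} f(y_{\le t})}{X\up{t}_{\cdot t}}=\E\ip{\nabla_{\cdot t} f(y_{\le t})}{X^*_{\cdot t}}\ge\E\ip{\nabla_{\cdot t} f(y)}{X^*_{\cdot t}}$, and hence $\E f(y)\ge\E\ip{\nabla f(y)}{X^*}$. Measurability must be used \emph{before} DR-submodularity. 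If you first replace $\nabla f(y_{\le t})$ by $\nabla f(y)$, as you wrote, you can no longer swap $X\up{t}_{\cdot t}$ for $X^*_{\cdot t}$, and you only get the vacuous $f(y)\ge\ip{\nabla f(y)}{y}$.

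Even done correctly, combining this with $f(X^*)\le f(y)+\ip{\nabla f(y)}{X^*}$ gives only $\E f(y)\ge\frac12\E f(X^*)$, the greedy bound. The ``past columns'' obstacle you identify is precisely the gap between $\frac12$ and $1-\frac1e$, and your proposed fixes do not close it:
\begin{itemize}
\item A continuous-greedy inequality $\frac{d}{ds}\E f(y(s))\ge\E f(X^*)-\E f(y(s))$ has no reason to hold. The path only moves inside one column, in direction $X\up{t}_{\cdot t}$, never toward $X^*$ as a whole.
\item Reducing to \Cref{lem:diag-martingale-concave} by linearity requires writing $f$ as a nonnegative mixture of one-dimensional concave functions of linear statistics. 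That covers the separable setting of \Cref{thm:concaveMatching}, but it is not available for general concave DR-submodular $f$, and you do not supply it.
\item The potential $e^{\phi_t}\E[f(\cdot)\mid\cF_t]$ is never defined. For general $f$ there is no scalar level like $A_{t-1}$ to exponentiate, so the case analysis of \Cref{claim:submartingale} has nothing to act on.
\end{itemize}

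The missing idea is to run the martingale inequality on an auxiliary function that carries the exponential weighting over \emph{scales} rather than over time:
\[
U(y)=\frac{1}{e-1}\int_0^1 e^{\tau}\,\tau f(y/\tau)\,d\tau .
\]
This $U$ is still monotone and DR-submodular, and $U_{[t]}$ depends only on $f_{[t]}$. So the same telescoping/measurability argument gives $\E U(D)\ge\E\ip{\nabla U(D)}{X\up{T}}$, which is \Cref{lem:diag-martingale-cdr}. The factor $1-\frac1e$ then comes from a purely deterministic property of $U$ (\Cref{fact:key-cdr}, from \cite{Patton-SODA26}): $f(y)\ge(1-\frac1e)\big(U(y)+\hat f(\nabla U(y))\big)$, where $\hat f(\alpha)=\sup_x\{f(x)-\ip{\alpha}{x}\}$. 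Lower-bounding the supremum by taking $x=X\up{T}$ and then applying the martingale inequality yields $\E f(D)\ge(1-\frac1e)\E f(X\up{T})=(1-\frac1e)\E[\Opt]$. With $U=f$ the same balancedness inequality holds in general only with $\gamma=\frac12$ (e.g.\ $f(y)=\min\{1,y\}$ at $y=1$). That is exactly why your first step stalls at $\frac12$.
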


Notably, this is the most general class of fractional resource allocation problems where a $(1-\frac{1}{e})$-competitive algorithm is known to exist, matching the results of \cite{Patton-SODA26}. In particular, it captures matching with concave returns from the previous section \cite{DJ-STOC12}, edge-weighted matching with free disposal \cite{FKMMP-WINE09}, whole-page optimization \cite{DHKMY-TEAC16}, and matching with polymatroid constraints \cite{HJPSZ-FOCS24}.

\subsubsection{$\gamma$-Balanced Functions}

To prove our theorem, we use a diagonal martingale lemma combined with a key fact from \cite{Patton-SODA26}: For every function $f$ satisfying the conditions of \Cref{thm:ocdra}, there exists a function $U$ with a certain ``$\gamma$-balanced'' property with respect to $f$. This notion of $\gamma$-balancedness was initially motivated by the primal-dual approach of \cite{Patton-SODA26}. Here, we will instead use it together with a key martingale inequality (\Cref{lem:diag-martingale-cdr}) to bound the performance of the \probmatch{} algorithm.

We begin by defining $\gamma$-balancedness as follows.
\begin{definition}[$\gamma$-balanced]\label{def:gamma-balanced}
     Let $f : \Rp^N \to \Rp$ be a monotone, upward-differentiable function. Then a monotone, upward-differentiable function $U : \Rp^N \to \Rp$ is said to be \emph{$\gamma$-balanced} with respect to $f$ if for every $y \in \Rp^N$, we have
     \[
     f(y) \geq \gamma \cdot\left(U(y) + \hat f(\nabla U(y))\right),
     \]
     where $\hat f(\alpha) := \sup_{y \in \Rp^{N}} \left\{f(y) - \ip{\alpha}{y}\right\}$.
\end{definition}

Next, we will use the following key fact, which guarantees the existence of such a function $U$.
\begin{fact}[\cite{Patton-SODA26}]\label{fact:key-cdr}
    For any concave $f : \Rp^N \to \Rp$, the function $U : \Rp^N \to \Rp$ given by 
    \begin{equation}\label{eq:aux-fn-U}
    U(y) := \frac{1}{e-1} \int_0^1 e^{\tau} \cdot \tau f\left(\frac{y}{\tau}\right) d\tau
    \end{equation}
    is $(1 - 1/e)$-balanced with respect to $f$.
\end{fact}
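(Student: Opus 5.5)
The plan is to compute the upward-gradient of $U$ explicitly, bound $\hat f(\nabla U(y))$ by Jensen's inequality using convexity of $\hat f$, and then recognize the resulting integrand as an exact derivative in $\tau$. Throughout, write $\d\nu(\tau) := \frac{e^\tau}{e-1}\d\tau$, which is a probability measure on $[0,1]$. Fix $y\in\Rp^N$ and set $g(\tau) := \tau f(y/\tau)$ for $\tau\in(0,1]$.

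\textbf{Step 1: $U$ is well defined.} Concavity of $f$ together with $f(0)\ge 0$ gives $f(y) = f\big(\tau\cdot\tfrac{y}{\tau} + (1-\tau)\cdot 0\big) \ge \tau f(y/\tau)$ for $\tau\in(0,1]$. Hence $0\le g(\tau)\le f(y)$, so $U(y)$ is finite and the integrand is bounded.

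\textbf{Step 2: gradient of $U$.} Differentiating $\tau f(y/\tau)$ in the direction $z\ge 0$ gives $\ip{\nabla f(y/\tau)}{z}$. Passing the derivative under the integral (justified by dominated convergence, using the concavity bounds on difference quotients), I get
\[
\nabla U(y) = \int_0^1 \nabla f(y/\tau)\,\d\nu(\tau).
\]

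\textbf{Step 3: Jensen and the conjugate at a gradient.} The map $\alpha\mapsto\hat f(\alpha)$ is a supremum of affine functions, hence convex. Jensen's inequality then yields
\[
\hat f(\nabla U(y)) \le \int_0^1 \hat f\big(\nabla f(y/\tau)\big)\,\d\nu(\tau).
\]
For concave $f$ and any point $w$, the supergradient inequality $f(y')\le f(w)+\ip{\nabla f(w)}{y'-w}$ shows that the supremum defining $\hat f(\nabla f(w))$ is attained at $y'=w$. Therefore $\hat f(\nabla f(w)) = f(w)-\ip{\nabla f(w)}{w}$. Applying this with $w=y/\tau$ and adding $U(y)$ gives
\[
U(y)+\hat f(\nabla U(y)) \le \frac{1}{e-1}\int_0^1 e^\tau\Big[\tau f(y/\tau) + f(y/\tau) - \big\langle \nabla f(y/\tau), y/\tau\big\rangle\Big]\d\tau .
\]

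\textbf{Step 4: the integrand is an exact derivative.} A direct computation gives $g'(\tau) = f(y/\tau) - \ip{\nabla f(y/\tau)}{y/\tau}$. So the bracket times $e^\tau$ equals $e^\tau(g+g') = \frac{\d}{\d\tau}\big(e^\tau g(\tau)\big)$. Integrating over $[\delta,1]$ gives $e f(y) - e^\delta g(\delta)\le e f(y)$, since $g\ge 0$. Letting $\delta\to 0$ yields $U(y)+\hat f(\nabla U(y)) \le \frac{e}{e-1}f(y)$, which is exactly $f(y)\ge(1-1/e)\big(U(y)+\hat f(\nabla U(y))\big)$.

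\textbf{Main obstacle.} The step I expect to be delicate is the regularity used in Steps 2 and 3. First, the supergradient inequality is needed in directions $y'-w$ that need not be nonnegative, whereas only an upward gradient is assumed. Second, the interchange of derivative and integral near $\tau\to 0$ must be justified. My first attempt is to assume $f$ differentiable, where both facts are standard for concave functions. I would then handle the upward-differentiable case by approximation, or by the one-sided calculus of \cite{Patton-SODA26}. The key observation there is that the bound $\hat f(\nabla f(w))\le f(w)-\ip{\nabla f(w)}{w}$ only needs the $\le$ direction, which follows from concavity along the segment from $w$ to $y'$.
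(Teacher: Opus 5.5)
The paper gives no proof of this Fact. It is imported from \cite{Patton-SODA26} and used only as a black box in the proof of \Cref{thm:ocdra}, so there is no in-paper argument to compare against. Judged on its own, your argument is correct and is the natural one. Write $\nu$ for the probability measure $\frac{e^\tau}{e-1}\,\d\tau$ on $[0,1]$. Then the bound follows from four ingredients:
the formula $\nabla U(y)=\int_0^1\nabla f(y/\tau)\,\d\nu(\tau)$;
convexity of $\hat f$, or directly, for each $y'$, $f(y')-\ip{\nabla U(y)}{y'}=\int_0^1\big(f(y')-\ip{\nabla f(y/\tau)}{y'}\big)\d\nu(\tau)\le\int_0^1\hat f(\nabla f(y/\tau))\,\d\nu(\tau)$;
the identity $\hat f(\nabla f(w))=f(w)-\ip{\nabla f(w)}{w}$;
and the exact-derivative observation $e^\tau(g+g')=\frac{\d}{\d\tau}(e^\tau g)$, together with $g(1)=f(y)$ and $g\ge 0$.
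These give precisely the factor $\frac{e}{e-1}$.

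Of the regularity issues you flag, two close without any approximation.

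\emph{Supergradient property.} Concavity along the segment only gives $f(y')\le f(w)+f'(w;d)$, where $d=y'-w$ has mixed signs. To replace $f'(w;d)$ by $\ip{\nabla f(w)}{d}$, note that $d\mapsto f'(w;d)$ is concave and positively homogeneous on the cone of feasible directions, hence superadditive. So $f'(w;d^+)\ge f'(w;d)+f'(w;d^-)$, and therefore $f'(w;d)\le\ip{\nabla f(w)}{d^+}-\ip{\nabla f(w)}{d^-}$. Thus the upward gradient is a genuine supergradient on $\Rp^N$.

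\emph{Step 4.} The function $g(\tau)=\tau f(y/\tau)$ is concave in $\tau$, being a perspective of $f$. Your formula for $g'$ is exactly its left derivative, which uses only the upward direction $y/\tau$. Hence $e^\tau g$ is absolutely continuous on $[\delta,1]$ and the fundamental theorem of calculus applies. Moreover $g'=\hat f(\nabla f(y/\tau))\ge f(\mathbf 0)\ge 0$, so every integrand is nonnegative and letting $\delta\to0$ is harmless.

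The one justification that is not right as written is the domination in Step 2. The difference quotients increase as $\epsilon\downarrow0$, so monotone convergence does give your formula, but possibly with value $+\infty$. Concavity alone gives no integrable bound on $\ip{\nabla f(y/\tau)}{z}$ as $\tau\to0$ when $y$ has zero coordinates.

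For example, $f(y)=\inf_{k\ge1}\{\tfrac{\sqrt k}{2}+\tfrac{y_1}{2\sqrt k}+k\,y_2\}$ is concave, monotone and upward-differentiable, with $\partial_2 f(s,0)=s$ for $s\ge1$. Hence $\partial_2 U(1,0)=\frac{1}{e-1}\int_0^1\frac{e^\tau}{\tau}\,\d\tau=+\infty$, and $U$ is not upward-differentiable at $(1,0)$.

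What supplies the bound is DR-submodularity, which is assumed in \Cref{thm:ocdra}, the only place the Fact is used. Since $y/\tau\ge y$ for $\tau\in(0,1]$, it gives $0\le\nabla f(y/\tau)\le\nabla f(y)$, an integrable dominating function. So your proof is complete for monotone, concave, DR-submodular $f$, which is exactly where the paper applies the Fact. You should state that hypothesis explicitly rather than claim the result for ``any concave $f$''.
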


\subsubsection{Proof of \Cref{thm:ocdra}}

To prove our main result, we introduce the following key martingale lemma, which will allow us to exploit the auxiliary function $U$.

\begin{lemma}\label{lem:diag-martingale-cdr}
    Let $X\up{1}, \dots, X\up{T}$ be a vector martingale in $\Rp^{mT}$ with filtration $(\cF_t)_{t \in [T]}$. Let $U : \Rp^{mT} \to \Rp$ be a (random) DR-submodular, monotone, upward differentiable function such that for each $t \in [T]$, the restriction $U_{[t]} : \Rp^{mt} \to \Rp$ given by $U_{[t]}((x_{is})_{i \in [m],~s \in [t]}) := U((x_{is})_{i \in [m],~s \in [t]}, \mathbf{0})$ is $\cF_t$-measurable. Let $D = (X\up{t}_{it})_{i \in [m],t\in[T]}$ be the diagonal vector where each entry at $(i,t)$ is taken from $X\up{t}$. Then we have
    \[
    \E U(D) \geq \E \ip{\nabla U(D)}{X\up{T}}.
    \]
\end{lemma}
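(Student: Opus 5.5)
The plan is a telescoping argument over time, combined with the martingale property applied block by block. For $t \in \{0,1,\dots,T\}$ let $D^{(t)} \in \Rp^{mT}$ be the truncated diagonal: it agrees with $D$ on the blocks $s \le t$ (entries $(i,s)$ with $s\le t$) and is $\mathbf{0}$ on all blocks $s > t$. Then $D^{(0)}=\mathbf{0}$, $D^{(T)} = D$, and $D^{(t)} - D^{(t-1)} = e_{[t]}(D_{\cdot t})$, where $D_{\cdot t} := (X\up{t}_{it})_{i\in[m]} \ge 0$ is placed in block $t$. Write
\[
U(D) \;=\; U(\mathbf{0}) + \sum_{t\in[T]} \big(U(D^{(t)}) - U(D^{(t-1)})\big).
\]
Since $U(\mathbf{0}) \ge 0$, it suffices to lower bound each increment.

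\textbf{Step 1 (increment bound via DR-submodularity).} Let $y = D^{(t-1)}$ and let $z = D^{(t)}-D^{(t-1)} \ge 0$. By the chain rule for upward-differentiable functions (as formalized in \cite{Patton-SODA26}),
\[
U(y+z)-U(y) = \int_0^1 \ip{\nabla U(y+sz)}{z}\,\d s .
\]
DR-submodularity gives $\nabla U(y+sz) \ge \nabla U(y+z)$ coordinatewise, and $z \ge 0$, so
\[
U(D^{(t)}) - U(D^{(t-1)}) \;\ge\; \ip{g_t}{D_{\cdot t}}, \qquad g_t := \big(\nabla U(D^{(t)})\big)_{\cdot t},
\]
where $g_t$ is the block-$t$ part of the gradient.

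\textbf{Step 2 (measurability and the martingale swap).} The vector $g_t$ is the upward gradient in block-$t$ directions at a point that is zero on all later blocks. It is therefore determined by the restriction $U_{[t]}$ and by $D_{\cdot 1},\dots,D_{\cdot t}$, all of which are $\cF_t$-measurable; hence $g_t$ is $\cF_t$-measurable. Since $D_{\cdot t} = X\up{t}_{\cdot t} = \E[X\up{T}_{\cdot t}\mid \cF_t]$, the tower property gives
\[
\E\ip{g_t}{D_{\cdot t}} = \E\ip{g_t}{X\up{T}_{\cdot t}}.
\]

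\textbf{Step 3 (return to the full diagonal).} Since $D \ge D^{(t)}$ coordinatewise, DR-submodularity gives $g_t \ge (\nabla U(D))_{\cdot t}$. Together with $X\up{T} \ge 0$ this yields $\ip{g_t}{X\up{T}_{\cdot t}} \ge \ip{(\nabla U(D))_{\cdot t}}{X\up{T}_{\cdot t}}$. Summing over $t$ and taking expectations,
\[
\E U(D) \;\ge\; \sum_{t} \E\ip{g_t}{D_{\cdot t}} \;=\; \sum_t \E\ip{g_t}{X\up{T}_{\cdot t}} \;\ge\; \E\ip{\nabla U(D)}{X\up{T}},
\]
which is the claimed inequality.

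The main obstacle I expect is technical rather than conceptual. First, Step 2 requires the block-$t$ upward gradient at $D^{(t)}$ to depend only on $U_{[t]}$ (and hence to be $\cF_t$-measurable). Second, Step 1 requires the integral form of the increment for merely upward-differentiable $U$. For both I would invoke the calculus formalized in \cite{Patton-SODA26}, and I would also check integrability; the latter is immediate under the finitely supported prior.
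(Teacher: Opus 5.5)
Your proposal is correct and follows essentially the same route as the paper. Both arguments telescope over the truncated diagonals $D^{(\le t)}$, lower-bound each increment by the gradient at the right endpoint via DR-submodularity, swap $X^{(t)}_{\cdot t}$ for $X^{(T)}_{\cdot t}$ using that the block-$t$ gradient depends only on $U_{[t]}$ and is therefore $\cF_t$-measurable, and then pass from $\nabla U(D^{(\le t)})$ to $\nabla U(D)$ by DR-submodularity once more. One minor difference favors you: you dispose of the constant term using only $U(\mathbf{0}) \ge 0$, whereas the paper derives $U(\mathbf{0})=0$ from $\gamma$-balancedness, which is not a hypothesis of the lemma as stated.
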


We defer the proof of this lemma for now, and first we see how \Cref{thm:ocdra} follows from \Cref{lem:diag-martingale-cdr} and \Cref{fact:key-cdr}.
\begin{proof}[Proof of \Cref{thm:ocdra}]
    Suppose we have an instance of Online Fractional Resource Allocation satisfying the conditions of \Cref{thm:ocdra}. Let $U$ be the function defined in \Cref{fact:key-cdr}, which depends on the random objective $f$. 
    
    As always, we define the vector martingale $X\up{t}$ over $\Rp^{mT}$ as usual by $X\up{t}_{is} = \E[X^*_{is} \mid \cF_t]$. We also define $D = (X\up{t}_{it})_{i \in [m],t\in[T]}$ to be the diagonal vector, as defined in \Cref{lem:diag-martingale-cdr}. Then the \probmatch{} algorithm obtains the solution given by $x = D$, while the optimum solution is given by $X^* = X\up{T}$.

    Since the restriction $U_{[t]}$ of $U$ on the first $t$ arrivals is determined only by $f_{[t]}$, we have that $U_{[t]}$ is $\cF_{t}$-measurable. Hence, $U$ and $X$ meet the requirements of \Cref{lem:diag-martingale-cdr}, so we have $\E U(D) \geq \E \ip{\nabla U(D)}{X\up{T}}$. We combine this with \Cref{fact:key-cdr} to obtain
    \begin{align*}
        \E[\Alg] = \E[f(D)]
        &\geq 
        \left(1-1/e\right) \cdot \E\left(U(D) + \hat f(\nabla U(D))\right)  && \text{by $\gamma$-balancedness},\\
        &= 
        \left(1-1/e\right) \cdot \E \left(U(D) + \sup_{x}\left\{f(x) - \ip{\nabla U(D)}{x})\right\}\right)\\
        &\geq 
        \left(1-1/e\right) \cdot \E \left(U(D) + f(X\up{T}) - \ip{\nabla U(D)}{X\up{T}})\right)\\
        &\geq
        \left(1-1/e\right) \cdot \E f(X\up{T}) && \text{by \Cref{lem:diag-martingale-cdr},}\\
        &
        = \left(1-1/e\right) \cdot \E[\Opt]. && \qedhere 
    \end{align*}
\end{proof}

Now, we prove our martingale lemma.
\begin{proof}[Proof of \Cref{lem:diag-martingale-cdr}]
    For each $t \in \{0, \dots, T\}$, define $D\up{\leq t} \in \Rp^{mT}$ by
    \[
    D\up{\leq t}_{is} = \begin{cases}
        D_{is} & s \leq t, \\
        0 & \text{otherwise.}
    \end{cases}
    \]
    Furthermore, define $D\up{t} = D\up{\leq t} - D\up{\leq t-1}$. 
    Since $U\ge0$ and $\hat f(\nabla U(0))\ge f(0)=0$ (take $y=0$ in the supremum), balancedness at $0$ implies
     $0=f(0)\ge\gamma U(0)$, and hence $U(0)=0$.
Thus, we can write $U(D)$ as the telescoping sum
    \begin{align*}
        U(D) ~&=~ \sum_{t \in [T]} \left(U(D\up{\leq t}) - U(D\up{\leq t-1})\right) \\
         &=~ \sum_{t \in [T]}\int_0^1 \ip{\nabla U(D\up{\leq t-1} + \delta D\up{t})}{~D\up{t}}d\delta~\geq~ 
        \sum_{t \in [T]} \ip{\nabla U(D\up{\leq t})}{~D\up{t}},
    \end{align*}
    where the inequality follows from DR-submodularity of $U$. We can expand each of these inner products as 
    \[
    \ip{\nabla U(D\up{\leq t})}{D\up{t}} = \sum_{i \in [m]} \nabla_{it} U(D\up{\leq t}) \cdot X\up{t}_{it} = \sum_{i \in [m]} \nabla_{it} U(D\up{\leq t}) \cdot \E[X\up{T}_{it} \mid \cF_t].
    \]
    Notice that since the $D\up{\leq t}$ and the restriction $U_{[t]}$ are $\cF_t$-measurable, the values $\nabla_{it} U(D\up{\leq t}) = \nabla_{it} U_{[t]}(D\up{\leq t})$ are also $\cF_t$-measurable for each $i$. Thus, we have
    \[
    \sum_{i \in [m]} \nabla_{it} U(D\up{\leq t}) \cdot \E[X\up{T}_{it} \mid \cF_t] 
    = 
    \sum_{i \in [m]} \E\left[\nabla_{it} U(D\up{\leq t}) \cdot X\up{T}_{it} \mid \cF_t\right] 
    \geq 
    \sum_{i \in [m]} \E\left[\nabla_{it} U(D) \cdot X\up{T}_{it} \mid \cF_t\right],
    \]
    where the last inequality follows from the fact that $\nabla  U(D\up{\leq t}) \geq \nabla  U(D)$ coordinate-wise since $U$ is DR-submodular.

    Combining these inequalities and taking expectations, we get
    \begin{align*}
        \E U(D)
        &\geq
        \E\Big[\sum_{t \in [T]} \ip{\nabla U(D\up{\leq t})}{~D\up{t}}\Big] \\
        &\geq \E\Big[\sum_{t \in [T]}\sum_{i \in [m]}\E\left[ \nabla_{it} U(D) \cdot X\up{T}_{it} \mid \cF_t\right]\Big]\\
        &=
        \E\Big[\sum_{t \in [T]}\sum_{i \in [m]} \nabla_{it} U(D) \cdot X\up{T}_{it} \Big]
        ~=~ \E\ip{\nabla U(D)}{X\up{T}}.     
    \end{align*}
    This concludes the proof of~\Cref{lem:diag-martingale-cdr}.
\end{proof}

\begin{remark}\label{thm:gamma-balanced-resource-allocation}
    We note that the proof of \Cref{thm:ocdra} can also be used to show that \probmatch{} is $\gamma$-competitive for $\gamma > 1 - \frac{1}{e}$ for special cases of $f$, so long as one can find a $\gamma$-balanced function $U$ (possibly different from \eqref{eq:aux-fn-U}) satisfying a few necessary regularity conditions. Formally, we have the following general statement:

    Suppose we have an instance of online fractional resource allocation under correlated stochastic online model. Suppose as well that there exists a function $U : \Rp^{mT} \to \Rp$, depending on $f$, with the properties that
    \begin{enumerate}
        \setlength{\itemsep}{0pt}
        \setlength{\parskip}{2pt}
        \setlength{\parsep}{0pt}
        \item $U$ is DR-submodular.
        \item $U$ is $\gamma$-balanced with respect to $f$ for some $\gamma > 0$.
        \item The restriction $U_{[t]}$ to the first $t$ arrivals is determined only by $f_{[t]}$ for each $t \in [T]$.
    \end{enumerate}
    Then the \probmatch{} algorithm is $\gamma$-competitive.
\end{remark}

\section{Online Load Balancing} \label{sec:loadBalancing}
    
\label{sec:online-load-balancing}

We study fractional online load balancing on \(m\) unrelated machines over a horizon of \(T\) jobs. At time \(t\), a job arrives with processing vector
\[
p_t = (p_{t1},\dots,p_{tm}) \in \mathbb{R}_+^m,
\]
where \(p_{ti}\) denotes the load incurred on machine \(i\) by assigning one unit of job \(t\) to that machine. After each job $t$ arrives, the online fractional algorithm must fractionally assign $t$ by choosing an assignment vector
\[
x_t = (x_{t1},\dots,x_{tm}) \in \Delta_m
\qquad\text{where}\qquad
\Delta_m := \bigg\{x \in \Rp^m : \sum_{i=1}^m x_i=1\bigg\},
\]
where $x_{ti}$ denotes the amount of job $t$ assigned to machine $i$. After all jobs have been assigned, the total load on machine $i$ is given by
$L_i := \sum_{t \in [T]} x_{ti} p_{ti}$, and the goal of the algorithm is to minimize a monotone norm of the load vector $\|L\|$, where $L = (L_1, \dots, L_m)$. 

In the standard makespan minimization setting, we minimize the $\ell_\infty$-norm, i.e. the maximum load on any machine. In \Cref{sec:lb-warmup}, we will see as a warm-up how our techniques can be applied to this setting. Later, in \Cref{sec:lb-general-norms}, we will extend our methods to handle more general classes of norms.

\medskip\textbf{Bounded $\Opt$ assumption.}
Like in \Cref{sec:setCover}, we make the simplifying assumption that algorithm is given a fixed bound $\widehat{\Opt} \geq 0$ upfront, such that $\Opt \leq \widehat{\Opt} \leq 2\Opt$ on all possible arrival sequences, where $\Opt$ is the hindsight optimum objective value. Recall that in the adversarial arrival model, the standard ``guess-and-double'' trick allows us to  make this assumption without loss of generality by losing a factor $2$ in our competitive ratio. Hence, by applying this reduction to the adversarial problem before applying the minimax principle, it then suffices for us to consider the correlated stochastic model with the bounded-$\Opt$ assumption.

\medskip\textbf{\ProbMatch{}.}
As always, we will focus on the correlated stochastic model, where the sequence of processing vectors $(p_1, \dots, p_T) \sim \cD$ is sampled from some known distribution $\cD$. Let $X^* \in \Delta_m^{T}$ denote the hindsight optimal assignment of all jobs (as a function of the arrival sequence). We also incorporate the bounded $\Opt$ assumption into our model, i.e. have that $\Opt \leq \widehat{\Opt} \leq 2\Opt$ for every instance in the support of $\cD$. 

Let $X\up{1}, \dots, X\up{T}$ be the natural martingale given by $X\up{t} = \E[X^* \mid \cF_t]$ ($\cF_t$ is generated by $p_1,\ldots,p_t$), noting that each $X\up{t}$ is also an assignment of \textit{all} jobs. Our algorithm will follow standard \probmatch{}, using the fractional ``diagonal'' allocation $x_t = X\up{t}_t = \E[X^*_t \mid \cF_t]$ for each arrival $t$.

\subsection{Warm-up: $\ell_\infty$-Norm Objective}
\label{sec:lb-warmup}

To start, we will show how for the standard $\ell_\infty$-norm setting, we can obtain an $O(\log m)$-competitive algorithm.

\begin{theorem} \label{thm:loadBalMax}
    For online load balancing with $\ell_\infty$-norm objective, there exists an $O(\log m)$-competitive algorithm under correlated stochastic arrivals with the bounded-$\Opt$ assumption.
\end{theorem}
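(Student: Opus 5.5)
The plan is to use duality to rewrite the makespan as a linear functional of the load vector, and then reduce to the scalar maximal inequality of \Cref{lem:max-martingale}, much as in the set-cover warm-up. The algorithm is standard \probmatch{}, $x_t = X\up{t}_t$, so the algorithm's final load is
\[
L_i = \sum_{t\in[T]} p_{ti}\, X\up{t}_{ti}.
\]
Let $i^\star$ be a machine attaining $\|L\|_\infty$. Let $V := e_{i^\star}\in\Delta_m$, so that $\|L\|_\infty=\ip{L}{V}$. The vector $V$ is $\cF_T$-measurable. Define its posterior martingale $V\up{t} := \E[V\mid\cF_t]$, which takes values in $\Delta_m \subseteq [0,1]^m$ and has $V\up{T}=V$. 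Set $M_i := \max_{0\le t\le T} V\up{t}_i$.

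\textbf{Step 1: pass from the diagonal to the offline optimum.} Write
\[
\E[\Alg]=\sum_{t}\sum_i \E\big[V_i\, p_{ti}\, X\up{t}_{ti}\big].
\]
Since $p_{ti}$ and $X\up{t}_{ti}$ are $\cF_t$-measurable, conditioning on $\cF_t$ replaces $V_i$ by $V\up{t}_i$. Now $V\up{t}_i p_{ti}$ is $\cF_t$-measurable and $X\up{t}_{ti}=\E[X^*_{ti}\mid\cF_t]$, so by the tower property we may replace $X\up{t}_{ti}$ by $X^*_{ti}$. This gives
\[
\E[\Alg] = \E\Big[\sum_i\sum_t V\up{t}_i\, p_{ti} X^*_{ti}\Big] \le \E\Big[\sum_i M_i\, L^*_i\Big],
\]
where $L^*_i := \sum_t p_{ti}X^*_{ti}$ is the optimal load on machine $i$. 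Note that $L^*_i\le\Opt\le\widehat{\Opt}$ for every $i$.

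\textbf{Step 2: truncate the small coordinates.} This is the main obstacle. The quantity $\sum_i M_i$ can be as large as $m$, so bounding $M_i L^*_i \le M_i \widehat{\Opt}$ naively loses a factor of $m$. Following the capping idea from \Cref{alg:capped-pm-set-cover}, I will split $M_i\le \e + (M_i-\e)^+$ with $\e=1/m$. The first part contributes
\[
\e\sum_i L^*_i \le \e\, m\,\Opt = \Opt,
\]
using $\sum_i L^*_i\le m\cdot\max_i L^*_i$. The second part is at most $\widehat{\Opt}\cdot\E\sum_i (M_i-\e)^+$.

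\textbf{Step 3: apply the maximal inequality.} Each coordinate $(V\up{t}_i)_t$ is a $[0,1]$-valued martingale. Applying \Cref{lem:max-martingale} coordinatewise gives
\[
\E\sum_i (M_i-\e)^+ \le \log(m)\sum_i \E[V_i] = \log m,
\]
since $V\in\Delta_m$. Combining the three steps,
\[
\E[\Alg]\le \Opt + \log m\cdot\widehat{\Opt}\le (1+2\log m)\,\Opt,
\]
which establishes \Cref{thm:loadBalMax}. The only delicate point is checking the measurability claims used in Step 1, namely that $p_t$ and $X\up{t}_t$ are determined by $\cF_t$. This holds by the definition of the filtration.
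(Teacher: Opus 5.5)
Your proof is correct and is essentially the paper's argument. You take the dual vector $V=e_{i^\star}$ and transfer $\E[V_i \sum_t p_{ti}X\up{t}_{ti}]$ to $\E[\sum_t V\up{t}_i p_{ti}X^*_{ti}]$; this is \Cref{lem:dual-transfer}, which you reprove inline. You then split $M\le \e\ones+(M-\e\ones)^+$ with $\e=1/m$ and apply Doob coordinatewise, which is exactly the $1$-goodness of $\ell_1$ that the paper invokes. The only difference is cosmetic: you bound the $\e$-part by $\e\sum_i L^*_i\le \Opt$ directly rather than via $\widehat{\Opt}$, giving $1+2\log m$ instead of the paper's $2(\log m+1)$.
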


To prove this, we first introduce the following seemingly innocuous lemma, which will be the key identity to prove the above theorem, as well as its generalization to other norms. Intuitively, this lemma allows us to upper-bound the dot product of our algorithm's allocation (given by $\sum_t Y\up{t}_t$) with the supporting dual vector (given by $Z\up{T}$) which comes from taking the norm.

\begin{lemma}\label{lem:dual-transfer}
Let $Y\in \Rp^T$ and $Z \in \Rp$ be random variables that are $\cG_T$ measurable for some filtration $(\cG_t)_{t \in [T]}$. Define the martingales $Y\up{t} = \E[Y \mid \cG_t]$ and $Z\up{t} = \E[Z \mid \cG_t]$. Then
\[
\E\Big[Z\up{T} \cdot \tsum_{t \in [T]} Y\up{t}_t\Big] \,\le\,
\E\,\Big[\Big(\max_{t \in [T]} Z\up{t}\Big) \cdot \tsum_{t \in [T]} Y_t\Big].
\]
\end{lemma}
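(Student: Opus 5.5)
Since $Y$ and $Z$ are $\cG_T$-measurable, $Y\up{T}=Y$ and $Z\up{T}=Z$. The plan is to expand the left-hand side term by term, use the tower property to move each conditional expectation onto the other factor, and then bound the resulting $Z\up{t}$ by the running maximum. Implicitly $Y\ge 0$ and $Z\ge 0$, since both take values in $\Rp$; we use this nonnegativity.

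Write the left-hand side as $\sum_{t\in[T]} \E[Z\, Y\up{t}_t]$. For each fixed $t$, the factor $Y\up{t}_t=\E[Y_t\mid\cG_t]$ is $\cG_t$-measurable. Conditioning on $\cG_t$ therefore gives
\[
\E\big[Z\,Y\up{t}_t\big] \;=\; \E\big[\E[Z\mid\cG_t]\,Y\up{t}_t\big] \;=\; \E\big[Z\up{t}\,Y\up{t}_t\big].
\]
Now $Z\up{t}$ is also $\cG_t$-measurable, so the same step applied in reverse yields $\E[Z\up{t}\,\E[Y_t\mid\cG_t]]=\E[Z\up{t}\,Y_t]$. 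Thus
\[
\E\big[Z\,Y\up{t}_t\big]=\E\big[Z\up{t}\,Y_t\big],
\]
which transfers the conditioning from $Y$ onto $Z$.

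Next, since $Y_t\ge 0$ and $Z\up{t}\le \max_{s\in[T]} Z\up{s}$ pointwise, we get
\[
\E\big[Z\up{t}\,Y_t\big]\;\le\;\E\Big[\big(\max_{s\in[T]} Z\up{s}\big)\,Y_t\Big].
\]
Summing over $t\in[T]$ gives the claimed inequality.

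There is no real obstacle. The only technical point is integrability, needed to justify the tower-property manipulations. In the paper's finitely supported setting this is automatic. In general one can assume $Y$ and $Z$ are bounded, or else truncate and pass to the limit by monotone convergence, using nonnegativity.
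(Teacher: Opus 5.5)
Your proof is correct and follows essentially the same route as the paper: use the tower property twice to get $\E[Z\up{T} Y\up{t}_t] = \E[Z\up{t} Y\up{t}_t] = \E[Z\up{t} Y_t]$, then bound $Z\up{t}$ by the running maximum using $Y_t \ge 0$ and sum over $t$. Your added remark on integrability is a harmless extra; the paper leaves it implicit, which is justified in its finitely supported setting.
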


\begin{proof} 
Since \(Y\up{t}_t\) and $Z\up{t}$ are \(\cG_t\)-measurable, we have
\[
\E\big[Z\up{T} \cdot Y\up{t}_t\big] = \E\big[\E[Z\up{T} \mid \cG_t] \cdot Y\up{t}_t\big] = \E\big[Z\up{t} \cdot  Y\up{t}_t\big] = \E\big[Z\up{t} \cdot \E[Y_t \mid \cG_t]\big] = \E\big[Z\up{t} \cdot Y_t\big]
\]
Therefore
\[
\E\Big[Z\up{T} \cdot \tsum_{t \in [T]} Y\up{t}_t\Big]
=
\E\left[\tsum_{t \in [T]} \big(Z\up{t} \cdot Y_t\big)\right]
\leq
\E\Big[\Big(\max_{t \in [T]} Z\up{t}\Big) \cdot \tsum_{t \in [T]} Y_t\Big],
\]
as claimed.
\end{proof}

With this lemma, we can prove the competitive ratio of the \probmatch{} algorithm for load balancing.

\begin{proof}[Proof of Theorem~\ref{thm:loadBalMax}]
    We will show that the \probmatch{} algorithm is $O(\log m)$-competitive under the bounded $\Opt$ assumption, which gives our theorem. Let $L^\Alg$ and $L^\Opt$ be the load vectors of the \probmatch{} algorithm and the hindsight optimum, respectively, i.e. for each machine $i$,
    \[
    L^\Alg_i = \sum_{t \in [T]} X\up{t}_{ti} p_{ti} \qquad \text{and} \qquad L^\Opt_i = \sum_{t \in [T]} X^*_{ti}\, p_{ti}.
    \]
    Next, let $V = (V_1, \dots, V_m)$ be the indicator vector of the largest coordinate of $L^\Alg$, i.e., $V_i = 1$ if $i = \arg\max_j \{L^\Alg_j\}$ (breaking ties arbitrarily) and $V_i = 0$ otherwise; by construction we have $\ip{V}{L^\Alg} = \|L^\Alg\|_{\infty}$. 

    Now define the vector martingale of $V$ by conditioning on the history up to time $t$, namely $V\up{t} := \E[V \mid \cF_t]$, and apply Lemma~\ref{lem:dual-transfer} to each coordinate $i$ (with $Z = V_i$, $Y_t = X^*_{ti} \,p_{it}$,  and $\cG_t = \cF_t$) to obtain
    \[
    \E\Big[V_i \cdot \tsum_{t \in [T]} X\up{t}_{ti} p_{ti}\Big] \leq \E\Big[\Big(\max_{t \in [T]} V\up{t}_i\Big) \cdot \tsum_{t \in [T]} X^*_{ti}\, p_{ti}\Big],
    \]
    where we used the fact $\E[Y_t \mid \cF_t] = \E[X^*_{ti}\,p_{ti}\mid \cF_t] = p_{ti} \cdot \E[X^*_{ti} \mid \cF_t] = p_{ti} X^{(t)}_{ti}$, since $p_t$ is $\cF_t$-measurable (it is indeed revealed at time $t$).
    
    Summing up over all $i$ gives
    \begin{align}
        \E\,\|L^\Alg\|_\infty
        =
        \E\,\ip{V}{L^\Alg} = \sum_{i \in [m]} \E\Big[V_i \cdot \sum_{t \in [T]} X\up{t}_{ti} p_{ti}\Big] &\le
        \sum_{i \in [m]} \E\Big[\Big(\max_{t \in [T]} V\up{t}_i\Big) \cdot \sum_{t \in [T]} X^*_{ti}\, p_{ti}\Big] \notag
        \\
        &= \sum_{i \in [m]} \E\Big[\Big(\max_{t \in [T]} V\up{t}_i\Big) \cdot L^\Opt_i\Big] \notag \\
        &\leq \E\bigg[\|L^\Opt\|_\infty \cdot \sum_{i \in [m]}\max_{t \in [T]} V\up{t}_i \bigg]\notag\\
        &\leq \widehat{\OPT} \cdot \E\bigg[\Big\|\max_{t \in [T]} V\up{t} \Big\|_1\bigg], \label{eq:loadBalInf}
    \end{align}
    where in the last line we use the guarantee of the optimum estimate $\widehat{\OPT}$. We just need to upper bound this maximum, as we did in previous sections. Let $M := \max_{t \in [T]} V\up{t}$. 
    
    As shown in \Cref{lemma:Cgood}, the $\ell_1$-norm is 1-good as it is a Lov\'asz norm. Since each of the vectors $V\up{t}$ has $\|V\up{t}\|_1 \le 1$ (as an average of the random indicator vector $V$), apply the definition of $C$-goodness with $\e = \frac{1}{m}$ to obtain
    \begin{align}
 \E\,\|M\|_1 \le \E \|(M - \ones/m)^+\|_1 + \|\ones/m\|_1 \le \log m + 1.  \label{eq:goodnessCor} 
    \end{align}
    Plugging this in the previous displayed inequality, we get $\E \|L^\Alg\|_\infty \le O(\log m) \cdot \widehat{\OPT}  \le O(\log m) \cdot \E\,\OPT$. This concludes the proof of Theorem \ref{thm:loadBalMax}.
    \end{proof}


\subsection{Extension to General Norms}\label{sec:lb-general-norms}

In fact, the previous argument works as-is for a much broader class of norms, namely norms whose dual norms are $C$-good. Recall for a monotone norm $\|\cdot\|$ on \(\mathbb{R}_+^m\), its dual norm on \(\mathbb{R}_+^m\) is defined as
\[
\|z\|^* := \max\Bigl\{\langle x,z\rangle : x\in\mathbb{R}_+^m,\ \|x\| \le 1\Bigr\}.
\]

\begin{theorem}[Good dual norms imply competitive ratio]\label{thm:good-dual-online}
 Let $\|\cdot\|$ be a monotone norm on \(\mathbb{R}_+^m\) normalized\footnote{In general, one can convert any normed load-balancing problem into one with a ``normalized'' norm by rescaling the input coordinates of the norm, and adjusting each weight $p_{ti}$ to compensate. However, if the norm is not symmetric, this rescaling does not necessarily preserve $C$-goodness, so we must satisfy our ``$C$-good dual'' condition only after normalizing. If the norm is symmetric, this is not a concern as we are simply scaling the entire norm.} such that $\|e_i\| = 1$ for each standard basis vector $e_1, \dots, e_m$, and assume its dual norm $\|\cdot\|^*$ is \(C\)-good. Then, for the online load balancing problem with norm $\|\cdot\|$ objective, there exists a $2(C\log m + 1)$-competitive algorithm under correlated stochastic arrivals with the bounded-$\Opt$ assumption.  
\end{theorem}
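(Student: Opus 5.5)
We follow the proof of \Cref{thm:loadBalMax} verbatim, replacing the indicator of the argmax by a dual witness. Run \probmatch{} with $x_t = X\up{t}_t$ and let $L^\Alg, L^\Opt$ be the two load vectors. Choose a (random, $\cF_T$-measurable) dual witness $V\in\Rp^m$ with $\|V\|^*\le 1$ and $\ip{V}{L^\Alg}=\|L^\Alg\|$. Such a $V$ exists by monotonicity of the norm: the supporting functional at a nonnegative point can be taken nonnegative, so the dual on $\Rp^m$ suffices. Set $V\up{t}:=\E[V\mid\cF_t]$ and $M_i:=\max_t V\up{t}_i$.

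Applying \Cref{lem:dual-transfer} coordinatewise with $Z=V_i$ and $Y_t=X^*_{ti}p_{ti}$ gives exactly the first part of chain \eqref{eq:loadBalInf}:
\[
\E\|L^\Alg\| \le \sum_i \E\big[M_i L^\Opt_i\big] = \E\,\ip{M}{L^\Opt} \le \E\big[\|M\|^*\,\|L^\Opt\|\big] \le \widehat{\Opt}\cdot \E\|M\|^*.
\]
The middle inequality is the definition of the dual norm, i.e.\ Hölder's inequality for the pair $\|\cdot\|$, $\|\cdot\|^*$.

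It then remains to show $\E\|M\|^*\le C\log m+1$, after which $\widehat\Opt\le 2\Opt$ yields the claimed $2(C\log m+1)$ ratio. To invoke $C$-goodness of $\|\cdot\|^*$ we need the following.
\begin{itemize}
\item The martingale $V\up{t}$ must be $[0,1]^m$-valued. The normalization $\|e_i\|=1$ gives $V_i=\ip{V}{e_i}\le \|V\|^*\|e_i\|=1$, so $V\in[0,1]^m$ and hence so is its conditional expectation.
\item We need the bound $\|V\up{T}\|^*\le B=1$. Since $V$ is $\cF_T$-measurable, $V\up{T}=V$, so this holds almost surely.
\end{itemize}
Applying \Cref{def:c-good} with $\e=1/m$ gives $\E\|(M-\ones/m)^+\|^*\le C\log m$. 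By the triangle inequality, $\|M\|^*\le \|(M-\ones/m)^+\|^*+\|\ones/m\|^*$, and $\|\ones/m\|^*\le \frac1m\sum_i\|e_i\|^*$.

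The main point needing care is bounding $\|e_i\|^*\le 1$. This follows because $\|x\|\ge x_i\|e_i\|=x_i$ for $x\ge0$ by monotonicity, so $\ip{x}{e_i}=x_i\le\|x\|\le 1$ on the dual's feasible set. Hence $\|\ones/m\|^*\le1$ and $\E\|M\|^*\le C\log m+1$. A secondary check is measurability and the existence of a measurable selection of $V$; since $\cD$ is finitely supported, this is trivial.
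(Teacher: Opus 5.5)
Your proposal is correct and matches the paper's proof essentially step for step: the dual witness $V$ with $V\in[0,1]^m$ (from the normalization $\|e_i\|=1$), \Cref{lem:dual-transfer} applied coordinatewise, generalized Cauchy--Schwarz together with the bounded-$\Opt$ estimate, and then $C$-goodness of $\|\cdot\|^*$ at $\e=1/m$ combined with the triangle inequality and $\|e_i\|^*\le 1$. The only step you leave implicit, that $\|M\|^*\le\|(M-\ones/m)^+\|^*+\|\ones/m\|^*$ also needs monotonicity of $\|\cdot\|^*$ on $\Rp^m$, is immediate from the definition of the dual norm.
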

We note that the dual norms of Orlicz norms (in particular $\ell_p$-norms) and symmetric norms are again Orlicz and symmetric, respectively; thus we can leverage the $C$-goodness results for these classes from \Cref{sec:setCover} to obtain competitive ratios. Moreover, the dual norms of ordered norms (which are symmetric Lov\'asz norms) are given by a max of $\topk$-norms, i.e. $\|x\| = \max_{k \in [m]} w_k \|x\|_{\topk}$ for $w_1, \dots, w_m \geq 0$.
This gives the following corollary.

\begin{cor}\label{cor:loadBal}
There is an algorithm for Online Load Balancing with norm $\|\cdot\|$ under the correlated stochastic online model with the following competitive ratios: 
\begin{itemize}
    \item $2(\log m + 1)$-competitive if the norm $\|\cdot\|$ is a max of Top-$k$ norms.\item $4(\log m + 1)$-competitive if the norm $\|\cdot\|$ is an Orlicz norm.
    \item $4(\log m + 1)^2$-competitive if the norm $\|\cdot\|$ is symmetric.
\end{itemize}
\end{cor}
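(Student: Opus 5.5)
The corollary will follow from \Cref{thm:good-dual-online}, once we know that each listed norm has a $C$-good dual (after normalizing so that $\|e_i\|=1$). We then read off $2(C\log m+1)$ using the $C$ values of \Cref{lemma:Cgood}. The plan treats the three bullets separately.

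\textbf{Max of Top-$k$ norms.} Let $\|x\|=\max_k w_k\|x\|_{\topk}$ with the normalization $\|e_i\|=1$. The paper notes this class is exactly the duals of ordered norms, so I will argue that its dual is an ordered norm. By standard duality, the dual of a max of norms is the infimal convolution of the individual duals, and its unit ball is the convex hull of their unit balls. The dual of $w_k\|\cdot\|_{\topk}$ is $\frac1{w_k}\max\{\|z\|_\infty,\|z\|_1/k\}$. I expect the resulting dual to be a symmetric Lov\'asz (ordered) norm, and I would verify this by identifying its dual unit ball with a submodular polytope. If that identification is messy, the fallback is to invoke the paper's remark that dual ordered norms are maxes of Top-$k$ norms, together with biduality: the dual of a max-of-Top-$k$ is then an ordered norm, hence a Lov\'asz extension norm, hence $1$-good. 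Symmetry makes the normalization harmless, since it only rescales the whole norm. With $C=1$ we get $2(\log m+1)$.

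\textbf{Orlicz norms.} The dual of an Orlicz norm $\|\cdot\|_{\Psi,w}$ is equivalent, within a factor $2$, to the Orlicz norm $\|\cdot\|_{\Psi^*,w}$ with complementary (convex conjugate) function $\Psi^*$. This is the classical Luxemburg versus Orlicz (Amemiya) norm comparison. The dual's unit ball, taken after normalization, is contained in that of the weighted Orlicz norm $\|\cdot\|_{\Psi^*,w'}$. The tool is Young's inequality $xz\le\Psi(x)+\Psi^*(z)$, applied with suitable weights. Both inequalities are needed: one direction makes the bound $B$ in \Cref{def:c-good} transfer, the other controls the left-hand side. Combined with the $2$-goodness of weighted Orlicz norms, this gives $C\le 2\cdot 2=4$ for the dual. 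Routing constants this way gives $2(4\log m+1)$, which is not quite $4(\log m+1)$. To reach the stated constant, I would instead rerun the $2$-goodness proof of \Cref{lemma:Cgood} directly on the dual norm $\|z\|^*=\sup\{\langle x,z\rangle:\sum_i w_i\Psi(x_i)\le1\}$. That proof bounds $\E\|\ones_{S_\delta}\|\le CB/\delta$, and I hope the conjugate structure gives $C=2$ with the factor $2$ absorbed, yielding $2(2\log m+2)=4(\log m+1)$. Getting this constant bookkeeping right, especially the normalization for a weighted non-symmetric norm, is the main obstacle.

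\textbf{Symmetric norms.} The dual of a monotone symmetric norm is again monotone symmetric. Normalization is just a global scaling. By \Cref{lemma:Cgood} the dual is $2(\log m+1)$-good, and \Cref{thm:good-dual-online} gives $2(2(\log m+1)\log m+1)\le 4(\log m+1)^2$.
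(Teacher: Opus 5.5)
Your overall route (identify each dual norm, read off its goodness constant from \Cref{lemma:Cgood}, plug into \Cref{thm:good-dual-online}) is the paper's, and your symmetric-norm bullet coincides with it. The genuine gap is the Orlicz bullet. What you actually prove is that $\|\cdot\|^*$ is $4$-good: the factor $2$ of the Luxemburg--Amemiya comparison times $2$-goodness. That gives $2(4\log m+1)$ rather than the stated $4(\log m+1)$, and the step that would yield $C=2$ is only a hope. The paper reaches $C=2$ by asserting that duals of Orlicz norms are Orlicz. Your caution is justified, because for the Luxemburg-type norms $\|\cdot\|_{\Psi,w}$ of the paper's definition this fails in general. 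For example, $\max\{\|x\|_\infty,\|x\|_1/2\}$ on $\R^3$ is Orlicz, with $\Psi(x)=x/2$ on $[0,1]$ and $+\infty$ beyond. Its dual is the Top-$2$ norm, which is not of that form: its indicator values $1,2,2$ would force $\Psi$ to be discontinuous at $1/2$ although $\Psi$ is finite on $[0,1]$. So a direct argument is required.

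The direct argument does go through for unweighted $\Psi$, which is the case the corollary needs since normalization is then a global rescaling. (With nonuniform weights the dual involves $w_i\Psi^*(z_i/w_i)$, which is generally not of the form $w_i'\Psi^*(z_i)$.)
\begin{itemize}
\item Jensen on $\sum_{i\in U}\Psi(x_i)\le1$ gives $\|\ones_U\|^*=h(|U|)$ with $h(n):=n\,\Psi^{-1}(1/n)$, the perspective of the concave $\Psi^{-1}$. So $h$ is concave and nondecreasing, and $\E\|\ones_{S_\delta}\|^*\le h(\E|S_\delta|)$ with no constant loss. This is where the paper's factor $2$ from near-concavity disappears.
\item Young's inequality gives $\|z\|^*\ge\|z\|_{\Psi^*}$, hence $\sum_i\Psi^*(X^{(T)}_i/B)\le1$.
\item Doob applied to the submartingales $\Psi^*(X^{(t)}_i/B)$ gives $\E|S_\delta|\le 1/t$, where $t:=\Psi^*(s)$ and $s:=\delta/B$.
\item Set $x:=\Psi^{-1}(t)$, so $\Psi(x)\le t$. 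Young gives $xs\le\Psi(x)+\Psi^*(s)\le2t$, i.e.\ $h(1/t)=x/t\le 2B/\delta$.
\end{itemize}
Integrating over $\delta\in[\e,1]$ shows $\|\cdot\|^*$ is $2$-good, and \Cref{thm:good-dual-online} gives $2(2\log m+1)\le4(\log m+1)$.

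A caveat on the first bullet: biduality runs the other way from how you use it. The remark that duals of ordered norms are maxes of Top-$k$ norms only shows $\|\cdot\|^*$ is ordered when $\|\cdot\|$ is itself such a dual. That means $\|x\|=\max_k\|x\|_{\topk}/W_k$ with $W_k=\sum_{i\le k}w_i$ for some nonincreasing $w\ge0$. For arbitrary weights your expectation fails. The Top-$2$ norm on $\R^3$ has dual $\max\{\|z\|_\infty,\|z\|_1/2\}$, with indicator values $1,1,\tfrac32$; these are not concave in $|S|$, so the dual is not ordered. The paper's one-line justification carries the same implicit restriction, so the bullet should be read for that subclass.
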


\begin{proof}[Proof of Theorem \ref{thm:good-dual-online}]
Again with the bounded $\Opt$ assumption, we can assume that we have an optimum estimate $\widehat{\OPT}$ such that $\Opt \leq \widehat{\OPT} \leq 2\Opt$ for every scenario. After this reduction, we apply the \probmatch{} algorithm.

Recall our notation: the \probmatch{} algorithm uses the fractional assignment vector $X\up{t}_t = \E[X^*_t \mid \cF_t] \in \Delta_m$ for the $t$-th job, incurring load vector $L^\Alg$ with coordinates $ L^\Alg_i = \sum_{t \in [T]} X\up{t}_{ti} p_{ti}$, while $\OPT$'s load vector $L^*$ is given by $L^\Opt_i = \sum_{t \in [T]} X^*_{ti}\, p_{ti}.$

Similar to the previous part, to upper bound the cost of the algorithm $\|L\|$ we first represent it using duality, namely note that for every vector $x \in \R^m_+$, there is a vector $z$ with $\|z\|^* = 1$ such that $\|x\| = \ip{x}{z}$ (since we are in finite dimensions, the ``double-duality'' property holds for our norm).
Therefore, let $V \in \R^m_+$ be a random vector with $\|V\|^* = 1$ such that $\|L\| = \ip{V}{L^\Alg}$ in every scenario. Note that $V_i \leq 1$ for each $i$, since $1 = \|V\|^* \geq \|V_i e_i\|^* = \frac{V_i}{\|e_i\|} = V_i$.

Again, define the martingale $V^{(t)} := \E[V \mid \cF_t]$ and use the exact same development as in \eqref{eq:loadBalInf} to upper bound the cost of the \probmatch{} algorithm as 
\begin{align*}
    \E\,\|L\|
    =
    \E\,\ip{V}{L} = \sum_{i \in [m]} \E\Big[V_i \cdot \sum_{t \in [T]} X\up{t}_{ti} p_{ti}\Big] &\stackrel{\text{Lem~\ref{lem:dual-transfer}}}{\le}
    \sum_{i \in [m]} \E\Big[\Big(\max_{t \in [T]} V\up{t}_i\Big) \cdot L^\Opt_i\Big]\\
    &~~~\le \E\bigg[ \Big\| \max_{t \in [T]} V\up{t} \Big\|^*\,\Big\| L^\Opt \Big\|\bigg]\\
    &~~~\le \widehat{\OPT} \cdot 
    \E\, \Big\| \max_{t \in [T]} V\up{t} \Big\|^*,
\end{align*}
where the next to last inequality follows from generalized Cauchy-Schwarz inequality $\ip{x}{z} \le \|x\| \|z\|^*$, and the last inequality uses the guarantee of the optimum estimate $\widehat{\OPT}$. Moreover, by the assumption that the dual norm $\|\cdot\|^*$ is $C$-good and the fact that $V \in [0,1]^m$, we can use an analog of the triangle inequality argument from \eqref{eq:goodnessCor} to upper-bound the RHS maximum as
\[
\E\big[\| \max_{t \in [T]} V\up{t}\|^*\big] 
~\le~
\E \|(\max_{t \in [T]} V\up{t} - \ones/m)^+\|^* + \|\ones/m\|^* 
~\le~ 
\E\|V\|^* \cdot C\log m + \frac{\|\mathbf{1}\|^*}{m}
~\le~ C\log m + 1,
\]
where we use $\|V\|^* = 1$ and $\|\mathbf{1}\|^* \leq \sum_i \|e_i\|^*=\sum_i \frac{1}{\|e_i\|} = m$. Using again the estimate guarantee $\widehat{\OPT} \le 2 \OPT$, we obtain 
\[
 \E\,\|L^\Alg\| \le (C \log m + 1)\cdot \E\,\widehat{\OPT} \le 2(C \log m + 1)\cdot \E\OPT\,.  \qedhere
\]
\end{proof}

\section{Weighted Paging, MTS on Star Metrics, and Ski-Rental} \label{sec:pagingMTS}
    In this section, we discuss state-based online problems with natural fractional formulations, including weighted paging, MTS on star metrics, and ski-rental. For weighted paging and MTS on star metrics, known rounding reductions convert the fractional algorithms into randomized integral algorithms with the same asymptotic guarantees. For ski-rental, posterior matching directly yields a randomized algorithm.

\subsection{Weighted Paging}
In the Fractional Weighted Paging problem, we have a cache of size $k$ and a set of $n$ pages. At each time $t \in [T]$, a page $p_t \in [n]$ is requested. The algorithm must maintain a fractional cache vector \(y^t \in [0,1]^n\) over all pages such that $\sum_i y^t_i \leq k$ at each time $t$ and $y^t_{p_t} = 1$. Our objective will be to minimize the eviction cost: When a coordinate is decreased, i.e. $y^t_i < y^{t-1}_i$, the algorithm pays eviction cost $w_i (y^{t-1}_i - y^t_i)$. It has been observed (e.g. see \cite{BBN-JACM12}) that---up to an additive constant that is independent of the sequence length---this is equivalent to cost for fetching pages.

Additionally, we will use the standard approach (e.g. see \cite{BBN-JACM12}) of examining the complementary variables $x_{i,j}$, which denote the total amount of page $i$ evicted from the cache after the $j$th time it was requested, and before the $(j+1)$th time. At each time $t$, let $j(i,t)$ denote the number of times item $i$ has been requested up to time $t$. Let $B(t) = \{p_s : s \leq t\}$ denote the set of pages requested up to time $t$. Then our problem can be described by the following program
\begin{align*}
    \min ~&~ \sum_{i,j} w_i x_{i,j}\\
    \text{s.t.} ~&~ \sum_{i \in B(t) \setminus p_t} x_{i,j(i,t)} \geq |B(t)| - k && \forall t \in [T]\\
    &~ 0 \leq x_{i,j} \leq 1 && \forall i \in [n],~j \geq 0
\end{align*}

Let $X^*$ be the optimal solution to this program, as a function of the random sequence arrivals $(p_1, \dots, p_T)$. The offline optimum of minimizing the eviction cost is given by $\sum_{i,j} w_i X^* _{i, j}$.

We can now describe our algorithm as a capped \probmatch{} algorithm, similar to \Cref{alg:capped-pm-set-cover} for Set Cover. Fix a page $i$, and suppose $i$ is requested for the $j$th time at time $r(i,j)$. For each time $t$ from $r(i,j)$ until page $i$ is requested again (i.e. $r(i,j) \leq t < r(i,j+1)$), define the martingale $X\up{t}_{i,j} := \E[X^*_{i,j} \mid \cF_t]$, and define the running maximum $M\up{t}_{i,j} := \max_{r(i,j) \leq s \leq t}\{X\up{s}_{i,j}\}$.

Our algorithm will seek to match the running maximum $M\up{t}_{i,j}$ with its evictions. However, like in set cover, we will need to cap off small evictions to apply our martingale lemmas effectively. To this end, we instead ensure the fraction of page $i$ evicted at time $t$ is $2\left(M\up{t}_{i,j} - \frac{1}{2k}\right)^+$. Our algorithm will ultimately be as follows in \Cref{alg:paging}, phrased in terms of the variables $y^t$.

\begin{algorithm}
    \caption{Capped \ProbMatch{} for Paging}\label{alg:paging}
    For each time $t$, when $p_t$ is requested:
    \begin{enumerate}
        \item Set $y\up{t}_{p_t} = 1$.
        \item For every page \(i\notin B(t)\), set \(y^t_i=0\).
        \item For every page \(i\in B(t)\setminus\{p_t\}\), compute
    \(M^{(t)}_{i,j(i,t)}\) and set
        \[
        y^{t}_i = \max\left\{0,~ 1 - 2\left(M\up{t}_{i,j(i,t)} - \frac{1}{2k}\right)^+\right\}.
        \]
    \end{enumerate}
\end{algorithm}

\begin{theorem} \label{thm:paging}
    \Cref{alg:paging} is $2\log(2k)$-competitive for weighted paging under correlated stochastic online model.
\end{theorem}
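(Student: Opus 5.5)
The plan has two parts, mirroring \Cref{thm:set-cover-warmup}: first show that \Cref{alg:paging} always outputs a feasible fractional cache, then bound its expected eviction cost by a per-variable maximal inequality from \Cref{lem:max-martingale}.

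\textbf{Feasibility.} The constraint $y^t_{p_t}=1$ holds by construction, and $y^t\in[0,1]^n$ because of the $\max\{0,\cdot\}$. The capacity constraint $\sum_i y^t_i\le k$ is equivalent to
\[
\sum_{i\in B(t)\setminus p_t}(1-y^t_i)\ \ge\ |B(t)|-k,
\]
where $1-y^t_i=\min\{1,2(M^{(t)}_{i,j(i,t)}-\tfrac1{2k})^+\}$. Since $X^*$ satisfies the covering constraint at time $t$ in every scenario, and $j(i,t)$ and $B(t)$ are $\cF_t$-measurable, taking conditional expectations gives
\[
\sum_{i\in B(t)\setminus p_t} X^{(t)}_{i,j(i,t)}\ \ge\ |B(t)|-k .
\]
Note that the intervals of the martingales are set so that time $t$ lies in $[r(i,j(i,t)),r(i,j(i,t)+1))$. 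It then remains to show that rounding small values down and doubling preserves coverage, exactly as in \eqref{eq:setCoverFeasible}.

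There is a subtlety here. The sum has $|B(t)|-1$ terms, which may far exceed $k$, so subtracting $\tfrac1{2k}$ per term could lose much more than $\tfrac12$. The plan is to treat this exactly as the paging cover constraint is usually handled. Let $d=|B(t)|-k\ge1$. If any term satisfies $M\ge \tfrac12+\tfrac1{2k}$, it already contributes $1$ after capping, and we can reduce to the remaining terms with a smaller $d$. Otherwise, use $2(M-\tfrac1{2k})^+\ge 2M-\tfrac1k$ only on the terms with $M\ge\tfrac1{2k}$, and argue separately that the terms with $M<\tfrac1{2k}$ contribute little. I expect this case analysis, possibly using the standard knapsack-cover strengthening that at most $k$ pages of $B(t)$ can be fully cached, to be the main obstacle. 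If a clean argument fails, I would replace the threshold $\tfrac1{2k}$ by one relative to $\min(d,k)$ and check that the cost bound still gives $O(\log k)$.

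\textbf{Cost.} The eviction cost of page $i$ during interval $j$ is at most $w_i$ times the total decrease of $y_i$ over that interval. Within an interval, $y_i$ is non-increasing, because $M^{(t)}_{i,j}$ is a running maximum. Hence the total decrease is at most
\[
\min\bigl\{1,\,2(\max_t X^{(t)}_{i,j}-\tfrac1{2k})^+\bigr\}.
\]
The re-fetch at a request is a fetch, not an eviction, so it costs nothing here. The process $(X^{(t)}_{i,j})_{t\ge r(i,j)}$ is a $[0,1]$-valued martingale. Its start time $r(i,j)$ is a stopping time, so optional stopping keeps the martingale property, and its terminal value has expectation $\E[X^*_{i,j}]$.

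Applying \Cref{lem:max-martingale} with $\e=\tfrac1{2k}$ bounds the expected cost of interval $j$ by $2w_i\,\E[X^*_{i,j}]\log(2k)$. Summing over all pairs $(i,j)$ gives
\[
\E[\Alg]\ \le\ 2\log(2k)\,\E\Bigl[\sum_{i,j} w_iX^*_{i,j}\Bigr]\ =\ 2\log(2k)\,\E[\Opt].
\]
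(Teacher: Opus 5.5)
\textbf{Gap: feasibility of the cache is not proved.} Your cost analysis is the paper's. Within a phase $y_i$ is non-increasing, so the eviction cost is at most $2w_i(M^{(\tau)}_{i,j}-\tfrac1{2k})^+$. Applying \Cref{lem:max-martingale} to $X^{(t)}_{i,j}$ from $r(i,j)$ onward (the paper conditions on $\cF_{r(i,j)}$) gives $2\log(2k)\,w_i\,\E[X^*_{i,j}]$. The problem is feasibility. You correctly reduce $\sum_i y^t_i\le k$ to
\[
\sum_{i\in B(t)\setminus p_t}\min\bigl\{1,2(M_i-\tfrac1{2k})^+\bigr\}\;\ge\; d:=|B(t)|-k ,
\]
and you obtain $\sum_i M_i\ge d$ from the covering constraint. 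But then you stop and call the rounding step the main obstacle. Your fallback of changing the threshold would analyze a different algorithm, not \Cref{alg:paging}. So the proposal never shows the output is a valid fractional cache, which is half of the theorem.

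\textbf{How to close it.} The number of summands is exactly $|B(t)|-1=d+k-1$, so a large $|B(t)|$ forces a proportionally large $d$, and the doubling pays for the per-term loss. Your peeling step already works. Each $i$ with $M_i\ge\tfrac12+\tfrac1{2k}$ contributes exactly $1$ on the left and accounts for at most $1$ of $\sum_i M_i$. After removing $a$ such terms you have $d'=d-a$, exactly $d'+k-1$ remaining terms, and no capping. If $d'\le 0$ you are done. Otherwise, apply $2(M-\tfrac1{2k})^+\ge 2M-\tfrac1k$ to every remaining term; small $M_i$ need no separate treatment. This gives
\[
\sum_{\text{remaining}} 2\bigl(M_i-\tfrac1{2k}\bigr)^+ \;\ge\; 2d'-\frac{d'+k-1}{k} \;=\; d'+(d'-1)\,\frac{k-1}{k}\;\ge\; d'.
\]
The paper does the same computation in another form, following \cite{BCLLM-STOC18}. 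It works with the cached fractions $\hat y_i=1-M_i$, whose sum over $B(t)\setminus p_t$ is at most $k-1$. It writes $y^t_i=\Phi(\tfrac{2k}{2k-1}\hat y_i)$ with the superadditive, nondecreasing $\Phi(x)=\lfloor x\rfloor+\phi(\{x\})$. It then concludes $\sum_{i\neq p_t}y^t_i\le\Phi\bigl(k-1+\tfrac{k-1}{2k-1}\bigr)=k-1$.
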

\begin{proof}
    We seek to show both that \Cref{alg:paging} returns a feasible solution to the paging problem, and that it incurs a cost of at most $O(\log k) \cdot \Opt$. We will defer the proof of feasibility to \Cref{sec:proofPagingFeasibile}, as it follows from standard methods of \cite{BCLLM-STOC18} and is tangential to our main techniques.
    \begin{claim}\label{claim:paging-feasibility}
        For the $y^t_i$ returned by \Cref{alg:paging}, we have $\sum_{i \in [n]} y^t_i \leq k$ for each time $t \in [T]$.
    \end{claim}
    Assuming this claim, we focus instead on bounding the cost of our algorithm.

    Consider a page $i \in [n]$ and a phase $j \geq 1$ of $i$. Note that since $M\up{t}_{i,j}$ is monotone increasing in this phase, the allocation $y\up{t}_i$ is monotone decreasing. Thus, by the end of this phase at time $\tau := r(i, j+1)-1$, the algorithm has incurred eviction cost at most $w_i\cdot (1 - y^{\tau}_i) \leq w_i \cdot 2(M\up{\tau}_{i,j} - \frac{1}{2k})^+$. Hence, the expected cost of this phase of page $i$, conditioned on the filtration at the start of the phase $\cF_{r(i, j)}$, is
    \[
    \E\Big[w_i\cdot (1 - y^{\tau}_i) \,\Big|\, \cF_{r(i, j)}\Big] 
    ~\leq ~ \E\Big[2w_i(M\up{\tau}_{i,j} - \tfrac{1}{2k})^+ \,\Big|\, \cF_{r(i, j)}\Big] 
    ~\leq~ 2w_i\log(2k) \cdot \E\big[X\up{\tau}_{i,j} \mid \cF_{r(i, j)}\big],
    \]
    where the second inequality is by applying \Cref{lem:max-martingale} to the stopped martingale $X\up{r(i,j)}_{i,j},\ldots,X\up{\tau}_{i,j}$.  

    Taking expectations, we have 
    \[
    \E\big[w_i\cdot \big(1 - y^{r(i,j+1)-1}_i\big)\big] \leq 2\log(2k) \cdot \E[w_i \cdot X^*_{i,j}] \enspace .
    \]
    Summing over all $i$ and $j$, we have
    \[
    \E[\Alg] = \sum_{i,j}\E\big[w_i\cdot \big(1 - y^{r(i,j+1)-1}_i\big)\big] \leq 2\log(2k) \cdot \sum_{i,j}\E[w_i \cdot X^*_{i,j}] = 2\log(2k) \cdot \E[\Opt] \enspace . \qedhere
    \]
\end{proof}

This proves the desired guarantees for~\Cref{alg:paging}, thus proving~\Cref{thm:paging}.

\subsection{Metrical Task System on Star Metrics}
In the Metrical Task System (MTS) problem, we maintain the position of an agent in a metric space $\cM$ over $[n]$, who must move to service a sequence of incoming requests $r^1, \dots, r^T$ while minimizing costs. Each request $r^t$ specifies a vector of service costs $r^t_1, \dots, r^t_n$ over $[n]$. Upon the arrival of each request, the algorithm must determine how to move the agent in $\cM$, after which we pay both the movement distance as well as the service cost given by $r^t$.

In fractional MTS, we formally maintain a vector $x^t \in \Delta_n$, where $x^t_i$ denotes the fractional amount of the agent at location $i$ at time $t$. When request $r^t$ arrives, the algorithm picks $x^t \in \Delta_n$, and the cost of step $t$ is given as $W_\cM(x^{t-1}, x^t) + \ip{x^t}{r^t}$, where $W_\cM$ denotes the Wasserstein distance with respect to metric $\cM$.

We obtain results for fractional MTS when the metric $M$ is a star metric, i.e. $d_\cM(i,j) = d_i + d_j$ for some $d_1, \dots, d_n \geq 0$. For this problem, we first use the reduction of \cite{BBN-JACM12}, which allows us to focus on a \emph{reduced setting} of MTS defined as follows.

\begin{definition}
    Let $\cM$ be the star metric with $d_\cM(i,j) = d_i + d_j$ for some $d_1, \dots, d_n \geq 0$. The \emph{reduced MTS problem} with metric $\cM$ is as follows. 
    
    Request vectors $r^t \in \Rp^n$ arrive continuously over time $t \in [0, T]$, such that $r^t$ is constant for each unit interval $t \in [k, k+1)$, where $k \in \N$. At each point in time, the algorithm picks $x^t \in \Rp^n$ subject to the following constraints.

    \begin{enumerate}
        \item Each coordinate $x^t_i$ is non-decreasing in $t$, except immediately following times $\tau_0(i), \tau_1(i), \tau_2(i), \dots$ which are defined by the property \[
        \int_{\tau_{j-1}(i)}^{\tau_{j}(i)} r^t_i\,\d t \,=\, d_i.
        \]
        In other words, each $\tau_j(i)$ marks the end of the $j$th \emph{phase} of location $i$ (i.e. the interval $(\tau_{j-1}(i), \tau_{j}(i)]$), where the total service costs in each phase equals $d_i$.
        \item We must have $\sum_ix^t_i \geq 1$ for all $t$ (notably, we don't enforce equality as larger $x$ will only cost more).
        \item We incur cost $d_i$ per unit mass in location $i$ at the end of each phase of $i$. Hence, we seek to minimize
        \[\sum_{i \in [n]} d_i \cdot \sum_j x^{\tau_j(i)}_i.
        \]
    \end{enumerate}
\end{definition}

\begin{lemma}[Lemmas 6.1 and 6.2 in \cite{BBN-JACM12}]\label{lem:MTS-reduction}
    Any $c$-competitive algorithm for the reduced MTS problem with star metric $\cM$ can be converted into a $4c$-competitive algorithm for the original MTS problem with star metric $\cM$.
\end{lemma}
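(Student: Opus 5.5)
The plan is to reproduce the two halves of the argument from \cite{BBN-JACM12}. First, the offline optimum of the reduced problem is at most a constant times the optimum of the original star-MTS instance. Second, any feasible online solution of the reduced problem can be converted online into a fractional MTS solution whose cost is at most a constant times the reduced cost. Taking the constants to be $2$ and $2$ gives $4c$: the converted algorithm pays at most $2\cdot c\cdot\OPT_{\text{red}}\le 4c\cdot\OPT_{\text{MTS}}$.

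\textbf{Preprocessing.} I first move to continuous time and elementary requests. Split each request $r^t$ over its unit interval so that at each instant only one coordinate is served, at rate $r^t_i$. Because the fractional service cost is linear in $x$ and the movement cost is a metric, this splitting can only help the online player and leaves $\OPT$ unchanged. The natural accounting is then that location $i$ accumulates service cost $d_i$ per phase, and this matches the cost $d_i$ of leaving $i$ to reach the center of the star.

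\textbf{Lower bound on $\OPT$.} Fix any offline MTS solution $(z^t)$. For each location $i$ and phase $j$, set $x^t_i$ equal to the largest mass $z^s_i$ seen so far in the current phase ($s\le t$ within the phase). Then $x^t_i$ is nondecreasing inside a phase, resets only at the phase boundaries $\tau_j(i)$, and satisfies $\sum_i x^t_i\ge \sum_i z^t_i=1$, so it is feasible for the reduced problem. Its cost is $d_i\cdot\max_{t\in\text{phase}} z^t_i$ per phase. I bound this by charging each unit of mass that is present at $i$ at some time during the phase. If it stays until the phase ends, it pays service cost $\int r_i=d_i$. If it leaves, it pays movement $d_i$. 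If it arrived mid-phase, it paid $d_i$ to enter. Each unit of movement cost $d_i+d_{i'}$ can be charged at most once at each endpoint, and service cost is charged once. The total is therefore at most $2\,\OPT_{\text{MTS}}$.

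\textbf{Online conversion.} Given an online reduced solution $x^t$, play $y^t:=x^t/\|x^t\|_1$, possibly after first greedily trimming $x^t$ down to total mass exactly $1$ in a monotone way. Three costs need to be bounded.
\begin{enumerate}
\item The service cost paid by $y$ at $i$ during a phase is at most $x^{\tau_j(i)}_i\cdot d_i$, because $x_i$ only increases within the phase and total service in the phase is $d_i$.
\item Movement into $i$ (cost $d_i$ per unit) is bounded by the total increase of $x_i$ within phases. That increase is at most $x_i^{\tau_j(i)}$ per phase.
\item Movement out of $i$ happens only at phase ends, and its cost is again bounded by $d_i x_i^{\tau_j(i)}$.
\end{enumerate}
Adding these gives a constant factor (I aim for $2$ after careful bookkeeping).

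\textbf{Main obstacle.} I expect the main difficulty to be the normalization step. When $\sum_i x^t_i>1$, rescaling can shift mass between locations even though no coordinate of $x$ decreased. I would handle this by maintaining $y\le x$ coordinatewise with a lazy rule: only add mass to a coordinate when forced because the total of the other coordinates was reset, and always remove mass from coordinates that are being reset. This lets every movement be charged to either an increase of $x_i$ or a reset of $x_i$, exactly as in Lemmas~6.1--6.2 of \cite{BBN-JACM12}.
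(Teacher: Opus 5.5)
Your decomposition is the right one: first bound the reduced optimum by the MTS optimum, then convert any online reduced solution into an online MTS solution. This is the structure behind the cited Lemmas 6.1--6.2 of \cite{BBN-JACM12}; the paper itself gives no proof. Your lazy rule is also the correct fix for the normalization problem: keep $y\le x$, remove mass from $i$ only when $x_i$ is reset below $y_i$, and refill only into slack.

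The gap is in the constants: as written, your argument does not give $4c$. Under the lazy rule, in each phase $j$ of $i$, three costs are each bounded by $d_i x_i^{\tau_j(i)}$: the service at $i$, the refilling of $y_i$ during the phase, and the forced removal at $\tau_j(i)$. So the conversion loses a factor $3$, and this is tight for the rule. Take two locations with $d_1=d_2=1$. Let requests alternate in blocks, each completing one phase of the requested location, and let the reduced solution sit on the requested location ($x=e_1$, then $x=e_2$, \dots, switching at each phase end). The reduced solution pays $1$ per block, while $y$ pays $1$ of service and $2$ of movement. So ``$2$ after careful bookkeeping'' is not available for this rule. Combined with the factor $2$ you claim for the first step, you would only get $6c$.

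The repair is that your first step actually loses nothing. For phase $j$ of location $i$, we have $\max_{\mathrm{phase}} z_i \le \min_{\mathrm{phase}} z_i + V_{ij}$, where $V_{ij}$ is the total variation of $z_i$ in that phase. The service paid at $i$ during the phase is at least $d_i \min_{\mathrm{phase}} z_i$, because $\int r_i = d_i$ over the phase. Summing over $i,j$ gives $\OPT_{\mathrm{red}} \le \text{service}(z) + \sum_i d_i\,\mathrm{TV}(z_i) = \mathrm{cost}(z)$, since on a star $W_{\cM}(z,z')=\sum_i d_i|z_i-z'_i|$. Your own charging already shows this: service is charged once, and each move is charged $d_i$ and $d_{i'}$ once at its two endpoints, which sums to its cost $d_i+d_{i'}$. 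Hence the total is $3c\le 4c$.

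Smaller points to fix:
\begin{itemize}
\item In your item~2, the refills of $y_i$ are not bounded by the increases of $x_i$, since they can fill slack that already existed. They are bounded by $x_i^{\tau_j(i)}$ because $y_i$ is nondecreasing within a phase, or globally by mass conservation.
\item You still need to turn the continuous-time path back into one action per discrete step. Time-averaging $y$ over each unit interval works, since service is linear and $W_{\cM}$ is jointly convex.
\item Incomplete final phases and the initial state contribute additive constants $O(\sum_i d_i)$.
\item The splitting preprocessing is unnecessary for the reduced problem as defined here. It also does not leave $\OPT$ unchanged; it can only lower it.
\end{itemize}
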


\medskip\textbf{\ProbMatch{}.} For the reduced MTS problem, our algorithm is as follows. Let $Z^t \in \Delta_n$ denote that hindsight optimum's distribution over the $n$ locations at $t$. For each location $i$, and for any time $t$ in phase $j$ of that location, let $X\up{t}_{i,j} := \E[Z^{\tau_j(i)}_i \mid \cF_t]$ denote the expected total mass in the hindsight optimum in this location at the end of phase $j$, given the filtration $\cF_t$. For convenience, we will say $\tau_j(i) = T+1$ if location $i$ has fewer than $j$ phases, and $Z^{T+1} = \mathbf{0}$.

With the \probmatch{} paradigm, we want to let $x^t_i$ follow $X\up{t}_{i,j}$ as closely as possible within phase $j$. But since $x^t_i$ cannot decrease within the phase, we instead follow the running max 
\[
M\up{t}_i := \sup_{s \in (\tau_{j(i,t) - 1}(i),\, t]} X\up{s}_{i,j(i,t)},
\]
where $j(i,t)$ is the index of the phase of location $i$ at time $t$. However, like Set Cover and Paging, we will also need to cap $x^t_i$ to a minimum value to get our desired competitive bound. Hence, our final algorithm will be to set
\[
x^t_i = 2\left(M\up{t}_i- \frac{1}{2n}\right)^+
\]
at each time $t$.

\begin{theorem}\label{thm:MTS}
    For the reduced MTS problem with star metric, this capped \probmatch{} algorithm is $2\log(2n)$-competitive in the correlated stochastic online model.
\end{theorem}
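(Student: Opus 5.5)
The plan mirrors the paging argument: first verify feasibility, then bound the cost of each phase of each location separately using \Cref{lem:max-martingale}, and sum.

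\textbf{Feasibility.} We must show $\sum_i x^t_i \ge 1$ at every time $t$. For each location $i$, let $j = j(i,t)$ be its current phase. The optimum's mass at $i$ at the end of phase $j$ dominates its mass at time $t$, since within a phase an optimal reduced solution has no reason to decrease mass and pays only at phase ends. So
\[
X^{(t)}_{i,j} = \E[Z^{\tau_j(i)}_i \mid \cF_t] \ge \E[Z^t_i \mid \cF_t].
\]
I take the hindsight optimum $Z$ to be a feasible reduced-MTS solution, normalized so that it is a distribution: it is nondecreasing within phases, with $\sum_i Z^t_i = 1$. If instead $Z^{\tau_j(i)}$ is interpreted as the mass present just before the reset at the phase end, the same monotonicity holds. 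Then
\[
\sum_i M^{(t)}_i \ge \sum_i X^{(t)}_{i,j(i,t)} \ge \E\Big[\sum_i Z^t_i \,\Big|\, \cF_t\Big] = 1 .
\]
Hence, exactly as in \eqref{eq:setCoverFeasible},
\[
\sum_i x^t_i \ge 2\sum_i \Big(M^{(t)}_i - \tfrac{1}{2n}\Big) \ge 2 - 1 = 1 .
\]
The within-phase monotonicity of $x^t_i$ follows because running maxima are nondecreasing; the reset at phase ends is allowed by the problem definition.

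\textbf{Cost.} Fix a location $i$ and a phase $j$, starting at time $s_0 = \tau_{j-1}(i)$. The algorithm pays $d_i \, x^{\tau_j(i)}_i = 2 d_i \big(M^{(\tau_j(i))}_i - \tfrac{1}{2n}\big)^+$. The process $(X^{(t)}_{i,j})$ for $t \in (s_0, \tau_j(i)]$ is a $[0,1]$-valued martingale. Its stopped terminal value is $X^{(\tau_j(i))}_{i,j} = Z^{\tau_j(i)}_i$, because the phase end is determined by the revealed service costs, so everything relevant is known by then.

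Two technical points need care here. First, the phase endpoints $\tau_j(i)$ are stopping times, so conditioning on $\cF_{s_0}$ and applying \Cref{lem:max-martingale} to the stopped martingale (optional stopping, with bounded horizon $T+1$) is legitimate. Second, time is continuous, but requests are constant on unit intervals. So $X^{(t)}$ changes only at integer times and at the deterministic-given-history phase boundaries, and a finite discretization suffices for Doob.

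With these in place, \Cref{lem:max-martingale} with $\e = 1/(2n)$ gives
\[
\E\big[x^{\tau_j(i)}_i \mid \cF_{s_0}\big] \le 2\log(2n)\, \E\big[Z^{\tau_j(i)}_i \mid \cF_{s_0}\big].
\]
Taking expectations, multiplying by $d_i$, and summing over $i, j$ yields
\[
\E[\Alg] \le 2\log(2n)\, \E\Big[\sum_i d_i \sum_j Z^{\tau_j(i)}_i\Big] = 2\log(2n)\, \E[\Opt].
\]
Phases that never end contribute zero by the convention $Z^{T+1} = \mathbf 0$, and the algorithm also pays nothing for them.

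\textbf{Main obstacle.} The step I expect to need the most care is feasibility. It relies on justifying that the hindsight optimum can be taken within-phase monotone with total mass exactly 1, so that the end-of-phase posterior dominates the current posterior mass. If the optimum's mass could decrease within a phase, I would replace $Z$ by its within-phase running maximum. That modified solution remains feasible and has the same cost, since cost is charged only at phase ends.
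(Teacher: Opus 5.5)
Your argument follows the paper's proof step for step: feasibility from within-phase monotonicity of the optimum plus the factor-$2$ cap, then \Cref{lem:max-martingale} with $\varepsilon = 1/(2n)$ applied conditionally on $\cF_{\tau_{j-1}(i)}$ to each phase, then summation over $(i,j)$. Two points need correcting.

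First, the claim that the stopped terminal value equals $Z^{\tau_j(i)}_i$ is false. That $\tau_j(i)$ is a stopping time does not make $Z^{\tau_j(i)}_i$ measurable with respect to $\cF_{\tau_j(i)}$. $Z$ is the hindsight optimum of the whole request sequence, so its mass at $i$ at time $\tau_j(i)$ depends on requests arriving later. This differs from set cover, where $X^{(T)} = X^*$ because everything is revealed at the final time. What you need is only the tower property, $\E[X^{(\tau_j(i))}_{i,j} \mid \cF_{\tau_{j-1}(i)}] = \E[Z^{\tau_j(i)}_i \mid \cF_{\tau_{j-1}(i)}]$. The paper uses exactly this to write $\Opt(i,j) = \E[d_i X^{(\tau_j(i))}_{i,j}]$, and with it your per-phase inequality holds. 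Also, your ``main obstacle'' is not one: within-phase monotonicity is constraint~1 of the reduced problem, so the optimum satisfies it by definition. Your fallback of replacing $Z$ by its running maximum would not preserve cost anyway, since the maximum over a phase can exceed the mass at its end.

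Second, the convention $Z^{T+1} = \mathbf{0}$ conflicts with the domination $Z^{\tau_{j(i,t)}(i)}_i \ge Z^t_i$ you use for feasibility. On the event that the current phase of $i$ never completes, the left side is $0$. Concretely, if all service costs are zero, no phase ever ends, so $X^{(t)} \equiv 0$ and the algorithm plays $x^t = \mathbf{0}$, violating $\sum_i x^t_i \ge 1$. The paper's proof has the same slip. A repair is to track $\E[Z^{\tau_j(i)\wedge T}_i \mid \cF_t]$ instead. Feasibility then goes through exactly as you wrote it, and the algorithm still pays only for completed phases. Summing the per-phase Doob bounds now gives $\E[\Alg] \le 2\log(2n)\,\bigl(\E[\Opt] + \E\sum_i d_i Z^T_i\bigr)$. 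The extra term is at most $2\log(2n)\max_i d_i$ when $Z^T \in \Delta_n$, an additive constant independent of the request sequence.
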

Together with \Cref{lem:MTS-reduction}, this implies that there exists an $8\log(2n)$-competitive algorithm for MTS with star metric under correlated stochastic online model.

\begin{proof}[Proof of \Cref{thm:MTS}]
    First, we check that our algorithm is feasible. First, by definition, it maintains non-decreasing mass $x^t_i$ for $t$ within a phase. Now we argue that $\sum_i x^t_i \ge 1$ for all times $t$. Note that, by feasibility of the optimal solution $Z^t$, we have 
    \[
     \sum_{i \in [n]} M\up{t}_i \geq \sum_{i \in [n]} X\up{t}_{i,j(i,t)} = \E\Big[\sum_{i \in [n]} Z^{\tau_{j(i,t)}(i)}_i \,\Big|\, \cF_t\Big] \ge \E\Big[\sum_{i \in [n]} Z^{t}_i \,\Big|\, \cF_t\Big]  \geq 1,
    \]
    where the next-to-last inequality is because the optimal solution to the reduced MTS problem also needs to maintain non-decreasing mass $Z_i^t$ within a phase of $i$, and by definition $t$ is within the phase ending at time $\tau_{j(i,t)}(i)$. Hence, we have
    \[
    \sum_{i \in [n]} x^t_i = 2\sum_{i \in [n]}\Big(M\up{t}_i - \frac{1}{2n}\Big)^+ \geq 2(1 - 1/2) = 1,
    \]
    proving the feasibility of our algorithm. 

    Next, we bound the algorithm's cost. To do so, fix a location $i$ and a phase index $j$. For this (possible) phase, $\Opt$ pays cost 
    \[
    \Opt(i,j) := \E[d_i \cdot Z^{\tau_j(i)}_i] 
    = 
    \E[d_i \cdot \E[Z^{\tau_j(i)}_i \mid \cF_{\tau_j(i)}]]
    =
    \E[d_i \cdot X\up{\tau_j(i)}_{i,j}].
    \]
    Meanwhile, our algorithm pays cost
    \[
    \Alg(i,j) := \E\big[d_i \cdot x^{\tau_j(i)}_i\big] = \E\Big[d_i \cdot 2\Big(M\up{\tau_j(i)}_i - \frac{1}{2n}\Big)^+\Big].
    \]
    By \Cref{lem:max-martingale} applied conditionally to the stopped martingale $X\up{t}_{i,j}$ over $t \in (\tau_{j-1}(i), \tau_j(i)]$, we have \[
    \E\bigg[\bigg(M\up{\tau_j(i)}_i - \frac{1}{2n}\bigg)^+\,\bigg|\, \cF_{\tau_{j-1}(i)}\bigg] \leq \log(2n)\cdot \E\big[X\up{\tau_j(i)}_{i,j} \mid \cF_{\tau_{j-1}(i)}\big].
    \]
    Taking expectations, we see that $\Alg(i,j) \leq 2\log(2n) \cdot \Opt(i,j)$. Summing over all locations $i$ and phases $j$ gives $\Alg \leq 2\log(2n) \cdot \Opt$, as desired.
\end{proof}

\subsection{Ski-Rental}
In the (continuous time) ski-rental problem, there is some time horizon $T \in \Rp$ unknown to the algorithm, representing the length of the skiing season. The algorithm must choose, at each time $t \in [0,T]$, whether to \emph{rent} skis (incurring cost $1$ per unit time spent renting), or to \emph{buy} skis (incurring a one-time cost $B > 0$, but no longer incurring rental costs). Hence, if the algorithm chooses to buy at time $t^\Alg \leq T$, the algorithm incurs cost $t^\Alg + B$. If the algorithm never buys, it incurs cost $T$. It is easy to see that the hindsight optimum is $\Opt = \min\{T, B\}$, as the optimum solution will buy at time $0$ if $B \leq T$ and otherwise will rent for the entire season.

We can represent a randomized algorithm for ski-rental by its buying probability function $x : \Rp \to [0,1]$, where $x$ is a monotone increasing function such that $x(t)$ represents the cumulative probability of having bought skis by $t$ if the season has not yet ended (i.e. if $T \geq t$). The cost of such an algorithm for a given time horizon $T$ is then
\[
\cost_x(T) = \int_0^T(1-x(t))\,\d t + B\cdot x(T) .
\]

\medskip\textbf{\ProbMatch{}.} In our stochastic setting, we assume $T \sim \cD$ for some known distribution $\cD$. In this setting, we can naturally define a \probmatch{} algorithm: At time $t$, if the ski season has not ended, we increase our cumulative probability of buying to match the current conditional probability that $\Opt$ bought skis. In other words, we maintain the buying probability function 
\[
x(t) = \Pr[\Opt \text{ bought skis} \mid \cF_t] = \Pr[T \geq B \mid T\geq t]. 
\]
However, in order to optimize our competitive ratio, we again run a capped version of \probmatch{}: we set $x(t) = 0$ for $t \leq t_0$, where $t_0$ is a threshold time we will optimize later. Thus, our final algorithm will be given by
\[
x(t) = \begin{cases}
    0 & t \in [0,t_0],\\
    \frac{\Pr[T \geq B]}{\Pr[T \geq t]} & t \in (t_0, B],\\
    1 & t > B.
\end{cases}
\]
\begin{theorem}\label{thm:ski-rental}
    For any $\cD$, there exists a $t_0 \geq 0$ such that the above algorithm is $\frac{e}{e-1}$-competitive for ski-rental with stochastic $T \sim \cD$.
\end{theorem}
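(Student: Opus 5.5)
Let $S(t) := \Pr[T \ge t]$ and $p := S(B) = \Pr[T\ge B]$, so $\E[\Opt] = \E[\min\{T,B\}] = \int_0^B S(t)\,\d t$. The plan is to write $\E[\cost_x(T)]$ as an integral against $S$, bound it for each fixed $t_0$, and then choose $t_0$ by an averaging argument so that the bound is at most $\frac{e}{e-1}\E[\Opt]$.

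\textbf{Cost formula.} By Fubini (buying at time $u$ matters only if $T \ge u$),
\[
\E[\cost_x(T)] \;=\; \int_0^\infty S(t)\,(1-x(t))\,\d t \;+\; B\int_0^\infty S(u)\,\d x(u).
\]
Here $\d x$ has three parts: a jump of size $p/S(t_0^+)$ at $t_0$, density $-p\,S'(u)/S(u)^2$ on $(t_0,B]$, and a jump to $1$ just after $B$. The jump at $B$ carries weight $S(B^+)\,(1-x(B))$, and since $x(B)=1$ it contributes nothing. On $(t_0,B]$ we have $S(u)\,\d x(u) = p\,\d(-\log S(u))$, which integrates to $p\log\bigl(S(t_0)/p\bigr)$. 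The renting term is
\[
\int_0^{t_0} S\,\d t \;+\; \int_{t_0}^B (S - p)\,\d t .
\]
Altogether, for each $t_0$ (ignoring atom subtleties at $t_0$, which I would handle by left/right limits),
\[
\E[\Alg_{t_0}] \;\le\; \int_0^B S\,\d t \;-\; p(B-t_0) \;+\; Bp \;+\; Bp\log\!\bigl(S(t_0)/p\bigr)
\;=\; \E[\Opt] \;+\; p\,t_0 \;+\; Bp\log\!\bigl(S(t_0)/p\bigr).
\]

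\textbf{Choosing $t_0$.} We need a $t_0$ with
\[
p\,t_0 + Bp\log\bigl(S(t_0)/p\bigr) \;\le\; \tfrac{1}{e-1}\int_0^B S .
\]
I would average over $t_0$ drawn from the density $\frac{e^{s/B}}{B(e-1)}$ on $[0,B]$, the classic optimal ski-rental distribution. Under this averaging:
\begin{itemize}
\item The term $p\,t_0$ averages to $\frac{pB}{e-1}$.
\item For the log term, use $\log y \le y - 1$, or better the tangent bound $\log(S/p) \le \log a + S/(ap) - 1$ with a suitably chosen, possibly $s$-dependent, $a$. This converts the log into a term linear in $S(t_0)$, which averages to an integral of $S$ against the exponential weight.
\end{itemize}
We then need to show that the weighted sum is at most $\frac{1}{e-1}\int_0^B S$, using $S \ge p$ on $[0,B]$ and the monotonicity of $S$.

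\textbf{Main obstacle.} The difficulty is matching constants exactly. If the averaging does not close, the fallback is a direct extremal argument. Fix the ratio $r = \E[\Opt]/(pB) \ge 1$. Minimizing over $t_0$ and maximizing over nonincreasing $S$ with $S \ge p$ reduces to a one-dimensional calculus problem. The worst case should be $S(t) = p\,e^{(B-t)/B}$ on the relevant range, which is exactly the equalizing adversary of ski-rental. For that $S$ one checks $\log(S(t_0)/p) = (B-t_0)/B$, and then an appropriate $t_0$ yields the ratio $e/(e-1)$.

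To show that general $S$ is no worse, I would use the inequality $\min_{t_0} g(t_0) \le \int g\,\d\mu$, with $\mu$ the exponential measure above, integrating by parts so that $\int \log S\,\d\mu$ appears. Then I would bound $\int_0^B \log(S/p)\,\d\mu$ via concavity of $\log$ (Jensen) against $\int S$. Getting this Jensen step to produce precisely $1/(e-1)$ is where I expect the work.
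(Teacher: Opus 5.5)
Your cost identity is correct and matches the paper's: $\E[\cost_x(T)]=\E[\Opt]+p\,g(t_0)$ with $g(t):=t+B\log\bigl(S(t)/p\bigr)$. The gap is in the step that carries the theorem: finding $t_0$ with $p\,g(t_0)\le\frac{1}{e-1}\int_0^B S$.

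Your main plan bounds $\min g$ by its average against the fixed density $\d\mu(s)=\frac{e^{s/B}}{B(e-1)}\d s$. This cannot succeed however you later bound the logarithm (tangent lines, Jensen), because the average itself can exceed the target. Let $T$ equal $0$, $B-\eta$, or $B$ with probabilities $1-cp$, $(c-1)p$, $p$ (or a smoothed version), and let $\eta\to 0$, so $S\approx cp$ on $(0,B)$. Then $\int g\,\d\mu\to\frac{B}{e-1}+B\log c$, while the target tends to $\frac{Bc}{e-1}$. The required inequality $1+(e-1)\log c\le c$ fails for every $c\in(1,e)$, since the convex function $c-1-(e-1)\log c$ vanishes at $c=1$ and $c=e$. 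Yet here $\min g\approx B\log c$, which does satisfy the bound. So a good $t_0$ must depend on $S$, and no fixed averaging measure works.

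Your fallback extremal argument identifies the right worst case, $S=p\,e^{(B-t)/B}$. But showing that no other $S$ is worse is exactly the content of the theorem, and you only assert it.

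The missing idea is to take $t_0$ to be the minimizer of $g$ on $[0,B]$ and set $m:=g(t_0)$. Then $g(t)\ge m$ for all $t\in[0,B]$ rearranges to the pointwise bound $S(t)\ge p\,e^{(m-t)/B}$. Integrating, and using $e^{y-1}\ge y$, gives
\[
\E[\Opt]=\int_0^B S(t)\,\d t\;\ge\;pB(e-1)\,e^{m/B-1}\;\ge\;(e-1)\,p\,m.
\]
This is exactly $p\,g(t_0)\le\frac{1}{e-1}\E[\Opt]$. Your extremal $S$ is precisely the one for which $g$ is constant and both inequalities in this chain are tight, so this short argument is the rigorous version of your intuition.
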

\begin{proof}
    First, we will compute the expected cost incurred by our algorithm. Let $R(t) := \Pr[T \geq t]$ be the tail probability for each time $t$. For convenience, we will assume that $R(t)$ is differentiable (this can be achieved by adding vanishingly small noise to the distribution of $T$). 
    If \(R(B)=0\), then \(T<B\) almost surely and the algorithm that never buys before \(B\) is optimal. Thus assume \(R(B)>0\).
    We have
    \begin{align*}
        \E_{T \sim \cD}[\cost_x(T)]
        &=\E\left[\int_0^T(1-x(t))\,\d t\right] + B \cdot \E[x(T)]\\
        &= \left(\int_0^\infty R(t)\cdot (1-x(t)) \,\d t \right)+ B \cdot \left(\int_0^\infty (-R'(t))\cdot x(t)\, \d t\right) .
    \end{align*}
    Using our choice of $x(t)$, we have
    \begin{align}
        E[\cost_x(T)] &=  \left(\int_0^B R(t)\cdot (1-x(t)) \,\d t \right)+ B \cdot \left(\int_0^B (-R'(t))\cdot x(t)\, \d t + \int_B^\infty (-R'(t))\, \d t\right) \notag\\
        &= \left(\int_0^B R(t)\, \d t - \int_{t_0}^B R(t) \cdot \frac{R(B)}{R(t)}\,\d t\right) + B\cdot \left(\int_{t_0}^B \frac{-R'(t) \cdot R(B)}{R(t)}dt + \int_B^\infty(-R'(t))dt \right)\nonumber.
    \end{align}
     Now using the fact that $\Opt = \E[\min\{T, B\}] = \int_0^B R(t)\,\d t$, we have
    \begin{align}
         E[\cost_x(T)]  &= \Opt -(B-t_0) \cdot R(B) + B\cdot R(B)\cdot \log\left(\frac{R(t_0)}{R(B)}\right) + B\cdot R(B)\nonumber\\
        &= \Opt + R(B) \cdot \left(t_0+ B \cdot \log\left(\frac{R(t_0)}{R(B)}\right)\right) \enspace .\label{eq:ski-rental-bound}
    \end{align}
    To optimize our choice of $t_0$, we let 
    \[
    t_0 = \arg \min_{t \in [0, B]}\{t + B\log(R(t)/R(B))\}\qquad \text{and} \qquad m := t_0 + B \log(R(t_0)/R(B)).
    \]
    Notice that this choice of $t_0$ ensures that for all $t \in [0,B]$, we have
    \begin{align*}
        m
        \leq t + B\log\left(\frac{R(t)}{R(B)}\right) \implies R(B) \cdot e^{\frac{m-t}{B}} \leq R(t).
    \end{align*}
    Integrating the above inequality from $0$ to $B$ gives
    \[
    \Opt = \int_0^B R(t)dt \geq R(B) \cdot  \int_0^B e^{\frac{m-t}{B}}dt = R(B) \cdot Be^{\frac{m}{B}-1} \cdot (e-1) \geq R(B) \cdot m\cdot (e-1),
    \]
    where the last inequality comes from $e^{m/B - 1} \geq \frac{m}{B}$. Hence, we get the bound $R(B) \cdot m \leq \frac{1}{e-1}\Opt$, which we can substitute into \eqref{eq:ski-rental-bound} and recall the definition of $m$ to get
    \[
    \Alg = \E[\cost_x(T)] = \Opt + R(B) \cdot m \leq \Opt + \frac{1}{e-1}\Opt = \frac{e}{e-1}\Opt,
    \]
    as desired.
\end{proof}

\section*{AI Disclosure}
ChatGPT 5.3 was used during the research process to help brainstorm possible proof strategies. All results, proofs, and exposition were written, verified, and adapted by the authors.

\bigskip
\appendix
\noindent {\LARGE \bf Appendix}
  \section{A Minimax Principle}
\label{sec:minimax}

We state the result for minimization problems. The maximization version is obtained by reversing the relevant inequalities (equivalently, by applying the theorem to the negative of the objective). The formulation below allows the feasible action at a round to depend on both the revealed input prefix and the algorithm's previous actions.

{
Our online model can be viewed as a compact-action variant of the
request--answer-game framework of 
\cite{BenDavidBKTTW}. Their framework allows arbitrary trajectory
costs but assumes a finite answer set, whereas we allow compact
continuous action spaces and explicitly incorporate
input- and history-dependent feasibility constraints, as required
by fractional online problems.}

\paragraph{Online model.}
Fix a time horizon $T\in\mathbb N$. Inputs of length less than $T$ can be padded with a terminal symbol and dummy rounds that impose no further constraint or cost, so we may represent every valid input by a length-$T$ sequence. Let $\cX_{step}$ (which will model each step of the input) be any set, and let 
\[
    \cX\subseteq\cX_{\mathrm{step}}^T
\]
be the set of valid (padded) inputs.

Let $\cY_{\mathrm{step}}$ be a compact metric space (which will model the ground set of actions the algorithm can take at each time). At time $t$, after observing $x_{\le t} = (x_1,\ldots,x_t) \in \cX_{step}^t$ and having played $y_{<t} = (y_1,\ldots,y_{t-1}) \in \cY_{step}^{t-1}$, the algorithm must choose an action
\[
    y_t\in \Gamma_t(x_{\le t},y_{<t})\subseteq \cY_{\mathrm{step}},
\]
where $\Gamma_t$ describes the feasible actions at that history. We assume that this set is nonempty whenever $y_{<t}$ is a feasible action prefix for $x_{<t}$. For a complete input $x\in\cX$, define the set of feasible action sequences
\[
    \cY(x)
    :=
    \left\{
        y=(y_1,\ldots,y_T)\in\cY_{\mathrm{step}}^T:
        y_t\in\Gamma_t(x_{\le t},y_{<t})
        \text{ for every }t\in[T]
    \right\}.
\]
We assume that $\cY(x)$ is compact for every input $x$. In the applications of this paper, this follows immediately because the actions lie in a compact box or simplex and feasibility is described by closed linear inequalities.

For example, in online fractional set cover, $y_t\in[0,1]^m$ is the current set-selection vector and
\[
    \Gamma_t(x_{\le t},y_{<t})
    =
    \Big\{
        z\in[0,1]^m:
        z\ge y_{t-1}
        \text{ and }
        \sum_{i:x_t\in S_i}z_i\ge 1
    \Big\},
\]
where $x_t$ indicates the element to be covered at time $t$ (and writing $y_0 := 0$). For fractional online load balancing, one simply has $\Gamma_t(x_{\le t}, y_{<t}) =\Delta_m$.

Let
\[
    \cost:
    \{(x,y):x\in\cX,\ y\in\cY(x)\}
    \longrightarrow \mathbb R
\]
be the cost function. We assume that, for every fixed input $x$, the map $y\mapsto\cost(x,y)$ is continuous on the compact set $\cY(x)$. Let $\OPT:\cX\to\mathbb R$ denote the offline optimum.

\paragraph{Representation of randomized online algorithms.}
For a compact metric space $K$, let $\mathcal P(K)$ denote the set of Borel probability measures on $K$. We represent a randomized online algorithm by a family of distributions $\nu_x$, one for each input $x$ (giving the distributions of the whole action sequence for that input), but satisfying the online condition that when two input sequences $x,x'$ agree up to some time $t$, their corresponding distributions $\nu_x$ and $\nu_{x'}$ must give 
the same probabilities for the actions in the first $t$ times. More precisely, a randomized online algorithm is given by a family of distributions 
\[
    \nu=(\nu_x)_{x\in\cX},
    \qquad
    \nu_x\in\mathcal P(\cY(x)),
\]
such that for every $x,x' \in \cX$ and $t \in [T]$,
\begin{align}
    x_{\le t}=x'_{\le t}
    \quad\Longrightarrow\quad
    (\pi_t)_\#\nu_x=(\pi_t)_\#\nu_{x'}\,;
    \label{eq:causality}
\end{align}
Here $\pi_t(y_1,\ldots,y_T):=(y_1,\ldots,y_t)$ denotes the
projection onto the first $t$ actions, and $(\pi_t)_\#\nu_x$ is the
corresponding marginal distribution; namely,
\[
(\pi_t)_\#\nu_x(U):=\nu_x(\pi_t^{-1}(U))
\qquad
\text{for every Borel set }U\subseteq\cY_{\mathrm{step}}^t.
\]

We remark that this representation is equivalent to the usual round-by-round description of a randomized online algorithm. First, it is clear that any ``round-by-round'' randomized algorithm induces a distribution $(\nu_x)_{x\in\cX}$ that satisfies the online condition \eqref{eq:causality}. Conversely, a collection of distributions $\{\nu_x\}_{x \in \cX}$ satisfying the online condition \eqref{eq:causality} defines a unique joint distribution of the first $t$ actions based only on the input prefix $x_{\le t}$, and conditioning this distribution on the first $t-1$ actions $y_{< t}$ gives the distribution of the action $y_t$ at round $t$ (such conditional distributions exist because $\cY_{\mathrm{step}}$ is a compact metric space; see e.g., \cite{Kallenberg2002} or \cite{Williams1991}). 

\medskip

Given these formal definitions of the online model and randomized online algorithms, we can finally state the key minimax theorem that we will prove. 

\begin{theorem}[Minimax for online algorithms with a fixed horizon]
\label{thm:onlineMinimaxFull}
Fix $T\in\mathbb N$ and $\alpha,\beta\ge 0$. Suppose that, for every finitely supported probability distribution $\cD$ over $\cX$, there exists a randomized online algorithm $\nu^{\cD}$ such that
\begin{align}
    \E_{X\sim\cD}
    \E_{Y\sim\nu_X^{\cD}}
    [\cost(X,Y)]
    \le
    \alpha\,\E_{X\sim\cD}[\OPT(X)]+\beta\,.
    \label{eq:bayesian-guarantee}
\end{align}
Then there exists a single randomized online algorithm $\nu^\star$ such that, for every input $x\in\cX$,
\begin{align}
    \E_{Y\sim\nu_x^\star}[\cost(x,Y)]
    \le
    \alpha\OPT(x)+\beta\,.
    \label{eq:worst-case-guarantee}
\end{align}
\end{theorem}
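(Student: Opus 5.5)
The plan is to prove \Cref{thm:onlineMinimaxFull} in two stages: first for every finite set of inputs via a finite-dimensional minimax theorem, and then for all of $\cX$ via compactness. Let $\mathcal A$ be the set of randomized online algorithms $\nu=(\nu_x)_{x\in\cX}$ satisfying \eqref{eq:causality}. For a finite subset $F\subseteq\cX$, let $\mathcal A_F$ be the restrictions $(\nu_x)_{x\in F}$ of algorithms in $\mathcal A$. Put the product of weak topologies on $\prod_{x\in F}\mathcal P(\cY(x))$.

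\textbf{Finite stage.} The space $\mathcal A_F$ is convex, since \eqref{eq:causality} is linear in $\nu$. It is also compact: each $\mathcal P(\cY(x))$ is weakly compact by Prokhorov, because $\cY(x)$ is a compact metric space, and the causality constraints are closed, because pushforward under the continuous map $\pi_t$ is weakly continuous. One point needs checking: every tuple in $\mathcal A_F$ satisfying causality on $F$ should extend to a full algorithm on $\cX$. I would show this by realizing the tuple round by round through conditional distributions on the prefix tree of $F$. Off that tree, I would continue with any measurable feasible selection. This needs measurable selections of $\Gamma_t$, which I would assume or obtain from compactness.

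Now consider the payoff
\[
g(\nu,p)=\sum_{x\in F}p_x\Big(\E_{Y\sim\nu_x}[\cost(x,Y)]-\alpha\OPT(x)-\beta\Big)
\]
on $\mathcal A_F\times\Delta(F)$. It is bilinear, continuous in $\nu$ by continuity of $\cost(x,\cdot)$, and the simplex $\Delta(F)$ is finite-dimensional. Sion's minimax theorem therefore gives
\[
\min_{\nu}\max_p g=\max_p\min_\nu g\le 0,
\]
where the last inequality is hypothesis \eqref{eq:bayesian-guarantee} applied with $\cD=p$. Hence the set $K_F\subseteq\mathcal A$ of algorithms satisfying \eqref{eq:worst-case-guarantee} for all $x\in F$ is nonempty.

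\textbf{Compactness stage.} Topologize $\mathcal A$ as a subset of the full product $\prod_{x\in\cX}\mathcal P(\cY(x))$. By Tychonoff this product is compact, and $\mathcal A$ is closed in it, since each causality condition involves only two coordinates and is closed. Each $K_F$ is closed, because the map $\nu_x\mapsto\E_{\nu_x}[\cost(x,\cdot)]$ is continuous. The family $\{K_F\}$ also has the finite intersection property, since $K_{F_1}\cap K_{F_2}\supseteq K_{F_1\cup F_2}\neq\emptyset$. So $\bigcap_F K_F\neq\emptyset$, and any $\nu^\star$ in this intersection satisfies \eqref{eq:worst-case-guarantee} for every $x\in\cX$.

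\textbf{Main obstacle.} I expect the technical crux to be the measure-theoretic bookkeeping. First, $\mathcal A$ must be closed under weak limits even though $\cX$ may be uncountable. Second, the restrictions $\mathcal A_F$ must coincide with the set of causal tuples on $F$. This matters because the Bayesian hypothesis only supplies algorithms in $\mathcal A$, while minimax is cleanest on the compact causal set. I would handle this with the round-by-round disintegration already noted in the text, relying on the compact metric structure of $\cY_{\mathrm{step}}$.
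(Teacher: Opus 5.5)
Your proposal is correct and is essentially the paper's proof: minimax on each finite set of inputs, then the finite intersection property for the closed sets $K_F$ inside the compact set $\mathcal A$ (Prokhorov, Tychonoff, and closedness of the two-coordinate causality constraints). The extension/measurable-selection step you flag as the main obstacle is unnecessary: once $\mathcal A$ is compact and convex, $\mathcal A_F$ is automatically compact and convex as its image under the continuous linear restriction map. The paper sidesteps the step in the same way, pushing $\mathfrak A$ forward through the continuous affine excess-cost map to a compact convex set in $\R^k$ and applying von Neumann's theorem there instead of Sion's.
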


We prove this theorem in the remainder of the section. Let $\mathfrak A$ be the set of all randomized algorithms:
\begin{align*}
    \mathfrak A := \bigg\{ (\nu_x)_{x\in\cX} \in \prod_{x \in \cX} \mathcal{P}(\cY(x)) : \textrm{ \eqref{eq:causality} holds for all $x,x' \in \cX$ and $t \in [T]$}\bigg\}.
\end{align*}
A key property is that this set is compact. More precisely, for each $x \in \cX$, endow the set of distributions $\mathcal{P}(\cY(x))$ with the topology of weak convergence (i.e., weak-*-topology), and endow the space $\prod_{x \in \cX} \mathcal{P}(\cY(x))$ with the product topology. 

\begin{lemma} \label{lemma:Acompact}
    $\mathfrak A$ is a compact subset of $\prod_{x \in \cX} \mathcal{P}(\cY(x))$.
\end{lemma}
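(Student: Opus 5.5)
The plan is to obtain compactness of $\mathfrak A$ from Tychonoff's theorem: show that $\mathfrak A$ is a closed subset of a compact product space.

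The first step is compactness of each factor. For each $x\in\cX$, the set $\cY(x)$ is a compact metric space by assumption. By Prokhorov's theorem, or equivalently the Riesz representation theorem combined with Banach--Alaoglu, the space $\mathcal P(\cY(x))$ with the weak topology is compact (and metrizable). Tychonoff's theorem then makes $\prod_{x\in\cX}\mathcal P(\cY(x))$ compact in the product topology. This holds even if $\cX$ is infinite, since Tychonoff needs no countability.

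The second step, and the main one, is to show that $\mathfrak A$ is closed, i.e.\ that each constraint \eqref{eq:causality} cuts out a closed set. Fix $x,x'$ with $x_{\le t}=x'_{\le t}$ and fix $t$. The projection $\pi_t$ is continuous, so the pushforward map $\nu_x\mapsto(\pi_t)_\#\nu_x$ is continuous from $\mathcal P(\cY(x))$ to $\mathcal P(\cY_{\mathrm{step}}^t)$ with the weak topologies. To see this, note that for bounded continuous $g$ on $\cY_{\mathrm{step}}^t$ we have $\int g\,\d(\pi_t)_\#\nu=\int g\circ\pi_t\,\d\nu$, and $g\circ\pi_t$ is bounded and continuous on $\cY(x)$.

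One technical point is that $\cY(x)\subseteq\cY_{\mathrm{step}}^T$ is compact, so $\pi_t(\cY(x))$ is compact as well. Both pushforwards can therefore be viewed as measures on the compact metric space $\cY_{\mathrm{step}}^t$, which is Hausdorff. As a result, the set
\[
\bigl\{\nu:(\pi_t)_\#\nu_x=(\pi_t)_\#\nu_{x'}\bigr\}
\]
is the preimage of the diagonal of $\mathcal P(\cY_{\mathrm{step}}^t)^2$ under a continuous map, and it depends on only two coordinates of the product. The diagonal of a Hausdorff space is closed, and the coordinate projections of the product are continuous, so each such set is closed.

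Finally, $\mathfrak A$ is the intersection over all pairs $(x,x')$ and all $t$ of these closed sets, so it is closed. A closed subset of a compact space is compact, which proves the lemma.

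The only real obstacle is the care needed in the weak-topology continuity of the pushforward and in Hausdorffness. Both are standard once one notes that the weak topology on probability measures over a compact metric space is metrizable (for example by the Lévy--Prokhorov metric) and hence Hausdorff. Nonemptiness of $\mathfrak A$ is not required for this lemma.
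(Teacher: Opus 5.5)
Your proposal is correct and follows essentially the same route as the paper. Both use Prokhorov plus Tychonoff to make $\prod_{x}\mathcal P(\cY(x))$ compact. Both show each causality constraint is closed as the preimage of the closed diagonal of $\mathcal P(\cY_{\mathrm{step}}^t)^2$ under the continuous pushforward map, with continuity checked via $\int g\,\d(\pi_t)_\#\nu=\int g\circ\pi_t\,\d\nu$.
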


\begin{proof}
For each $t \in [T]$ and each pair of inputs $x, x' \in \mathcal{X}$ with $x_{\le t} = x'_{\le t}$, the set enforcing the online constraint \eqref{eq:causality}
$$C_{x,x',t} := \bigg\{ (\nu_y)_{y\in\cX} \in \prod_{y\in\cX}\mathcal P(\cY(y)) : (\pi_t)_\# \nu_x = (\pi_t)_\# \nu_{x'} \bigg\}.$$ So the set $\mathfrak A$ is just  $\bigcap_{x,x',t} C_{x,x',t}$, with $x,x',t$ ranging over the appropriate elements. 

For each $x$, since $\cY(x)$ is assumed to be a compact metric space, the space of probability measures $\mathcal{P}(\cY(x))$ is compact (by Prokhorov's Theorem, see e.g.,~\cite{Dudley2002}). Therefore, by Tychonoff's Theorem, the space $\prod_{x \in \cX} \mathcal{P}(\cY(x))$ is also compact. Thus, to prove the lemma it suffices to show that each of the sets $C_{x,x',t}$ is closed: this implies that the intersection $\bigcap_{x,x',t} C_{x,x',t}$ is a closed subset of a compact space (namely $\prod_{x \in \cX} \mathcal{P}(\cY(x))$), which is always compact.

So fix $x,x'\in\cX$ and $t\in[T]$ such that
$x_{\le t}=x'_{\le t}$. Define
\[
F_{x,x',t}:
\prod_{z\in\cX}\mathcal P(\cY(z))
\longrightarrow
\mathcal P(\cY_{\mathrm{step}}^t)
\times
\mathcal P(\cY_{\mathrm{step}}^t)
\]
by
\[
F_{x,x',t}\big((\nu_z)_{z\in\cX}\big)
:=
\big((\pi_t)_\#\nu_x,(\pi_t)_\#\nu_{x'}\big),
\]
as well as the diagonal 
\[
\Delta
:=
\{(\mu,\mu):
\mu\in\mathcal P(\cY_{\mathrm{step}}^t)\}.
\]
We can express the set of interest as $C_{x,x',t} = F^{-1}_{x,x',t}(\Delta)$. 

The map $F_{x,x',t}$ is continuous. Indeed, if
$\mu_n\to\mu$ weakly in $\mathcal P(\cY(x))$, then for every bounded
continuous function
$f:\cY_{\mathrm{step}}^t\to\mathbb R$,
\[
\int f\,d((\pi_t)_\#\mu_n)
=
\int (f\circ\pi_t)\,d\mu_n
\longrightarrow
\int (f\circ\pi_t)\,d\mu
=
\int f\,d((\pi_t)_\#\mu),
\]
since $f \circ \pi_t$ is itself a bounded continuous function. 

Moreover, since $\cY_{\mathrm{step}}^t$ is a metric space, $\mathcal P(\cY_{\mathrm{step}}^t)$ is Hausdorff under the
weak topology \cite{Kallenberg2002}, so its diagonal $\Delta$ is closed. 

Combining these two observations, we get that the pre-image $C_{x,x',t} = F^{-1}_{x,x',t}(\Delta)$ is closed, as desired. This concludes the proof of the compactness of $\mathfrak A$.  
\end{proof}

The set $\mathfrak A$ is nonempty by the hypothesis of
\Cref{thm:onlineMinimaxFull} applied to any point-mass prior, and it
is convex: if $\nu,\nu'\in\mathfrak A$ and $\lambda\in[0,1]$, then
$\lambda\nu+(1-\lambda)\nu'\in\mathfrak A$.

We first show that the minimax result of \Cref{thm:onlineMinimaxFull} holds for a finite collection of inputs. For that, fix a finite set of inputs $\cX' =\{x^1,\ldots,x^k\}\subseteq\cX$.
For each input $x\in\cX$ and algorithm $\nu = (\nu_x)_{x \in \cX} \in\mathfrak A$, define the excess cost
\[
    g_x(\nu)
    :=
    \E_{Y\sim\nu_x}[\cost(x,Y)]
    -\alpha\OPT(x)-\beta.
\]
Since $\cost(x,\cdot)$ is continuous on $\cY(x)$ (hence also bounded, since $\cY(x)$ is compact), the function $g_x$ is continuous in $\nu$. It is also affine in $\nu$.

Now consider the achievable excess-cost vectors over the finite set of instances $\cX'$:
\[
    V_{\cX'}
    :=
    \left\{
        \bigl(g_{x^1}(\nu),\ldots,g_{x^k}(\nu)\bigr):
        \nu\in\mathfrak A
    \right\}
    \subseteq\mathbb R^k.
\]

From \Cref{lemma:Acompact}, $\mathfrak A$ is compact, and it can be readily verified that it is convex as well; since $g_{x^j}$ is continuous and affine, these imply that $V_{\cX'}$ is a compact convex subset of $\mathbb R^k$. 

Therefore, we can apply the standard finite-dimensional minimax theorem (von Neumann's minimax theorem; see e.g., \cite{Rockafellar1970}) to the bilinear payoff $\langle q,v\rangle$ on the compact convex sets $V_{\cX'}$ and the simplex $\Delta_k:=\{q\in\mathbb R_+^k:\sum_{j=1}^k q_j=1\}$ to get
\begin{align}
    \min_{v\in V_{\cX'}}\max_{q\in\Delta_k}\langle q,v\rangle
    =
    \max_{q\in\Delta_k}\min_{v\in V_{\cX'}}\langle q,v\rangle\,.
    \label{eq:finite-minimax}
\end{align}

Fix any $q\in\Delta_k$ and regard it as the finitely supported distribution that assigns probability $q_j$ to input $x^j$. By the hypothesis in our minimax theorem, namely \eqref{eq:bayesian-guarantee}, there exists an algorithm $\nu^q\in\mathfrak A$ such that
\[
    \sum_{j=1}^k q_j g_{x^j}(\nu^q)
    \le 0\,.
\]
Equivalently, if $v(\nu^q)\in V_{\cX'}$ is the corresponding excess-cost vector, then
    $\min_{v\in V_{\cX'}}\langle q,v\rangle\le 0$.
This holds for every $q\in\Delta_k$, so the right-hand side of \eqref{eq:finite-minimax} is at most zero, and the left-hand side gives that there exists $v^{\cX'}\in V_{\cX'}$ such that
    $\max_{q\in\Delta_k}\langle q,v^{\cX'}\rangle\le 0$.
Therefore, every coordinate of the excess-cost vector $v^{\cX'}$ is nonpositive, which equivalently means that there is an algorithm $\nu^{\cX'}\in\mathfrak A$ satisfying
\begin{align}
    g_x(\nu^{\cX'})\le 0 ~\equiv~ \E_{Y\sim\nu^{\cX'}_x}[\cost(x,Y)] \le \alpha\OPT(x) + \beta 
    \qquad\text{for every }x\in \cX'.
    \label{eq:finite-set-conclusion}
\end{align}
This proves the minimax theorem \Cref{thm:onlineMinimaxFull} for the finite set of inputs $\cX'$. 

\medskip
We now extend this result to the possibly infinite set of instances $\cX$. For each $x\in\cX$, define the set of online algorithms with nonpositive excess-cost on input $x$:
\[
    K_x
    :=
    \{\nu\in\mathfrak A:g_x(\nu)\le 0\}.
\]
 The finite-input conclusion \eqref{eq:finite-set-conclusion} shows that for every finite subset of inputs $\cX' \subseteq\cX$, we have
    $\bigcap_{x\in \cX'}K_x\neq\emptyset.$
Moreover, each $K_x$ is closed, because $g_x$ is continuous and $K_x = g_x^{-1}((-\infty, 0])$.
Since $\mathfrak A$ is compact, a family of closed subsets of $\mathfrak A$ for which every finite subfamily has nonempty intersection must itself have nonempty intersection. Therefore we obtain 
$    \bigcap_{x\in\cX}K_x\neq\emptyset.$
Any $\nu^\star$ in this intersection satisfies $g_x(\nu^\star)\le 0$ for every input $x \in \cX$, or equivalently, $\E_{Y\sim\nu^\star_x}[\cost(x,Y)] \le \alpha\OPT(x) + \beta$. This proves \Cref{thm:onlineMinimaxFull}.

\section{Integral Online Load Balancing} \label[appendix]{sec:integLoadBalancing}
    The principle of posterior matching can also be applied to integral problems, namely online load balancing. To illustrate this, let's consider the online load balancing problem as stated in \Cref{sec:online-load-balancing} but with integral decisions. That is, we require not only $x_t \in \Delta_m$ for all $t$ but also $x_t \in \{0, 1\}^m$. Note that now setting $X\up{t}_t = \E[X^*_t \mid \cF_t]$ for each arrival $t$ as before will generally not be a feasible choice. Instead, we draw $\widehat X_t$ from the same distribution as $X^*_t$, conditioning on $\cF_t$. We again use the standard guess-and-double trick to assume that we have an optimum estimate $\widehat{\OPT}$ such that $\Opt \leq \widehat{\OPT} \leq 2\Opt$ for every scenario, where $\Opt$ is the hindsight optimum objective value.

We will show that this algorithm is $O(p)$-competitive if the objective function is a $p$-supermodular norm \cite{KMS-STOC24}, meaning that 
\[
\| u + v + w \|^p - \| u + v \|^p \geq \| u + w \|^p - \| u \|^p
\]
for any non-negative vectors $u, v, w$. In particular, every $\ell_p$-norm is $p$-supermodular and every symmetric norm can be $O(\log m)$-approximated by an $O(\log m)$-supermodular norm \cite{KMS-STOC24}.

\begin{theorem}
    Let $\|\cdot\|$ be a monotone $p$-supermodular norm on $\mathbb{R}_+^m$. Then, for the integral online load balancing problem with norm $\|\cdot\|$ objective, the posterior-matching algorithm is $\frac{2 p}{\ln 2}$-competitive under correlated stochastic arrivals.
\end{theorem}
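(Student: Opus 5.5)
The plan is a potential argument on the $p$-th power of the norm, with the offline optimum's load folded into the base vector by $p$-supermodularity. Write $L^{(t)} := \sum_{s\le t} \widehat X_s \circ p_s$ for the algorithm's load after $t$ jobs, so $L^\Alg = L^{(T)}$. Write $o_t := X^*_t\circ p_t$ for the optimum's load contribution from job $t$, so $L^\Opt = \sum_t o_t$. Telescoping gives
\[
\|L^\Alg\|^p \;=\; \sum_{t\in[T]} \Big(\|L^{(t-1)} + \widehat X_t\circ p_t\|^p - \|L^{(t-1)}\|^p\Big).
\]

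\textbf{Step 1 (exchange the sample for the optimum).} Condition on $\cF_t$ together with the algorithm's internal randomness used before time $t$. Under this conditioning, $L^{(t-1)}$ and $p_t$ are fixed. The sample $\widehat X_t$ is drawn from the conditional law of $X^*_t$ given $\cF_t$, independently of the past sampling randomness. Also, $X^*_t$ is independent of that past randomness given $\cF_t$. Hence $\widehat X_t\circ p_t$ and $o_t$ have the same conditional distribution, and the expected $t$-th increment equals
\[
\E\big[\|L^{(t-1)} + o_t\|^p - \|L^{(t-1)}\|^p\big].
\]
This is the integral analogue of \Cref{lem:dual-transfer}, and it is the step I would state most carefully: one must set up the filtration so that the algorithm's coins are conditionally independent of the future inputs.

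\textbf{Step 2 (supermodular shift).} $p$-supermodularity says that marginal increases of $\|\cdot\|^p$ grow with the base. Since $L^{(t-1)} \le L^\Alg + \sum_{s<t} o_s$ coordinatewise, we get
\[
\|L^{(t-1)}+o_t\|^p - \|L^{(t-1)}\|^p \;\le\; \Big\|L^\Alg + \tsum_{s\le t} o_s\Big\|^p - \Big\|L^\Alg + \tsum_{s<t} o_s\Big\|^p .
\]
Summing over $t$ telescopes to $\|L^\Alg + L^\Opt\|^p - \|L^\Alg\|^p$. Combined with Step 1, this yields
\[
2\,\E\|L^\Alg\|^p \;\le\; \E\|L^\Alg + L^\Opt\|^p \;\le\; \E\big(\|L^\Alg\| + \widehat{\OPT}\big)^p ,
\]
where the last inequality uses the triangle inequality and the bounded-$\Opt$ guarantee $\|L^\Opt\|\le\widehat{\OPT}$.

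\textbf{Step 3 (solve the recursion).} Let $a := (\E\|L^\Alg\|^p)^{1/p}$. Minkowski's inequality in $L^p$ gives $2^{1/p}a \le a + \widehat{\OPT}$, so
\[
a \;\le\; \frac{\widehat{\OPT}}{2^{1/p}-1} \;\le\; \frac{p}{\ln 2}\,\widehat{\OPT},
\]
using $2^{1/p}-1 \ge \ln 2/p$. By Jensen's inequality, $\E\|L^\Alg\| \le a$. Together with $\widehat{\OPT} \le 2\Opt$, this gives the ratio $2p/\ln 2$.

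The main obstacle is Step 1, specifically justifying the distributional exchange rigorously. Step 2 also needs care: the base vectors must be comparable coordinatewise, which holds because all loads are nonnegative.
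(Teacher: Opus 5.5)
Your proposal is correct and follows the paper's proof essentially step for step: telescoping $\|L^{\Alg}\|^p$, swapping the sampled increment for the optimum's increment conditionally on the past, shifting the base to $L^{\Alg} + \sum_{s<t} o_s$ via $p$-supermodularity, and finishing with the triangle inequality, Minkowski, Jensen, and $2^{1/p}-1 \ge \ln 2/p$. Your Step 1 is slightly more careful than the paper's: the paper conditions only on $\cF_t$ and treats $L^{\Alg,t-1}$ as fixed, whereas you also condition on the algorithm's earlier coins and note that $X^*_t$ is conditionally independent of them given $\cF_t$.
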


\begin{proof}
    We let $L^{\Alg, t}$ and $L^{\Opt, t}$ be the load vectors of the posterior-matching algorithm and the hindsight optimum up to time $t$, respectively, i.e. for each machine $i$,
    \[
    L_i^{\mathrm{Alg},t}=\sum_{s\le t}\widehat X_{si}p_{si} \qquad \text{and} \qquad
L_i^{\mathrm{OPT},t}=\sum_{s\le t}X_{si}^\star p_{si}.
    \]
    Furthermore, let $Y^{\Alg, t} = L^{\Alg, t} - L^{\Alg, t-1}$ and $Y^{\Opt, t} = L^{\Opt, t} - L^{\Opt, t-1}$ be the respective load increases in step $t$. By these definitions, we can write
    \begin{equation}
    \E\left[ \| L^{\Alg, T} \|^p \right] = \sum_{t \in [T]} \E\left[ \| L^{\Alg, t-1} + Y^{\Alg, t} \|^p - \| L^{\Alg, t-1} \|^p  \right].
    \label{eq:integralloadbalancing:telescoping}
    \end{equation}
    Conditioning on $\cF_t$, so the arrivals before step $t$, $Y^{\Alg, t}$ and $Y^{\Opt, t}$ are identically distributed. As $L^{\Alg, t-1}$ is fixed, this implies
    \[
    \E\left[ \| L^{\Alg, t-1} + Y^{\Alg, t} \|^p - \| L^{\Alg, t-1} \|^p  \mid \cF_t \right] = \E\left[ \| L^{\Alg, t-1} + Y^{\Opt, t} \|^p - \| L^{\Alg, t-1} \|^p  \mid \cF_t \right].
    \]
    Taking the expectation over $\cF_t$, we obtain
    \[
    \E\left[ \| L^{\Alg, t-1} + Y^{\Alg, t} \|^p - \| L^{\Alg, t-1} \|^p  \right] = \E\left[ \| L^{\Alg, t-1} + Y^{\Opt, t} \|^p - \| L^{\Alg, t-1} \|^p  \right].
    \]
    Furthermore, we assumed the norm to be $p$-supermodular, implying
    \begin{align*}
    \E\left[ \| L^{\Alg, t-1} + Y^{\Opt, t} \|^p - \| L^{\Alg, t-1} \|^p  \right] & \leq \E\left[ \| L^{\Alg, T} + L^{\Opt, t-1} + Y^{\Opt, t} \|^p - \| L^{\Alg, T} + L^{\Opt, t-1} \|^p  \right] \\
    & = \E\left[ \| L^{\Alg, T} + L^{\Opt, t} \|^p - \| L^{\Alg, T} + L^{\Opt, t-1} \|^p  \right].
    \end{align*}
    Plugging this bound back into \eqref{eq:integralloadbalancing:telescoping}, we obtain
    \begin{align*}
    \E\left[ \| L^{\Alg, T} \|^p \right] & \leq \sum_{t \in [T]} \E\left[ \| L^{\Alg, T} + L^{\Opt, t} \|^p - \| L^{\Alg, T} + L^{\Opt, t-1} \|^p \right] \\
    & = \E\left[ \| L^{\Alg, T} + L^{\Opt, T} \|^p - \| L^{\Alg, T} \|^p \right],
    \end{align*}
    which implies by triangle inequality
    \begin{align*}
    2 \E\left[ \| L^{\Alg, T} \|^p \right] & \leq \E\left[ ( \| L^{\Alg, T} \| + \| L^{\Opt, T} \|)^p \right].
    \end{align*}
    
    Using Minkowski's inequality,
    \[
        2^{1/p} (\E \|L^{\Alg, T}\|^p)^{1/p} \leq  \Big(\E\Big[ \big(\|L^{\Alg, T}\| + \|L^{\Opt, T}\|\big)^p \Big]\Big)^{1/p} \leq (\E[\|L^{\Alg, T}\|^p])^{1/p} + (\E[\|L^{\Opt, T}\|^p)^{1/p} .
    \]
Since Jensen's inequality gives $\E \|L^{\Alg, T}\| \leq (\E \|L^{\Alg, T}\|^p)^{1/p}$, rearranging implies
\[
\E \|L^{\Alg, T}\| ~\leq~ (\E \|L^{\Alg, T}\|^p)^{1/p} ~\leq~ \frac{1}{2^{1/p}-1}(\E[\|L^{\Opt, T}\|^p)^{1/p} ~\leq~ \frac{p}{\ln 2} \widehat{\Opt} ~\leq~ \frac{2 p}{\ln 2} \Opt,
\]
 where the last inequality uses $\|L^{\Opt, T}\|=\OPT \leq \widehat{\Opt} \leq 2\Opt$ almost surely.
\end{proof}

\section{Missing Proofs}

\subsection{Proof of Lemma \ref{lemma:Cgood}} \label{sec:proofCGood}

\subsubsection{Weighted Orlicz Norms} \label{sec:c-good-Orlicz}

We prove Item 1 of Lemma \ref{lemma:Cgood}, namely consider a weighted Orlicz norm $\|\cdot\| = \|\cdot\|_{\Psi,w}$; we will show that it is 2-good. For that, let $X\up{1},\ldots,X\up{T} \in [0,1]^m$ be a martingale such that $\|X\up{T}\| \leq B$, and fix $\e \in (0,1]$. Let 
$M := (\max_{t \in [T]} X\up{t}_i)_{i \in [m]}$ be the coordinate-wise maximum, and let $\epsilon \in (0,1]$. For any value $v \in [0,1]$, we can express $(v - \e)^+ = \int_\e^1 \mathbf{1}_{v \ge \delta}\, \d \delta$, so applying this to each coordinate of the vector $M$ we have, in every scenario, $(M-\e \ones)^+ = \int_\e^1 \ones_{S_\delta}\,\d\delta$, where $S_\delta = \{i \in [m] : M_i \geq \delta\}$ is the set of coordinates where the vector $M$ is at least value $\delta$. Using convexity of norms and Fubini-Tonelli to exchange the order of integration, we get
\begin{align}
    \E\|(M - \epsilon \mathbf{1})^+\| \leq \int_\epsilon^1 \E \|\mathbf{1}_{S_\delta}\|\, \d\delta\, \label{eq:orlicz}
\end{align}
    Our goal will be to show $\E \|\mathbf{1}_{S_\delta}\| \leq \frac{2B}{\delta}$, after which the 2-goodness of $\|\cdot\|$ will directly follow. 

    To this end, recalling the definition of the weighted Orlicz norm $\|x\| = \inf \{\lambda > 0 : \sum_i w_i \cdot \Psi(x_i/\lambda) \le 1\}$, we note that for any set $U \subseteq [m]$ the indicator $\ones_U$ has norm $\|\ones_U\| = \frac{1}{\Psi^{-1}(1/w(U))}$, where $w(U) = \sum_{i \in U} w_i$ and $\Psi^{-1}$ is the generalized inverse function
    \[
    \Psi^{-1}(y) := \sup\{x \geq 0 : \Psi(x) \leq y\};
    \]
    this is because setting $\lambda = \frac{1}{\Psi^{-1}(1/w(U))}$ gives $\Psi((\ones_U)_i /\lambda) = 0$ if $i \notin U$ and we have $\Psi((\ones_U)_i /\lambda) = \Psi(\Psi^{-1}(1/w(U))) \le \frac{1}{w(U)}$, so $\sum_{i \in U} w_i \cdot \Psi(1/\lambda) \le \sum_{i \in U} w_i \cdot \frac{1}{w(U)} = 1$ (and by definition of $\Psi^{-1}$, any decrease in this $\lambda$ makes the sum $> 1$). 
    
    Thus, using this representation with $U = S_\delta$, we get $\|\ones_{S_\delta}\| = \frac{1}{\Psi^{-1}(1/w(S_\delta))}$. Now we claim that $\E\|\mathbf{1}_{S_\delta}\| \leq \frac{2}{\Psi^{-1}(1/\E w(S_\delta))}$, i.e., this representation is almost concave in $w(S_{\delta})$. To see this, note that since $\Psi$ is convex, we have that $\Psi^{-1}$ is concave, and hence the value of $\Psi^{-1}(y)/y$ is decreasing in $y$. Letting $\mu = \E w(S_\delta)$, we can compute
    \begin{align}
    \E\|\mathbf{1}_{S_\delta}\| 
    &= 
    \E\left[\frac{1}{\Psi^{-1}(1/w(S_\delta))}\right] \notag\\
    &= 
    \E\left[\ind_{w(S_\delta) \leq \mu} \cdot \frac{1}{\Psi^{-1}(1/w(S_\delta))}\right]
    +
    \E\left[\ind_{w(S_\delta) > \mu} \cdot \frac{1}{\Psi^{-1}(1/w(S_\delta))}\right] \notag\\
    &\leq
    \frac{1}{\Psi^{-1}(1/\mu)} + \E\left[\ind_{w(S_\delta) > \mu} \cdot w(S_\delta) \cdot \frac{1/\mu}{\Psi^{-1}(1/\mu)}\right] \notag\\
    &=
    \frac{1}{\Psi^{-1}(1/\mu)} + \E\left[\ind_{w(S_\delta) > \mu} \cdot w(S_\delta)\right] \cdot \frac{1/\mu}{\Psi^{-1}(1/\mu)}
    ~\le~ \frac{2}{\Psi^{-1}(1/\mu)}. \label{eq:indOrlicz}
    \end{align}

    With this claim, we can focus on bounding $\E\, w(S_\delta) = \sum_i w_i\Pr[M_i \geq \delta]$. For each $i$, Recall $B$ is an upper bound on $\|X\up{T}\|$. Doob's maximal inequality on the submartingale $(\Psi(X\up{t}_i/B))_{t \in [T]}$ gives
    \[
    \Pr[M_i \geq \delta] \leq \frac{\E \Psi(X\up{T}_i/B)}{\Psi(\delta/B)}.
    \]
    Hence, we compute
    \[
    \E\, w(S_\delta) = \sum_i w_i\,\Pr[M_i \geq \delta] \leq \frac{\sum_i w_i \cdot \E \Psi(X\up{T}_i/B)}{\Psi(\delta/B)} \leq \frac{1}{\Psi(\delta/B)},
    \]
    where the inequality $\sum_i w_i \cdot \E \Psi(X\up{T}_i/B) \leq 1$ follows from  $\|X\up{T}\| \le B$. Applying this bound on \eqref{eq:indOrlicz} we finally obtain the desired bound on $\E \|\ind_{S_\delta}\|$:
    \[
    \E\|\mathbf{1}_{S_\delta}\| ~\leq~ \frac{2}{\Psi^{-1}(1/\E w(S_\delta))} ~\leq~ \frac{2}{\Psi^{-1}(\Psi(\delta/B))} ~\leq~ \frac{2B}{\delta}.
    \]

Employing this bound on \eqref{eq:orlicz} and performing the integration gives
 \[   \E\|(M - \epsilon \mathbf{1})^+\| ~\leq~ \int_\epsilon^1 \frac{2B}{\delta}\, \d\delta ~\leq~ 2B \log(1/\epsilon)\,.
 \]
 This proves that the norm $\|\cdot\|$ is 2-good, and concludes the proof of this item of Lemma \ref{lemma:Cgood}.


\subsubsection{Lov\'asz Extension Norms}

We now prove Item 2 of Lemma \ref{lemma:Cgood}, namely let $\|x\| = L_f(x) = \int_0^\infty f(\{i \in [m] : x_i \ge \lambda\})\,\d \lambda$ be a Lov\'asz extension norm for some  monotone submodular function $f$; we will show that $\|\cdot\|$ is 1-good. 

As in the previous section, let $M := (\max_{t \in [T]} X\up{t}_i)_{i \in [m]}$ and let $S_{\lambda} := \{i \in [m] : M_i \ge \lambda\}$ be the set of coordinates where the maximum is at least $\lambda$. The goodness of the norm $\|\cdot\|$ will follow from the following extension of Doob's  maximal inequality for vectors.

\begin{lemma} \label{lemma:DoobLovasz}
    It holds that
    \[
    \lambda \cdot \E f(S_\lambda) \leq \E\|X\up{T}\|.
    \]
\end{lemma}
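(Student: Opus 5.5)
The plan is a vector version of the optional-stopping proof of Doob's inequality. I would encode $f(S_\lambda)$ as a linear functional $\langle y,\cdot\rangle$ whose coefficient on coordinate $i$ is fixed at the moment coordinate $i$ first crosses $\lambda$. Transporting the martingale to time $T$ then leaves a vector in the polymatroid of $f$ paired with $X\up{T}$, and this pairing is at most $L_f(X\up{T})=\|X\up{T}\|$.

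\textbf{Step 1 (hitting times and ordering).} For each $i$, let $\tau_i:=\min\{t: X\up{t}_i\ge\lambda\}$, with $\tau_i=\infty$ if no such $t\le T$ exists. Then $S_\lambda=\{i:\tau_i\le T\}$. Order the coordinates of $S_\lambda$ by increasing $\tau_i$, breaking ties by index. Define $A_i:=\{j:\tau_j<\tau_i\text{ or }(\tau_j=\tau_i,\ j<i)\}$. On the event $\{\tau_i\le T\}$, the set $A_i$ and the weight
\[
y_i:=f(A_i\cup\{i\})-f(A_i)\ \ge 0
\]
are $\cF_{\tau_i}$-measurable. This holds because whether $\tau_j\le\tau_i$ is decided by time $\tau_i$. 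Set $y_i:=0$ for $i\notin S_\lambda$.

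\textbf{Step 2 (greedy identity and polymatroid membership).} Since $f(\emptyset)=0$, telescoping along the greedy order gives $f(S_\lambda)=\sum_{i\in S_\lambda}y_i$. Moreover, $y$ is the greedy vertex of $f$ restricted to $S_\lambda$ and is zero elsewhere. By submodularity and monotonicity, $y\ge 0$ and $y(U)\le f(U)$ for every $U$, so $y$ lies in the polymatroid $P(f)$. By the standard LP/greedy characterization of the Lov\'asz extension of a monotone submodular $f$, we have $L_f(x)=\max\{\langle z,x\rangle: z\in P(f)\}$ for $x\ge 0$. Hence $\langle y,x\rangle\le\|x\|$ for every $x\in\Rp^m$.

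\textbf{Step 3 (optional stopping).} On $\{\tau_i\le T\}$ we have $X\up{\tau_i}_i\ge\lambda$. Therefore
\[
\lambda f(S_\lambda)\le\sum_i y_i\,\ind[\tau_i\le T]\,X\up{\tau_i}_i .
\]
Each factor $y_i\ind[\tau_i\le T]$ is bounded and $\cF_{\tau_i}$-measurable. Optional stopping for the bounded stopping time $\tau_i\wedge T$ gives $X\up{\tau_i}_i=\E[X\up{T}_i\mid\cF_{\tau_i}]$ on this event. Hence
\[
\E\big[y_i\ind[\tau_i\le T]X\up{\tau_i}_i\big]=\E\big[y_i\ind[\tau_i\le T]X\up{T}_i\big].
\]
Summing over $i$ yields $\lambda\,\E f(S_\lambda)\le\E\langle y,X\up{T}\rangle\le\E\|X\up{T}\|$ by Step 2.

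The main obstacle is Step 1's measurability bookkeeping. The greedy weights must be adapted at each coordinate's own hitting time, so ties and the dependence of $A_i$ on other coordinates' hitting times need care. Equivalently, one can use the unified stopping time $\sigma$ indexing the sorted crossing events and condition on $\cF_{\tau_i}$ for the appropriate one. Everything else follows from standard polymatroid facts.
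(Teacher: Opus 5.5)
Your proposal is correct and follows essentially the same argument as the paper: order $S_\lambda$ by first hitting times of $\lambda$, take the greedy marginals $f(i\mid A_i)$, which are determined at time $\tau_i$, and use the martingale property to replace $X^{(\tau_i)}_i\ge\lambda$ by $X^{(T)}_i$. The only difference is that you invoke the polymatroid bound $\langle y,x\rangle\le L_f(x)$ for $y\in P(f)$, whereas the paper proves the equivalent deterministic increment bound $\|x_{V\cup\{i\}}\|-\|x_V\|\ge x_i\, f(i\mid V)$ and telescopes $\|X^{(T)}_{S_\lambda}\|$; your per-coordinate bookkeeping with $\ind[\tau_i\le T]$ is, if anything, a cleaner way to handle the random ordering and random index set.
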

\begin{proof}
    For a vector $x \in \R^m$ we use $x_U$ to denote the vector obtained from $x$ by zeroing out all coordinates outside $U$. Also, for a set $U \in [m]$ and coordinate $i \in [m]$ we use the marginal value notation $f(i \mid U) := f(U \cup \{i\}) - f(U)$.

    We will prove the stronger statement that $\lambda \cdot \E f(S_\lambda) \leq \E\|X\up{T}_{S_\lambda}\|$. 
    We first note the following deterministic property of the norm $\|\cdot\|$: for every vector $x \in \Rp^m$, set $V \subseteq [m]$ and $i \in [m] \setminus V$, we have $
    \|x_{V \cup \{i\}}\| - \|x_V\| \ge x_i \cdot f(i \mid V)$:
    \begin{align*}
    \|x_{V \cup \{i\}}\| - \|x_V\| &= \int_0^\infty \Big[f(\{j \in V \cup \{i\} : x_j \ge \lambda\}) - f(\{j \in V : x_j \ge \lambda\}) \Big]\,\d \lambda \notag\\
    & = \int_0^\infty \ind_{x_i \ge \lambda} \cdot f(i \mid \{j \in V : x_j \ge \lambda\})\,\d \lambda \notag\\
    & \ge \int_0^\infty \ind_{x_i \ge \lambda} \cdot f(i \mid V)\,\d \lambda = x_i \cdot f(i \mid V),
    \end{align*}
    where the inequality uses submodularity of $f$.    Now let $i_1, \dots, i_k$ be the elements of $S_\lambda$ in non-decreasing order of the times $t_1 \leq \dots \leq t_k$ at which each respective coordinate $X\up{t}_{i_\ell}$ first hits value at least $\lambda$. Applying the previous claim and taking expectations we get for any $\ell$ 
    \begin{align*}
    \E \|X\up{T}_{i_1,\ldots,i_\ell}\| - \E\|X\up{T}_{i_1,\ldots,i_{\ell-1}}\| \ge \E\Big[X\up{T}_{i_\ell} \cdot f(i_\ell \mid i_1,\ldots,i_{\ell-1}) \Big] & = \E\Big[ \E\Big[X\up{T}_{i_\ell} \cdot f(i_\ell \mid i_1,\ldots,i_{\ell-1}) \,\Big|\, \cF_{t_\ell}\Big] \Big] \\
    &= \E\Big[ \E\Big[X\up{T}_{i_\ell}  \,\Big|\, \cF_{t_\ell} \Big] \cdot f(i_\ell \mid i_1,\ldots,i_{\ell-1}) \Big] \\
    & = \E\Big[ X\up{t_\ell}_{i_\ell} \cdot f(i_\ell \mid i_1,\ldots,i_{\ell-1})\Big] \\
    & \ge \lambda \cdot \E  f(i_\ell \mid i_1,\ldots,i_{\ell-1}),
    \end{align*}
    where the second equation follows from the fact that $\cF_{t_\ell}$ determines the set $\{i_1,\ldots,i_{\ell - 1}\}$, and the last equation from the fact $X\up{t}$ is a martingale. Adding up over all $\ell$ gives
\[
\E\|X^{(T)}_{S_\lambda}\|
=
\E\sum_{\ell=1}^k
\left(
\|X^{(T)}_{\{i_1,\ldots,i_\ell\}}\|
-
\|X^{(T)}_{\{i_1,\ldots,i_{\ell-1}\}}\|
\right)
\ge
\lambda \E f(S_\lambda).
\]
Since the norm is monotone, \(\E\|X^{(T)}_{S_\lambda}\|\le \E\|X^{(T)}\|\), proving the lemma.
\end{proof}

Using this, we can show that the norm $\|\cdot\|$ is $1$-good: for any $\epsilon \in (0,1]$, we have by the definition of the norm
    \[
    \left\|(M - \epsilon\mathbf{1})^+_{i \in [m]}\right\| = \int_0^1 f\Big(\{i : (M_i - \e)^+ \ge \lambda\}\Big)\, \d\lambda \,= \int_\e^1 f(S_\delta)\,\d \delta \,\stackrel{\text{Lem } \ref{lemma:DoobLovasz}}{\leq} \int_\e^1 \frac{\E\|X\up{T}\|}{\delta}\, \d\delta,
    \]
    where we used the change of variables $\delta = \lambda + \e$. Integrating the right-hand side gives $\log(1/\epsilon) \cdot \E\|X\up{T}\| \leq \log(1/\epsilon) \cdot B$. This proves that $\|\cdot\|$ is 1-good as desired, giving Item 2 of Lemma \ref{lemma:Cgood}.

\subsubsection{Monotone Symmetric Norms}

We now prove the last item of Lemma \ref{lemma:Cgood}, namely consider that a monotone symmetric norms are $2(\log m+1)$-good. This relies on the following reduction to ordered norms. Recall that an $\|\cdot\|'$ is an ordered norm over $\Rp^m$ if there are non-negative weights $w_1,\ldots,w_m$ such that for every $x \in \Rp^m$ its norm is $\|x\|' = \sum_i w_i x_i^{\downarrow}$, where $x^{\downarrow}$ denotes the vector $x$ with coordinates sorted in non-increasing order. 

\begin{fact}[\cite{PRS-APPROX23}]\label{fact:approx-sym-norms}
    If $\|\cdot\|$ is a symmetric norm on $\Rp^m$, then there is an ordered norm $\|\cdot\|'$ such that, for any $x \in \Rp^m$, we have
    \[
    \|x\| \leq \|x\|'\leq 2(\log m + 1) \cdot \|x\|.
    \]
\end{fact}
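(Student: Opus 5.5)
We plan to build the ordered norm explicitly from the values of $\|\cdot\|$ on indicator vectors. Set $a_k := \|\mathbf{1}_{[k]}\|$ for $k=0,1,\ldots,m$, with $a_0 = 0$; by symmetry, $a_k$ is the norm of any $0/1$ vector with $k$ ones. Define the weights $w_k := a_k - a_{k-1}$ and the candidate $\|x\|' := \sum_k w_k x_k^{\downarrow}$. The proof has three steps.

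\textbf{Step 1: $\|\cdot\|'$ is an ordered norm.} We need $w_k \ge 0$ and $w_k$ non-increasing.
\begin{itemize}
\item Non-negativity follows from monotonicity of $\|\cdot\|$.
\item For the ordering, it suffices that $k \mapsto a_k$ is concave, i.e.\ $a_{k-1} + a_{k+1} \le 2a_k$.
\item To see this, note that $\mathbf{1}_{[k-1]} + \mathbf{1}_{[k+1]}$ has entries $2$ on $[k-1]$ and $1$ at positions $k$ and $k+1$. It is therefore the average of $2\cdot\mathbf{1}_{[k]}$ and the vector obtained from $2\cdot\mathbf{1}_{[k]}$ by swapping coordinates $k$ and $k+1$.
\item Symmetry gives both of these vectors norm $2a_k$, and the triangle inequality then yields $\|\mathbf{1}_{[k-1]} + \mathbf{1}_{[k+1]}\| \le 2a_k$.
\item Monotonicity of the norm bounds the left side from below by the norm of the larger of its two summands. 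Since this is only a single term, we prefer to bound $a_{k-1} + a_{k+1}$ by this quantity directly. We can do so by the layer-cake identity used elsewhere in the paper for ordered and Lov\'asz norms: $\|x\|' = \sum_k (x^{\downarrow}_k - x^{\downarrow}_{k+1})\, a_k$.
\end{itemize}
If this concavity step gives trouble, we will fall back on the classical fact that the fundamental function $k \mapsto \|\mathbf{1}_{[k]}\|$ of a symmetric norm is quasi-concave. In that case we replace $a_k$ by its least concave majorant $\hat a_k$, which satisfies $a_k \le \hat a_k \le 2a_k$, and absorb the resulting factor $2$ in Step 3.

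\textbf{Step 2: lower bound $\|x\| \le \|x\|'$.} Sort $x$, and by symmetry assume $x = x^{\downarrow}$. Write
\[
x = \sum_{k=1}^m (x_k - x_{k+1})\,\mathbf{1}_{[k]}, \qquad x_{m+1} := 0,
\]
which is a non-negative combination. The triangle inequality gives $\|x\| \le \sum_k (x_k - x_{k+1})\, a_k$, and Abel summation turns the right side into $\sum_k w_k x_k = \|x\|'$.

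\textbf{Step 3: upper bound $\|x\|' \le 2(\log m + 1)\|x\|$.} Split $\sum_k (x_k - x_{k+1})\, a_k$ into dyadic blocks $k \in [2^j, 2^{j+1})$ for $j = 0, \ldots, \lfloor \log m \rfloor$.
\begin{itemize}
\item On block $j$, each $a_k \le a_{2^{j+1}} \le 2a_{2^j}$. The last inequality holds because $\mathbf{1}_{[2k]}$ is a sum of two permuted copies of $\mathbf{1}_{[k]}$.
\item The increments $x_k - x_{k+1}$ over the block telescope to at most $x_{2^j}$.
\item Hence block $j$ contributes at most $2 a_{2^j} x_{2^j} = 2\|x_{2^j}\mathbf{1}_{[2^j]}\| \le 2\|x\|$, using monotonicity and $x \ge x_{2^j}\mathbf{1}_{[2^j]}$.
\item Summing over the at most $\log m + 1$ blocks gives the claimed bound.
\end{itemize}

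\textbf{Main obstacle.} I expect the only delicate point to be Step 1, namely verifying that the weights $w_k$ are non-increasing so that $\|\cdot\|'$ is genuinely an ordered norm. The two inequalities in Steps 2 and 3 are routine once the layer-cake identity $\|x\|' = \sum_k (x^{\downarrow}_k - x^{\downarrow}_{k+1})\, a_k$ is in place.
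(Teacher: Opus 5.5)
Your Step 1 rests on a false claim. For a monotone symmetric norm, $k\mapsto a_k=\|\mathbf{1}_{[k]}\|$ need not be concave. Take $\|x\|=\max\{2\|x\|_\infty,\|x\|_1\}$ with $m\ge 3$. Then $a_1=a_2=2$ and $a_3=3$, so $a_1+a_3=5>4=2a_2$, and your weights begin $w=(2,0,1,\dots)$. The resulting functional is not a norm at all: for $m=3$, the vectors $u=(1,1,0)$ and $v=(0,1,1)$ have $\|u\|'=\|v\|'=2$ but $\|u+v\|'=5$. The argument you sketch cannot be repaired, for two reasons:
\begin{itemize}
\item It needs $a_{k-1}+a_{k+1}\le\|\mathbf{1}_{[k-1]}+\mathbf{1}_{[k+1]}\|$, which is the triangle inequality in the wrong direction; in the example, $\|(2,1,1)\|=4<5$.
\item The layer-cake identity you invoke is a property of $\|\cdot\|'$, not of $\|\cdot\|$, so appealing to it is circular.
\end{itemize}

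Your fallback does restore the ordered-norm property. Quasi-concavity ($a_k$ non-decreasing, $a_k/k$ non-increasing) gives $a_k\le\hat a_k\le 2a_k$ for the least concave majorant, and Step 2 goes through with $\hat a$ in place of $a$. But the extra factor $2$ cannot simply be ``absorbed'' in Step 3. Step 3 already spends its factor $2$ on $a_k\le 2a_{2^j}$, so chaining it with $\hat a_k\le 2a_k$ only yields $\|x\|'\le 4(\log m+1)\|x\|$, which is weaker than the statement. Moreover, $\hat a_k\le 2a_{2^j}$ on block $j$ can fail: with $\max\{8\|x\|_\infty,\|x\|_1\}$ and $m$ large, $\hat a_{15}\approx 22>16=2a_8$.

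The clean fix is to choose weights that are ordered by construction:
\[
\|x\|' := 2\sum_{j=0}^{\lfloor \log_2 m\rfloor} \frac{a_{2^j}}{2^j}\sum_{i\le 2^j} x^{\downarrow}_i .
\]
This is a nonnegative combination of Top-$2^j$ norms, hence an ordered norm with non-increasing weights; no concavity is needed.
\begin{itemize}
\item \textbf{Lower bound.} Your Steps 2 and 3, together with $x^{\downarrow}_{2^j}\le 2^{-j}\sum_{i\le 2^j}x^{\downarrow}_i$, give $\|x\|\le\sum_k(x^{\downarrow}_k-x^{\downarrow}_{k+1})a_k\le 2\sum_j a_{2^j}x^{\downarrow}_{2^j}\le\|x\|'$.
\item \textbf{Upper bound.} Each summand satisfies $\frac{a_k}{k}\sum_{i\le k}x^{\downarrow}_i\le\|x\|$. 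Zeroing all but the top $k$ coordinates does not increase the norm, by monotonicity. Averaging the result over permutations of those $k$ coordinates gives $\frac{1}{k}\bigl(\sum_{i\le k}x^{\downarrow}_i\bigr)\mathbf{1}_{[k]}$, again without increasing the norm, by symmetry and convexity.
\end{itemize}
Hence $\|x\|'\le 2(\lfloor\log_2 m\rfloor+1)\|x\|$, the constant your Step 3 was aiming for.

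Alternatively, you can keep $\hat a$. The map $k\mapsto\|\mathbf{1}_{[k]}\|'$ for this Top-$2^j$ norm is concave and majorizes $a$, so it dominates $\hat a$. Since $x^{\downarrow}_k-x^{\downarrow}_{k+1}\ge 0$, your concave-majorant norm is pointwise at most this one and inherits the same bound.
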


Now take any monotone symmetric norm $\|\cdot\|$, and let $\|\cdot\|'$ be an ordered norm prescribed by the previous lemma. Since every ordered norm is a Lov\'asz extension norm, we know by Item 2 of Lemma \ref{lemma:Cgood} that $\|\cdot\|'$ is 1-good. Moreover, given any martingale $(X \up{t})_{t \in [T]}$ over $\Rp^m$ satisfying $\|X\up{T}\| \leq B$ a.s., we have $\|X\up{T}\|' \leq 2(\log m + 1)  \cdot B$, and hence
    \[
    \E\|(M - \epsilon\mathbf{1})^+\| \leq \E\|(M - \epsilon\mathbf{1})^+\|' \leq \log(1/\epsilon) \cdot 2(\log m + 1) \cdot B,
    \]
and so $\|\cdot\|$ is $2(\log m +1)$-good. This proves the last item of Lemma \ref{lemma:Cgood}. 

\subsection{Proof of \Cref{claim:paging-feasibility}}\label{sec:proofPagingFeasibile}
To complete our proof of \Cref{thm:paging}, we must also verify that our choice of $y^t$ is feasible at each time $t$, i.e. that we have $\sum_i y^t_i \leq k$. To do this, we use a clever approach from \cite{BCLLM-STOC18} in Lemma 3.4 involving a carefully chosen superadditive function $\Phi$: We define the function $\phi : [0,1] \to [0,1]$ by $\phi(x) = \max\{0, \frac{2k-1}{k}x - \frac{k-1}{k}\}$ and $\Phi : \Rp \to \Rp$ by
    \[
    \Phi(x) = \floor{x} + \phi(\{x\}),
    \]
    where $\{x\}$ is the fractional part of $x$. Additionally, define the ``uncapped'' \probmatch{} vector $\hat y^t \in \Rp^n$ by 
        \[
        \hat y^t_i = \begin{cases}
            1 & i = p_t,\\
            0 & i \not\in B(t),\\
            1 - M\up{t}_{i,j(i,t)} & i \in B(t) \setminus \{p_t\}.
        \end{cases}
        \]
    Next we make the following crucial observations:
    \begin{enumerate}
        \item The function $\Phi$ is superadditive.
        \item For each $i$, we have $y^t_i = \Phi\left(\frac{2k}{2k-1} \cdot \hat y^t_i\right)$.
        \item $\sum_{i \in [n]\setminus \{p_t\}} \hat y^t_i \leq k-1$.
    \end{enumerate}
    The first observation is standard (for instance see Lemma 3.4 of \cite{BCLLM-STOC18}), and the second follows from a simple computation: for each $i \in B(t) \setminus \{p_t\}$, we have
    \[
    \Phi\left(\frac{2k}{2k-1} \cdot \hat y^t_i\right) = \max\left\{0,~\min\bigg\{2\hat y^t_i - \frac{k-1}{k},~1\bigg\}\right\} = \max\left\{0,~1 - 2\bigg(M\up{t}_{i,j(i,t)} - \frac{1}{2k}\bigg)^+\right\} = y^t_i.
    \]
    For the third observation, note that
    \[
    \sum_{i \in B(t) \setminus p_t} M\up{t}_{i,j(i,t)} \geq \sum_{i \in B(t) \setminus p_t} X\up{t}_{i,j(i,t)} = \E\Big[\sum_{i \in B(t) \setminus p_t} X^*_{i,j(i,t)} ~\Big|~ \cF_t\Big] \geq |B(t)| - k,
    \]
where the last inequality follows from the feasibility of $X^*$ in our complementary linear program. Hence we have
\[
\sum_{i \in [n] \setminus \{p_t\}} \hat y^t_i = \sum_{i \in B(t) \setminus \{p_t\}} \Big(1 - M\up{t}_{i,j(i,t)}\Big) \leq k-1\,.
\]
    With these observations in mind, we verify the feasibility of $y^t$ by computing
\begin{align*}
\sum_{i \in [n]} y^t_i 
&= 1 + \sum_{i \in [n]\setminus\{p_t\}} y^t_i\\
&= 1 + \sum_{i \in [n]\setminus\{p_t\}} \Phi\left(\frac{2k}{2k-1} \cdot \hat y^t_i\right) &&\text{by Observation 2,}\\
&\leq 1 +  \Phi\Big(\sum_{i \in [n]\setminus\{p_t\}}\frac{2k}{2k-1} \cdot \hat y^t_i\Big)&&\text{by Observation 1,}\\
&\leq 1 +  \Phi\Big(k-1 + \frac{k-1}{2k-1}\Big)&&\text{by Observation 3,}\\
&= 1 + (k-1) = k. && \qedhere
\end{align*}

\begin{small}
\bibliographystyle{alpha}
\bibliography{bib}
\end{small}

\end{document}